\documentclass[a4paper]{scrartcl}

\usepackage{doc}

\usepackage{amsthm}
\usepackage{amssymb}
\usepackage{amsmath}
\usepackage{thm-restate}
\usepackage{enumitem}
\usepackage{array}
\usepackage{tikz}
\usetikzlibrary{arrows} 
\usetikzlibrary{positioning}
\usetikzlibrary{fit}
\usetikzlibrary{decorations.pathmorphing}
\usetikzlibrary{patterns,patterns.meta,decorations.pathreplacing}
\tikzset{
    edge/.style={->,> = latex'}
}

\newtheorem{theorem}{Theorem}
\newtheorem{example}[theorem]{Example}

\usepackage{hyperref}
\usepackage[noabbrev,capitalize]{cleveref}
\crefname{enumi}{property}{properties}
\usepackage{todonotes}

\usepackage{ifthen}
\newboolean{proofsInPaper}
\setboolean{proofsInPaper}{true}

\usepackage{algorithm}
\usepackage[noend]{algpseudocode}

\makeatletter
\newenvironment{breakablealgorithm}
  {
   \begin{center}
     \refstepcounter{algorithm}
     \hrule height.8pt depth0pt \kern2pt
     \renewcommand{\caption}[2][\relax]{
       {\raggedright\textbf{\fname@algorithm~\thealgorithm} ##2\par}%
       \ifx\relax##1\relax 
         \addcontentsline{loa}{algorithm}{\protect\numberline{\thealgorithm}##2}%
       \else 
         \addcontentsline{loa}{algorithm}{\protect\numberline{\thealgorithm}##1}%
       \fi
       \kern2pt\hrule\kern2pt
     }
  }{
     \kern2pt\hrule\relax
   \end{center}
  }
\makeatother

\newcommand{\eps}{\varepsilon}
\newcommand{\theory}{$\mathcal{T}$}

\newcommand{\shortrule}[6]{\vspace{1.5ex}\noindent\begin{minipage}{#6ex}{\bfseries #2}\end{minipage} $\qquad$ #3 $\quad\Rightarrow_{\text{#1}}\quad$ #4 \par\smallskip\noindent #5\vspace{1ex}}
\newcommand{\longerrule}[6]{\begin{samepage}\vspace{1.5ex}\noindent\begin{minipage}{#6ex}{\bfseries #2}\end{minipage} $\qquad$ #3 \begin{flushright}\vspace{-1.5ex}$\Rightarrow_{\text{#1}}\quad$ #4\end{flushright}\end{samepage}\par\vspace{-1.5ex}\smallskip\noindent #5\vspace{1ex}}
\newcommand{\ruleapp}[2]{\Rightarrow_{\text{#1}~~~~~~~~~~~~~}^{\text{#2}}\quad}
\newcommand{\shortruleapp}[2]{\Rightarrow_{\text{#1}}^{\text{#2}}}
\DeclareMathOperator{\atom}{atom}
\DeclareMathOperator{\comp}{comp}
\DeclareMathOperator{\cb}{cb}
\DeclareMathOperator{\clauses}{clauses}
\DeclareMathOperator{\resolve}{resolve}
\DeclareMathOperator{\mgu}{mgu}
\DeclareMathOperator{\gnd}{gnd}
\DeclareMathOperator{\reach}{reach}
\DeclareMathOperator{\dom}{dom}
\DeclareMathOperator{\codom}{codom}

\DeclareMathOperator{\numT}{numT}
\DeclareMathOperator{\multiGround}{multiGround}

\newcommand{\ICLF}{ICLF}
\newcommand{\CDCLW}{CDCL\_\hspace{-1.5pt}W}

\newcommand{\inst}[1]{\textsuperscript{#1}}
\newcommand{\orcidID}[1]{}
\newcommand{\authorrunning}[1]{}
\newcommand{\titlerunning}[1]{}
\newcommand{\email}[1]{\texttt{#1}}
\newcommand{\institute}[1]{\date{\normalsize{\def\and{\\[0.3em]}#1}}}

\title{Extending SMT Solving with Non-Ground Clause Learning}
\titlerunning{Extending SMT Solving with Non-Ground Clause Learning}

\author{Yasmine Briefs\inst{1,2}\orcidID{0009-0007-9917-7517}
\and Christoph Weidenbach\inst{1}\orcidID{0000-0001-6002-0458}
}
\authorrunning{Y.~Briefs and C.~Weidenbach}

\institute{Max Planck Institute for Informatics, Saarbrücken, Germany\\
\email{\{ybriefs,weidenbach\}@mpi-inf.mpg.de} \and
Graduate School of Computer Science,\\Saarland Informatics Campus, Saarbrücken, Germany}

\begin{document}

\maketitle

\begin{abstract}
Quantifier instantiation is currently the main approach to non-ground SMT solving: solvers
generate ground instances and solve the resulting ground SMT problems with
CDCL(T)-style reasoning. When a conflict is found, conflict analysis learns only a
ground clause, even though the conflict comes from instances of non-ground
clauses. Yet non-ground reasoning can give exponentially shorter
proofs than purely ground reasoning. We propose a calculus
that consists of ground instantiations, CDCL(T)-style rules, and non-ground
conflict analysis. The solver reasons on ground instances, but the resolution
steps of conflict analysis are performed on their original non-ground
clauses. This produces learned clauses that are typically more general than the
ground conflict. With a suitable strategy, the learned clauses are even
non-redundant. We also show how chronological backtracking can be included in
SMT solving. Our calculus gives a common setting for CDCL(T)-style SMT
solving, a range of instantiation-based procedures, and non-ground clause
learning, and we prove that it simulates CDCL, SCL(FOL), SCL(T), and even Resolution.
\end{abstract}

\section{Introduction}\label{section:introduction}
Satisfiability Modulo Theories (SMT) has become a central tool in automated reasoning, both in its ground form and in the presence of quantifiers.
At the ground level, much of this success is due to CDCL(T)~\cite{DBLP:journals/jacm/NieuwenhuisOT06,DBLP:series/faia/BarrettSST21}, which combines conflict-driven clause learning (CDCL)~\cite{DBLP:series/faia/0001LM21} with dedicated theory reasoning.
To handle non-ground problems, this architecture is commonly extended by quantifier instantiation~\cite{DBLP:journals/jacm/DetlefsNS05,DBLP:conf/cade/GeBT07,DBLP:conf/cav/GeM09,DBLP:conf/cade/Reynolds16,DBLP:series/natosec/Bjorner16}: quantified formulas are instantiated, and the resulting ground problems are solved by CDCL(T).
Treating solver components, such as CDCL(T), as black boxes is a powerful design principle in automated reasoning.
At the same time, many advances in solver efficiency come from integrating these components more tightly with the surrounding search procedure.
This observation also motivates a closer integration of quantifier instantiation with CDCL(T).
In current SMT solving, quantified clauses are instantiated and the resulting ground conflicts are analyzed by CDCL(T), yielding learned ground clauses.
Our goal is to lift this learning step to the non-ground clauses from which the conflicting instances originate.
The motivation is not only conceptual: non-ground learning can yield exponentially shorter refutations than purely ground learning~\cite{DBLP:conf/cade/PerezV08}.

To formalize this idea, we introduce the calculus \ICLF{} (Instantiation-based Clause Learning Framework).
The input to \ICLF{} is a first-order clause set modulo theories.
A state consists of three main components: a set $N$ of possibly non-ground clauses, initially containing the input clauses; a set $G$ of ground instances of clauses in $N$, initially empty; and a ground partial model $\Gamma$, called the \emph{trail}, also initially empty.
The rules of \ICLF{} fall into four groups.
Instantiation rules create and delete ground instances of clauses in $N$.
Ground rules build a partial model for $G$ in the style of CDCL, by decisions and propagations.
Theory rules, such as \theory-Propagate and \theory-Conflict, connect this ground search to a theory solver.
Here \ICLF{} abstracts from the concrete theory, or combination of theories, and from the complications involved in theory combination~\cite{DBLP:journals/toplas/NelsonO79,DBLP:journals/jar/TinelliZ05,DBLP:conf/frocos/RaniseRZ05,DBLP:conf/lpar/JovanovicB10}.
Finally, conflict analysis rules derive learned clauses.
The key point is that, although the clauses in $G$ are ground, they remain linked to their non-ground origins in $N$.
During conflict analysis, resolution is therefore performed on these original clauses, not only on their ground instances.

For example, assume that the trail contains, among others, the literals $h(a)=g(a)$, $h(a)=a$, and $g(a)\neq a$.
These literals are inconsistent in the theory of equality, since they contradict transitivity.
In \ICLF{}, this can be detected by applying \theory-Conflict with the closure $x_2\neq x_3\lor x_2\neq x_4\lor x_3=x_4\cdot\{x_2\mapsto h(a), x_3\mapsto g(a),x_4\mapsto a\}$, a theory lemma expressing transitivity of equality.
We write $C\cdot\sigma$ for a clause $C$ together with a grounding substitution $\sigma$.
The instantiated clause $C\sigma$ is used for the ground search, while the associated non-ground clause $C$ is retained for conflict analysis and resolution.
Now assume that $h(a)=g(a)$ was propagated from the closure $c\neq d\lor h(x_1)=g(x_1)\cdot\{x_1\mapsto a\}$.
An \ICLF{} resolution step unifies the contradictory literals $h(x_1)=g(x_1)$ and $x_2\neq x_3$, for instance using the most general unifier $\{x_2\mapsto h(x_1),x_3\mapsto g(x_1)\}$, and derives the clause $c\neq d\lor h(x_1)\neq x_4\lor g(x_1)=x_4$.
Thus, conflict analysis derives a non-ground clause from a ground conflict.

For first-order clause learning, redundancy is a central issue: a learned clause should ideally not be implied by smaller clauses that are already available.
In superposition-based theorem proving~\cite{BachmairGanzinger94b,DBLP:books/el/RV01/NieuwenhuisR01}, redundancy elimination is essential for controlling the search space, but tests such as forward subsumption can be expensive.
Simple Clause Learning (SCL)~\cite{DBLP:conf/cade/FioriW19,DBLP:journals/corr/abs-2302-05954}, a recent approach that lifts ideas from CDCL to first-order logic by guiding non-ground resolution with a ground partial model, avoids this problem in a different way: its learned clauses are non-redundant by construction.
This makes SCL an important starting point for our work.
At the same time, SCL is designed around direct reasoning with non-ground clauses; all non-ground clauses have to be considered for propagations and conflicts and conflict analysis leaves little freedom.
The goal of \ICLF{} is to keep the useful non-redundancy guarantees of SCL while allowing the ground search to remain close to CDCL(T).
Our basic strategy [\Cref{definition:reasonable}] restricts the rules only enough to ensure that learned clauses are non-redundant with respect to the current set $G$ of ground instances [\Cref{theorem:nonRedundant}].
Thus, implementations can focus propagation and conflict search on $G$ and use efficient techniques for ground reasoning, without constantly inspecting the full non-ground clause set $N$.
We also define a stronger, first-order aware strategy [\Cref{definition:FOLaware}], which additionally takes $N$ into account and guarantees non-redundancy with respect to $N$ [\Cref{theorem:nonRedundantN}].
This strategy is closer in spirit to SCL, but still leaves more freedom in conflict analysis.
Finally, \ICLF{} contains multiple backtracking rules.
They capture CDCL/CDCL(T)-style backtracking, first-order backtracking in the spirit of SCL, and a way to incorporate chronological backtracking~\cite{DBLP:conf/sat/NadelR18,DBLP:conf/sat/MohleB19} into SMT solving.

This flexibility makes \ICLF{} a common setting for several forms of model-based reasoning.
In the propositional case, it simulates CDCL.
For first-order logic, it simulates SCL and Resolution.
Moreover, it simulates clause learning in SCL(T)~\cite{DBLP:conf/vmcai/BrombergerFW21}, an adaptation of SCL to the fragment of first-order logic modulo theories without uninterpreted functions and constants.
It also captures CDCL(T)-style reasoning on ground clauses modulo theories.
Finally, \ICLF{} can be seen as a formal setting for studying a range of instantiation-based SMT procedures: although quantifier instantiation is a standard practical approach, it is usually presented through concrete algorithms and solver architectures rather than as one fixed calculus.
In this sense, \ICLF{} is intended as a foundational framework: it identifies which rule restrictions are needed for soundness, non-redundant learning, and termination under suitable restrictions, and which choices can instead be left to strategies and implementations.

The paper is organized as follows.
In \Cref{section:theCalculus}, we present the calculus \ICLF{} and establish, among other properties, soundness~[\Cref{theorem:soundness}], termination under suitable restrictions~[\Cref{theorem:termination}], and completeness under suitable restrictions~[\Cref{corollary:completeness}].
In \Cref{section:simulation}, we show that \ICLF{} simulates the other calculi mentioned above.
This is an extended version of a paper accepted at LPAR 2026~\cite{briefsLPAR2026}.
It contains further examples, lemmas, and explanations in the appendix, as well as all proofs that were omitted from the LPAR paper.

\section{Preliminaries}\label{section:preliminaries}
We consider the following standard notions of many-sorted first-order logic without equality \cite{DBLP:conf/vmcai/BrombergerFW21}.
Let $\Sigma=(\mathcal{S},\Omega,\Pi)$ be a many-sorted signature where $\mathcal{S}$ is a finite, non-empty set of \emph{sort symbols}, $\Omega$ is a non-empty set of \emph{function symbols} over $\mathcal{S}$ and $\Pi$ is a finite, non-empty set of \emph{predicate symbols} over $\mathcal{S}$.
We assume that $\Omega$ contains at least one constant of each sort.
We additionally assume a set of variables $\mathcal{X}$ that contains infinitely many variables of each sort.
First-order terms, atoms, literals and substitutions are defined in the usual way.
We assume that substitutions are well-sorted and that they have finite \emph{domain} $\dom(\sigma)=\{x\mid x\sigma\neq x\}$.
We further define the \emph{codomain} of a substitution as $\codom(\sigma)=\{x\sigma\mid x\in\dom(\sigma)\}$.
A \emph{clause} is a set of literals, which means that duplicate literals are implicitly deleted.
If we write a clause as $L_1\lor\dots\lor L_n$, where the $L_i$ are literals, we mean $\{L_1\}\cup\dots\cup\{L_n\}$.
Likewise, $C\lor L$, where $C$ is a clause and $L$ is a literal, denotes $C\cup\{L\}$, and $C\lor D$, where $C$ and $D$ are clauses, denotes $C\cup D$.
$\bot$ denotes the \emph{empty clause}.
We say that a term is \emph{ground} if it does not contain any variables.
We extend this definition to sets of terms, atoms, literals, clauses and clause sets in the obvious way.
A substitution $\sigma$ is called \emph{grounding} for a term $t$ if $t\sigma$ is a ground term.
We also extend this definition to atoms, literals and clauses in the obvious way.
By $\comp(L)$, we denote the \emph{complement} of a literal $L$.
We further define a function $\atom$ that maps literals to their corresponding atoms.
We extend this function to map clauses to the set of atoms of the literals occurring in the clause and clause sets to the set of atoms of the clauses in the clause set.
A \emph{closure} is a pair $C\cdot\sigma$ where $C$ is a clause or $\top$ and $\sigma$ is a grounding substitution for $C$.
We assume that variables that do not occur in $C$ are implicitly dropped from $\sigma$, i.e., that $\dom(\sigma)$ only contains variables that occur in $C$.
Moreover, we assume that $\codom(\sigma)$ is ground.
If $C\in\{\top,\bot\}$, we may write $\top$ to refer to the closure $\top\cdot\{\}$ and $\bot$ to refer to the closure $\bot\cdot\{\}$.
Given a clause $C$, a closure $C\cdot\sigma$ is called an \emph{instance} of $C$.
For a clause $C$, $\gnd(C)$ denotes the set $\gnd(C)=\{C\sigma\mid\sigma\text{ is grounding for $C$}\}$.
For a clause set $N$, $\gnd(N)=\bigcup_{C\in N}\gnd(C)$.
The function $\mgu$ denotes the \emph{most general unifier} of two terms, atoms or literals.
We assume that the $\mgu$ does not introduce any fresh variables and is idempotent.
Given a ground literal $L$ and a closure $C\cdot\sigma$, let $C=C'\lor L_1\lor\dots\lor L_n$ such that $L_i\sigma=L$ for $i\in\{1,\dots,n\}$ and $L$ does not occur in $C'\sigma$.
Further, let $\mu=\mgu(L_1,\dots,L_n)$.
Then, the result of \emph{exhaustively factorizing} $L$ in $C\cdot\sigma$ is $(C'\lor L_1)\mu\cdot\sigma$.

The semantics of many-sorted first-order logic without equality are given by the notion of an algebra.
A \emph{$\Sigma$-algebra} $\mathcal{A}$ is a mapping that assigns (i) a non-empty carrier set $S^\mathcal{A}$ to every sort $S\in\mathcal{S}$, so that $S_1^\mathcal{A}\cap S_2^\mathcal{A}=\emptyset$ for any distinct sorts $S_1,S_2\in\mathcal{S}$, (ii) a total function $f^\mathcal{A}:S_1^\mathcal{A}\times\dots\times S_n^\mathcal{A}\to S^\mathcal{A}$ to every function symbol $f:S_1\times\dots\times S_n\to S\in\Omega$, (iii) a relation $P^\mathcal{A}\subseteq S_1^\mathcal{A}\times\dots\times S_n^\mathcal{A}$ to every predicate symbol $P\in\Pi$ of arity $n$.
We assume \theory{} to be a \emph{theory}, i.e., a non-empty set of $\Sigma$-algebras.
\theory{} can also be the combination of multiple theories.
For instance, as we consider first-order logic without equality, if needed, equality reasoning has to be part of the theory.
The semantic entailment relation $\models$ is defined in the usual way.
We write $\models_\mathcal{T}$ to denote the semantic entailment relation that considers only the $\Sigma$-algebras $\mathcal{A}\in\mathcal{T}$.

A \emph{trail} is a sequence of annotated, ground literals.
Given a trail $\Gamma$, a ground literal $L$ is called \emph{propositionally true/false} in $\Gamma$ if $L$/$\comp(L)$ occurs in $\Gamma$.
Otherwise, $L$ is called \emph{propositionally undefined} in $\Gamma$.
The trail in \ICLF{} is always \emph{consistent}, i.e., no ground literal is propositionally true and propositionally false at the same time.
A ground clause $C$ is called \emph{propositionally true} in $\Gamma$ if there is a literal $L\in C$ that is propositionally true in $\Gamma$.
It is called \emph{propositionally false} in $\Gamma$ if all literals $L\in C$ are propositionally false in $\Gamma$.
Otherwise, it is called \emph{propositionally undefined} in $\Gamma$.
Trails are interpreted as the conjunction of their literals, clauses are interpreted as the disjunction of their literals and clause sets are interpreted as the conjunction of their clauses.
A clause set $N$ is called \emph{satisfiable} if there exists a $\Sigma$-algebra $\mathcal{A}$ such that $\mathcal{A}\models N$.
It is called \emph{\theory-satisfiable} if there exists a $\Sigma$-algebra $\mathcal{A}\in\mathcal{T}$ such that $\mathcal{A}\models N$.
Similarly, the \theory-satisfiability of a trail $\Gamma$ is defined.
Given a ground set of clauses $N$, a trail $\Gamma$ is called a \emph{model} of $N$ if all clauses in $N$ are propositionally true in $\Gamma$.
Two clause sets $N_1$ and $N_2$ are called \emph{equi-satisfiable} if $N_1$ is satisfiable if and only if $N_2$ is satisfiable.
They are called \emph{equi-satisfiable in \theory{}} if $N_1$ is \theory-satisfiable if and only if $N_2$ is \theory-satisfiable.

\begin{restatable}[Clause Redundancy]{definition}{definitionClauseRedundancy}\label{definition:clauseRedundancy}
	Let $\prec$ be a well-founded, total, strict ordering on ground literals.
	We lift this ordering to clauses by its multiset extension.
	We further define $\preceq$ to be the reflexive closure of $\prec$ and $N^{\preceq C}=\{D\in N\mid D\preceq C\}$ for a ground clause $C$ and a ground clause set $N$.

	A ground clause $C$ is called \emph{redundant} with respect to a ground clause set $N$ and $\prec$ if $N^{\preceq C}\models C$.
	A clause $C$ is called \emph{redundant} with respect to a clause set $N$ and $\prec$ if for all $C'\in\gnd(C)$, it holds that $C'$ is redundant with respect to $\gnd(N)$.
\end{restatable}

\section{The Calculus}\label{section:theCalculus}
We define \ICLF{} as an abstract rewrite system.
Its inference rules operate on a state that is a six-tuple $(\Theta;\Gamma;N;G;k;D\cdot\tau)$ where $\Theta$ is a set of ground atoms, $\Gamma$ is the trail, $N$ is a clause set, $G$ is a set of closures, $k$ is a natural number called the \emph{decision level}, and $D\cdot\tau$ is the \emph{conflict closure}.
We define $\clauses(G)=\{C\sigma\mid C\cdot\sigma\in G\}$.
The purpose of $\Theta$ is to restrict the set of ground atoms that are currently available in the search for a model of $\clauses(G)$.
$\Theta$ always contains at least all atoms that occur on the trail or in $\clauses(G)$ [\Cref{lemma:invariants}].
The literals in $\Gamma$ are either annotated with a natural number, in which case they are called \emph{decisions}, or with a closure, in which case they are called \emph{propagations}, or with \theory, in which case they are called \emph{theory propagations}.
The decisions in $\Gamma$ are annotated with $1,\dots,k$ in this order [\Cref{lemma:invariants}].
The \emph{decision level} of a decision is its annotation, and the decision level of any other literal in $\Gamma$ is the annotation of the decision closest to its left.
If there is no such decision, the decision level of the literal is $0$.
The decision level of $\Gamma$ is the maximal decision level of a literal in it, or $0$ if $\Gamma=\eps$.
For a simpler presentation, $N$ contains both the initial and the learned clauses and $G$ contains both the instantiated and the learned closures.
If $D\cdot\tau=\top$, the algorithm is in the instantiation and model building phase.
If $D\cdot\tau=\bot$, \theory-unsatisfiability has been derived.
Otherwise, the algorithm is in the conflict analysis phase.
The start state for a set of clauses $N'$ is $(\emptyset;\eps;N';\emptyset;0;\top)$.
We assume that the initial set of clauses $N'$ does not contain the empty clause.
Moreover, we require that the \theory-satisfiability of conjunctions of ground literals is decidable.
Instead of using an abstraction function, we write the ground theory literals on the trail directly.
An implementation can still use an abstraction function for the ground reasoning.

We first present the four instantiation rules that are used to manage the instances.

\longerrule{\ICLF{}}{Instantiate}{$(\Theta;\Gamma;N;G;k;\top)$}{$(\Theta\cup\atom(C\sigma);\Gamma;N;G\cup\{C\cdot\sigma\};k;\top)$}{given that $C\in N$, $\sigma$ is grounding for $C$, $C\sigma\notin\clauses(G)$ and either $C\sigma$ is not propositionally false in $\Gamma$ or $\Gamma=\Gamma_1K$ with $K\in\{L^{D\cdot\tau},L^\mathcal{T}\}$ and $C\sigma$ is not propositionally false for $\Gamma_1$.}{20}

Typically, a run starts by applying Instantiate with the empty substitution to all clauses in $N$ that are already ground.
To avoid learning redundant clauses, it is crucial that no instance in $\clauses(G)$ is false for a proper trail prefix or becomes false by a decision [\Cref{definition:reasonable}].
The condition on $C\sigma$ being propositionally false ensures that newly added instances preserve this property.
Allowing propositionally false instances to be instantiated at all is necessary for following a first-order aware strategy [\Cref{definition:FOLaware}].
To add an arbitrary propositionally false instance, Restart can first be used to return to a trail prefix at which Instantiate is applicable.

\shortrule{\ICLF{}}{ClauseDel}{$(\Theta;\Gamma;N\uplus\{C\};G;k;\top)$}{$(\Theta;\Gamma;N;G;k;\top)$}{given that $N\uplus\{C\}$ and $N$ are equi-satisfiable in \theory{} and there is no grounding $\sigma$ with $C\cdot\sigma\in G$.}{20}

Even though it is undecidable in general whether the rule ClauseDel is applicable, there are some sufficient conditions for its applicability, e.g., that $C$ is subsumed by another clause in $N$.

\shortrule{\ICLF{}}{InstanceDel}{$(\Theta;\Gamma;N;G\uplus\{C\cdot\sigma\};k;\top)$}{$(\Theta;\Gamma;N;G;k;\top)$}{given that there is no $L^{C\cdot\sigma}$ in $\Gamma$.}{20}

Although it can make sense to only apply InstanceDel if $G\uplus\{C\cdot\sigma\}$ and $G$ are equi-satisfiable in \theory{}, we would like to allow restarts with fresh instances.

\shortrule{\ICLF{}}{AtomDel}{$(\Theta\uplus\{A\};\Gamma;N;G;k;\top)$}{$(\Theta;\Gamma;N;G;k;\top)$}{given that $A\notin\atom(\Gamma)\cup\atom(\clauses(G))$.}{20}

The following four rules form the ground rules.

\shortrule{\ICLF{}}{Decide}{$(\Theta;\Gamma;N;G;k;\top)$}{$(\Theta;\Gamma L^{k+1};N;G;k+1;\top)$}{given that $\atom(L)\in\Theta$ and $L$ is propositionally undefined in $\Gamma$.}{20}

\shortrule{\ICLF{}}{Propagate}{$(\Theta;\Gamma;N;G;k;\top)$}{$(\Theta;\Gamma L^{C\cdot\sigma};N;G;k;\top)$}{given that $C\cdot\sigma\in G$, $C\sigma=C'\lor L$, $C'$ is propositionally false in $\Gamma$ and $L$ is propositionally undefined in $\Gamma$.}{20}

\shortrule{\ICLF{}}{Conflict}{$(\Theta;\Gamma;N;G;k;\top)$}{$(\Theta;\Gamma;N;G;k;D\cdot\tau)$}{given that $D\cdot\tau\in G$ and $D\tau$ is propositionally false in $\Gamma$.}{20}

\shortrule{\ICLF{}}{Restart}{$(\Theta;\Gamma;N;G;k;\top)$}{$(\Theta;\Gamma_1;N;G;j;\top)$}{given that $\Gamma=\Gamma_1\Gamma_2$ and $j$ is the decision level of $\Gamma_1$.}{20}

Applying the rule Restart with $\Gamma_1=\eps$ corresponds to a restart in CDCL(T), i.e., a restart on the ground level.
We allow restarting with arbitrary trail prefixes, enabling the subsequent application of Instantiate with clauses that would have been propositionally false in $\Gamma$.

Next, we present the five theory rules.

\shortrule{\ICLF{}}{\theory-Propagate}{$(\Theta;\Gamma;N;G;k;\top)$}{$(\Theta;\Gamma L^\mathcal{T};N;G;k;\top)$}{given that $\atom(L)\in\Theta$, $\Gamma\models_\mathcal{T}L$ and $L$ is propositionally undefined in $\Gamma$.}{20}

Theory propagations are annotated with \theory{} and can be explained later when they become relevant during conflict analysis.
The condition $\Gamma\models_\mathcal{T}L$ makes sure that there exists a tautology in the theory from which $L$ can be propagated.

\shortrule{\ICLF{}}{\theory-Learn}{$(\Theta;\Gamma;N;G;k;\top)$}{$(\Theta;\Gamma;N\cup\{C\};G;k;\top)$}{given that $C\notin N$ and $\models_\mathcal{T}C$.}{20}

Note that \theory-Learn adds a non-ground theory lemma in general.
This can then be instantiated by the rule Instantiate.
If the added theory lemma is already ground, it can be instantiated with the empty substitution $\{\}$.

\shortrule{\ICLF{}}{\theory-Atom}{$(\Theta;\Gamma;N;G;k;\top)$}{$(\Theta\cup\{A\};\Gamma;N;G;k;\top)$}{given that $A\notin\Theta$ and $A$ is a ground atom.}{20}

\shortrule{\ICLF{}}{\theory-Conflict}{$(\Theta;\Gamma;N;G;k;\top)$}{$(\Theta;\Gamma;N;G;k;D\cdot\tau)$}{given that $\models_\mathcal{T}D$, $\tau$ is grounding for $D$ and $D\tau$ is propositionally false in $\Gamma$.}{20}

Just like in \theory-Learn, the conflict clause produced by \theory-Conflict can be non-ground.

\longerrule{\ICLF{}}{Explain}{$(\Theta;\Gamma_1 L^\mathcal{T}\Gamma_2;N;G;k;D\cdot\tau)$}{$(\Theta;\Gamma_1 L^{C\cdot\sigma}\Gamma_2;N;G;k;D\cdot\tau)$}{given that $\models_\mathcal{T}C$, $\sigma$ is grounding for $C$, $C\sigma=C'\lor L$ and $C'$ is propositionally false in $\Gamma_1$.}{20}

$\top$ is a short form for the conflict closure $\top\cdot\{\}$, i.e., the rule Explain can be applied both in the model building and in the conflict analysis phase.
Just like in \theory-Learn and \theory-Conflict, the reason can be non-ground.
For each theory propagation on the trail, there exists a clause to apply Explain with, for example $\neg\Gamma_1\lor L\cdot\{\}$, which is always valid in the theory by \Cref{lemma:invariants}.

Finally, the following five rules are the conflict analysis rules.

\longerrule{\ICLF{}}{Resolve}{$(\Theta;\Gamma_1 L^{C\cdot\sigma}\Gamma_2;N;G;k;(D\lor L')\cdot\tau)$}{$(\Theta;\Gamma_1 L^{C\cdot\sigma}\Gamma_2;N;G;k;\resolve(C\cdot\sigma, D\cdot\tau, L', L))$}{given that $L'\tau=\comp(L)$.}{20}

The function $\resolve$ takes as arguments two variable disjoint closures $C\cdot\sigma$ and $D\cdot\tau$, a literal $L'$ that does not occur in $D$, may share variables with $D$, but not with $C$, and a ground literal $L$ such that $L'\tau=\comp(L)$.
Let $C=C'\lor L_1\lor\dots\lor L_n$ such that $L_i\sigma=L$ for all $i\in\{1,\dots,n\}$ and $L$ does not occur in $C'\sigma$.
If $\mu=\mgu(\comp(L'),L_1,\dots,L_n)$, then $\resolve(C\cdot\sigma,D\cdot\tau,L',L)=(C'\lor D)\mu\cdot\sigma\tau$.
In other words, the function $\resolve$ exhaustively factorizes $L$ in $C$ and computes a resolution inference between the two clauses.

The rule Resolve allows resolving any literal that is on the trail.
This generality makes standard simplifications such as unit reduction possible.
Additionally, it allows modeling conflict analysis procedures such as in standard CDCL with 1UIP learning~\cite{DBLP:series/faia/0001LM21} or in SCL~\cite{DBLP:journals/corr/abs-2302-05954} that work through the trail backwards, performing either Skip or Resolve for each encountered literal, as well as procedures like chronological backtracking \cite{DBLP:conf/sat/NadelR18,DBLP:conf/sat/MohleB19} that only resolve with a subsequence of the relevant literals on the trail.

\longerrule{\ICLF{}}{Factorize}{$(\Theta;\Gamma;N;G;k;(D\lor L\lor L')\cdot\tau)$}{$(\Theta;\Gamma;N;G;k;(D\lor L)\eta\cdot\tau)$}{given that $L\tau=L'\tau$ and $\eta=\mgu(L, L')$.}{20}

The ground conflict clause does not change by applying Factorize.
If the literal that Resolve is applied to can be factorized, then it does not disappear from the ground conflict clause after resolving.
While the resulting ground conflict clause is the same regardless of whether a literal is exhaustively factorized before it is resolved or whether Resolve is applied multiple times, this is, in general, not true for the non-ground conflict clause \cite[Example 3]{DBLP:journals/corr/abs-2302-05954}.

\longerrule{\ICLF{}}{BacktrackClassic}{$(\Theta;\Gamma_1K^j\Gamma_2;N;G;k;D\cdot\tau)$}{$(\Theta;\Gamma_1L^{D\cdot\tau};N\cup\{D\};G\cup\{D\cdot\tau\};j-1;\top)$}{given that $D\tau=D'\lor L$, $D'$ is propositionally false in $\Gamma_1$ and $L$ is propositionally undefined in $\Gamma_1$.}{20}

\longerrule{\ICLF{}}{BacktrackFOL}{$(\Theta;\Gamma_1\Gamma_2;N;G;k;D\cdot\tau)$}{$(\Theta;\Gamma_1;N\cup\{D\};G\cup\{D\cdot\tau\};j;\top)$}{given that there is no grounding $\sigma$ such that $D\sigma$ is propositionally false in $\Gamma_1$, $j$ is the decision level of $\Gamma_1$ and $\Gamma_2$ contains a decision literal.}{20}

\longerrule{\ICLF{}}{BacktrackCB}{$(\Theta;\Gamma;N;G;k;D\cdot\tau)$}{$(\Theta;\Gamma'L^{D\cdot\tau};N\cup\{D\};G\cup\{D\cdot\tau\};j;\top)$}{given that $\Gamma'=\cb(\Gamma, D\tau, j)$, $D\tau=D'\lor L$, $D'$ is propositionally false in $\Gamma'$ and $L$ is propositionally undefined in $\Gamma'$.}{20}

Here, CB stands for chronological backtracking~\cite{DBLP:conf/sat/NadelR18,DBLP:conf/sat/MohleB19}, an alternative backtracking scheme for CDCL that we adapt to \ICLF{} in this paper.
We now define the function $\cb$.
To this end, given a trail $\Gamma$, we first define the \emph{conflict graph} as the directed acyclic graph whose nodes are the literals on the trail $\Gamma$.
Decision literals $L^k$ and theory propagations $L^\mathcal{T}$ do not have any predecessors, and the predecessors of a propagation $L^{C'\cdot\sigma}$ are the literals $\comp(L')$ for all $L'\in C'\sigma$ with $L'\neq L$.
If $L$ is a propagation (or an explained theory propagation), all of these predecessors are indeed literals on the trail.
For each literal $L$ on the trail $\Gamma$, we define $\reach(L)$ as the set of literals that can be reached from $L$ in the conflict graph, including $L$ itself.
By $\reach^{-1}(L)$, we denote the set of literals that can reach $L$ in the conflict graph.

The arguments of the function $\cb$ are a trail $\Gamma$, a ground clause $C$ and a destination decision level $j$ that is smaller than the decision level of $\Gamma$.
Let $\Gamma=\Gamma_1L'^{j+1}\Gamma_2$.
$\cb$ is only applicable if the following three conditions are met.
First, $C$ should be propositionally false in $\Gamma$, i.e., for all $L\in C$, $\comp(L)$ should be in $\Gamma$.
Second, $|\{\comp(L)\mid L\in C\}\cap\reach(L')|=1$, i.e., exactly one literal in $C$ should depend on the $j+1$-st decision.
Third, in $\bigcup_{L\in C}\reach^{-1}(\comp(L))\cap\Gamma_2$, there should be no literal annotated as a decision or theory propagation, i.e., no conflict literal should depend on a decision or theory propagation after $L'$ on the trail.
Then, let $C=C'\lor L''$ such that $\{\comp(L)\mid L\in C'\}\cap\reach(L')=\emptyset$, i.e., $L''$ is the only literal in $C$ that depends on $L'$.
Further, let $\Gamma_3$ be the subsequence of $\Gamma_2$ that contains only the literals $\bigcup_{L\in C'}\reach^{-1}(\comp(L))\cap\Gamma_2$.
It is crucial to note that $\Gamma_3$ only contains propagations (and explained theory propagations), and that none of the literals in $\Gamma_3$ was propagated from $L'$.
Then $\cb(\Gamma,C,j)=\Gamma_1\Gamma_3$.
By the aforementioned properties of $\Gamma_3$, all propagations in $\Gamma_3$ are still justified in $\Gamma_1\Gamma_3$ and $C'$ stays propositionally false in $\Gamma_1\Gamma_3$.
\Cref{figure:illustrationCB} illustrates this definition of the function $\cb$.

\begin{figure}[t]
	\begin{center}
		\begin{tikzpicture}[scale=0.5]
			\node[draw,minimum height=.65cm,outer sep=2pt] (l) at (0,0) {$L'^{j+1}$};
			\node[draw,minimum height=.65cm,outer sep=2pt,fill=black!15!white] (l1) at (2,2) {$L_1$};
			\node[draw,minimum height=.65cm,outer sep=2pt] (cl) at (4.5,-1.2) {$\comp(L'')$};
			\node[draw,minimum height=.65cm,outer sep=2pt] (l2) at (8,0) {$L_2^\mathcal{T}$};
			\node[draw,minimum height=.65cm,outer sep=2pt] (l3) at (10,-2) {$L_3$};
			\node[draw,minimum height=.65cm,outer sep=2pt,fill=black!35!white] (l4) at (12,0) {$L_4$};
			\node[draw,minimum height=.65cm,outer sep=2pt] (l5) at (14.5,0) {$L_5^{j+2}$};
			\node[draw,minimum height=.65cm,outer sep=2pt,fill=black!35!white] (l6) at (17,2) {$L_6$};
			\node (d1) at (-2.5,0) {$\dots$};
			\node (d2) at (19,0) {$\dots$};
			\node (tmp1) at (-2.5,.25) {};
			\node (tmp2) at (-2.5,-.25) {};
			\node (tmp3) at (-3.,-.25) {};
			\node (tmp4) at (-3,.25) {};
			\draw [->] (l1) edge[bend left=20] (l4);
			\draw [->] (l) edge[bend right=20] (cl);
			\draw [dash pattern=on 2pt off 2pt,->] (tmp1) edge[bend left=20] (l1);
			\draw [dash pattern=on 2pt off 2pt,->] (tmp3) edge[out=310,in=190] (l3);
			\draw [dash pattern=on 2pt off 2pt,->] (tmp2) edge[out=320,in=200] (cl);
			\draw [dash pattern=on 2pt off 2pt,->] (tmp4) edge[out=50,in=170] (l6);

			\draw [decorate,decoration={brace,mirror}] (-3.5,-3.5) -- (-1.1,-3.5) node[pos=0.5,anchor=north,yshift=-2pt] {$\Gamma_1$};
			\draw [decorate,decoration={brace,mirror}] (1.1,-3.5) -- (19.7,-3.5) node[pos=0.5,anchor=north,yshift=-2pt] {$\Gamma_2$};
		\end{tikzpicture}
	\end{center}
	\caption{An example to illustrate the definition of the function $\cb$. $C'$ contains the complements of the literals $L_4$ and $L_6$. Then $\Gamma_3=L_1L_4L_6$.} \label{figure:illustrationCB}
\end{figure}

\begin{restatable}{definition}{definitionReasonable}\label{definition:reasonable}
	A strategy for \ICLF{} is called \emph{reasonable} if the following are true:
	\begin{enumerate}
		\item The rule Conflict is preferred over the rules Decide, Propagate, \theory-Propagate and \theory-Conflict.
		\item No application of the rule Decide enables an immediate application of the rule Conflict.
		\item If a conflict state is reached by the rule Conflict, then Resolve and Factorize are applied in such a way that the complement of the rightmost literal on the trail does not occur in the ground conflict clause anymore before applying any of the rules BacktrackClassic, BacktrackFOL or BacktrackCB.
	\end{enumerate}
\end{restatable}

These requirements do not cause any stuck states by \Cref{theorem:stuckStates}.
Although this does not affect correctness or termination, to avoid that previously resolved literals are reintroduced into the conflict clause, Resolve should be applied from right to left on the trail.

The goal of restricting the strategy is to learn only clauses that are non-redundant with respect to the current set of instantiations $G$ and a trail induced ordering [\Cref{definition:trailOrdering}].
Most SAT solvers implement CDCL with 1UIP learning \cite{DBLP:series/faia/0001LM21}.
In CDCL, the 1UIP clause is non-redundant because exhaustive propagation maintains the necessary invariants~\cite{DBLP:conf/birthday/Weidenbach15}.
In \ICLF{}, however, these invariants can be broken by theory rules and instantiations.
For example, a newly instantiated clause may propagate with respect to an earlier trail prefix, making it possible for the 1UIP clause to be redundant.
Moreover, for theory propagations and for literals propagated from them, the exact implying literals are not known before a theory explanation is computed, which may be expensive.
For these reasons, 1UIP learning is not straightforward in our setting.
Therefore, the restrictions of a reasonable strategy capture only what is needed for non-redundant learning and otherwise leave the conflict analysis rules general.

The following example shows \ICLF{} in action modulo the theory of equality for the sake of simple theory reasoning.
Of course, \ICLF{} works modulo any suitable theory combination in the same way as CDCL(T).
This run closely resembles quantifier instantiation combined with CDCL(T) and shows how \ICLF{} can model this standard approach.

\begingroup
\allowdisplaybreaks
\begin{example}\label{example:big}
	We present a reasonable \ICLF{} run on the clause set
	\begin{align*}
		N_0=\{&(N1)~f(x_0)=g(x_0)\lor f(x_1)=h(x_1),&&(N2)~g(b)=c,\\
		&(N3)~f(a)\neq g(a)\lor c=b,&&(N4)~f(b)\neq h(b)\lor f(c)\neq h(c),\\
		&(N5)~f(a)\neq g(a)\lor g(x_2)\neq x_2\}.
	\end{align*}
	\begin{align*}
		&(\emptyset;\eps;N_0;\emptyset;0;\top)\\
		\ruleapp{\ICLF{}}{Instantiate $\times5$}&(\Theta_0;\eps;N_0;G_0;0;\top)\\
		\intertext{Here, $\Theta_0=\{f(a)=g(a),f(b)=h(b),g(b)=c,c=b,f(c)=h(c),g(b)=b\}$ and
		\begin{align*}
			G_0=\{&(G1)~f(x_0)=g(x_0)\lor f(x_1)=h(x_1)\cdot\{x_0\mapsto a,x_1\mapsto b\},\\
			&(G2)~g(b)=c\cdot\{\},\\
			&(G3)~f(a)\neq g(a)\lor c=b\cdot\{\},\\
			&(G4)~f(b)\neq h(b)\lor f(c)\neq h(c)\cdot\{\},\\
			&(G5)~f(a)\neq g(a)\lor g(x_2)\neq x_2\cdot\{x_2\mapsto b\}\}.
		\end{align*}}\\[-7.2ex]
		\ruleapp{\ICLF{}}{Propagate}&(\Theta_0;\Gamma_1:=(g(b)=c)^{(G2)};N_0;G_0;0;\top)\\
		\ruleapp{\ICLF{}}{Decide}&(\Theta_0;\Gamma_2:=\Gamma_1(f(a)=g(a))^1;N_0;G_0;1;\top)\\
		\ruleapp{\ICLF{}}{Propagate}&(\Theta_0;\Gamma_3:=\Gamma_2(c=b)^{(G3)};N_0;G_0;1;\top)\\
		\ruleapp{\ICLF{}}{Propagate}&(\Theta_0;\Gamma_4:=\Gamma_3(g(b)\neq b)^{(G5)};N_0;G_0;1;\top)\\
		\ruleapp{\ICLF{}}{\theory-Conflict}&(\Theta_0;\Gamma_4;N_0;G_0;1;g(b)\neq c\lor c\neq b\lor g(b)=b\cdot\{\})\\
		\ruleapp{\ICLF{}}{Resolve$(G5)$}&(\Theta_0;\Gamma_4;N_0;G_0;1;f(a)\neq g(a)\lor g(b)\neq c\lor c\neq b\cdot\{\})\\
		\ruleapp{\ICLF{}}{Resolve$(G3)$}&(\Theta_0;\Gamma_4;N_0;G_0;1;f(a)\neq g(a)\lor g(b)\neq c\cdot\{\})\\
		\ruleapp{\ICLF{}}{BacktrackClassic}&(\Theta_0;\Gamma_5:=(g(b)=c)^{(G2)}(f(a)\neq g(a))^{(G6)};N_1;G_1;0;\top)\\
		\intertext{Here, $N_1=N_0\cup\{(N6)~f(a)\neq g(a)\lor g(b)\neq c\}$ and $G_1=G_0\cup\{(G6)~f(a)\neq g(a)\lor g(b)\neq c\cdot\{\}\}$.}
		\ruleapp{\ICLF{}}{Propagate}&(\Theta_0;\Gamma_6:=\Gamma_5(f(b)=h(b))^{(G1)};N_1;G_1;0;\top)\\
		\ruleapp{\ICLF{}}{Propagate}&(\Theta_0;\Gamma_7:=\Gamma_6(f(c)\neq h(c))^{(G4)};N_1;G_1;0;\top)
	\end{align*}
	$\Gamma_7$ is a \theory-satisfiable model of $G_1$.
	A possible next step is to apply Instantiate with the clause $(N1)\cdot\{x_0\mapsto a,x_1\mapsto c\}$.
	This immediately enables the application of Conflict, and after several applications of Resolve, $\bot$ will be derived, showing the unsatisfiability of the clause set $N_0$.
\end{example}
\endgroup

The following example shows how \ICLF{} can learn non-ground clauses.
Again, we work modulo the theory of equality to keep the theory reasoning simple.

\begingroup
\allowdisplaybreaks
\begin{example}\label{example:nonground}
	We present a reasonable \ICLF{} run on the clause set
	\begin{align*}
		N_0=\{(N1)~c=d,~~~~~(N2)~c\neq d\lor h(x_0)=x_0,~~~~~(N3)~c\neq d\lor h(x_1)=g(x_1)\}.
	\end{align*}
	\begin{align*}
		&(\emptyset;\eps;N_0;\emptyset;0;\top)\\
		\ruleapp{\ICLF{}}{Instantiate $\times3$}&(\Theta_0;\eps;N_0;G_0;0;\top)\\
		\intertext{Here, $\Theta_0=\{c=d,~h(a)=a,~h(a)=g(a)\}$ and
			$G_0=\{(G1)~c=d,~(G2)~c\neq d\lor h(x_0)=x_0$ $\cdot\,\{x_0\mapsto a\},~(G3)~c\neq d\lor h(x_1)=g(x_1)\cdot\{x_1\mapsto a\}\}.$
		}
		\ruleapp{\ICLF{}}{Propagate}&(\Theta_0;\Gamma_1:=(c=d)^{(G1)};N_0;G_0;0;\top)\\
		\ruleapp{\ICLF{}}{Propagate}&(\Theta_0;\Gamma_2:=\Gamma_1(h(a)=a)^{(G2)};N_0;G_0;0;\top)\\
		\ruleapp{\ICLF{}}{Propagate}&(\Theta_0;\Gamma_3:=\Gamma_2(h(a)=g(a))^{(G3)};N_0;G_0;0;\top)\\
		\ruleapp{\ICLF{}}{\theory-Atom}&(\Theta_1:=\Theta_0\cup\{g(a)=a\};\Gamma_3;N_0;G_0;0;\top)\\
		\ruleapp{\ICLF{}}{Decide}&(\Theta_1;\Gamma_4:=\Gamma_3(g(a)\neq a)^1;N_0;G_0;1;\top)\\
		\ruleapp{\ICLF{}}{\theory-Conflict}&(\Theta_1;\Gamma_4;N_0;G_0;1;x_2\neq x_3\lor x_2\neq x_4 \lor x_3=x_4\\[-1.6ex]\intertext{\hfill $\cdot\,\{x_2\mapsto h(a), x_3\mapsto g(a), x_4\mapsto a\})$}\\[-5.1ex]
		\ruleapp{\ICLF{}}{Resolve$(G3)$}&(\Theta_1;\Gamma_4;N_0;G_0;1;c\neq d\lor h(x_1)\neq x_4\lor g(x_1)=x_4\\[-1.6ex]\intertext{\hfill $\cdot\,\{x_1\mapsto a, x_4\mapsto a\})$}\\[-5.1ex]
		\ruleapp{\ICLF{}}{Resolve$(G2)$}&(\Theta_1;\Gamma_4;N_0;G_0;1;c\neq d\lor g(x_4)=x_4\cdot\{x_4\mapsto a\})\\
		\ruleapp{\ICLF{}}{Resolve$(G1)$}&(\Theta_1;\Gamma_4;N_0;G_0;1;g(x_4)=x_4\cdot\{x_4\mapsto a\})\\
		\ruleapp{\ICLF{}}{BacktrackClassic}&(\Theta_1;\Gamma_3(g(a)=a)^{(G4)};N_1;G_1;0;\top)
	\end{align*}
	Here, $N_1=N_0\cup\{(N4)~g(x_4)=x_4\}$ and $G_1=G_0\cup\{(G4)~g(x_4)=x_4\cdot\{x_4\mapsto a\}\}$.
	Of course, after the third propagation, $\Gamma_3$ is already a \theory-satisfiable model of $G_0$.
	However, consider a larger problem that contains the three clauses $N1$, $N2$, and $N3$ among other clauses.
	In such a problem, another clause might have motivated the decision $g(a)\neq a$.
	A CDCL(T)-style approach with quantifier instantiation and ground 1UIP conflict analysis would learn the ground theory conflict clause $h(a)\neq g(a)\lor h(a)\neq a\lor g(a)=a$ from this conflict.
	By contrast, in this particular run, the generality of Resolve allows \ICLF{} to resolve further with the three propagations and learn the more general clause $g(x_4)=x_4$.
	For these further resolution steps to yield the non-ground clause $g(x_4)=x_4$, the non-ground theory lemma $x_2\neq x_3\lor x_2\neq x_4\lor x_3=x_4$ is essential: starting from its ground instance instead, the same resolution steps would only yield the ground clause $g(a)=a$.
	The learned non-ground clause can be useful later in the run: if $g(b)$ becomes relevant, it can be instantiated as $g(x_4)=x_4\cdot\{x_4\mapsto b\}$, whereas the ground clause $g(a)=a$ carries no information about $g(b)$.
\end{example}
\endgroup

Next, we prove the correctness of \ICLF{} and further properties, including non-redundant learning as well as termination and completeness under suitable restrictions.
We start with invariants that are maintained by all \ICLF{} rules.
The proof is by induction on the length of the derivation, with a case distinction on the last applied rule.

\begin{restatable}[Invariants]{lemma}{lemmaInvariants}\label{lemma:invariants}
	Let $(\Theta;\Gamma;N;G;k;D\cdot\tau)$ be a state reached from a start state $(\emptyset;\eps;N';\emptyset;0;\top)$ by applying the rules of \ICLF{}. Then the following hold:
	\begin{enumerate}
		\item For all $C\cdot\sigma\in G$, it holds that $C\in N$.\label{property:Ginstances}
		\item $\atom(\Gamma)\cup\atom(\clauses(G))\subseteq\Theta$ and for all $A\in\Theta$, $A$ is a ground atom.\label{property:thetaContains}
		\item If $\Gamma=\Gamma_1K\Gamma_2$ then $K$ is propositionally undefined in $\Gamma_1$.\label{property:gammaUnique}
		\item If $\Gamma=\Gamma_1L^{C\cdot\sigma}\Gamma_2$ then $C\sigma$ is ground, $C\sigma=C'\lor L$, $C'$ is propositionally false in $\Gamma_1$ and either $C\cdot\sigma\in G$ or $\models_\mathcal{T} C$.\label{property:gammaPropagations}
		\item If $\Gamma=\Gamma_1 L^\mathcal{T}\Gamma_2$ then $\Gamma_1\models_\mathcal{T} L$.\label{property:gammaT}
		\item Let $k_1,\dots,k_l$ be the annotations of decisions in $\Gamma$. If there are no decisions, then $k=0$; otherwise $1=k_1$, $k=k_l$ and for all $i\in\{2,\dots,l\}$, $k_i=k_{i-1}+1$.\label{property:gammaLevels}
		\item $N'$ and $N$ are equi-satisfiable in \theory.\label{property:satPreservation}
		\item If $D\neq\top$ then $D\tau$ is ground, $\atom(D\tau)\subseteq\Theta$ and $D\tau$ is propositionally false in $\Gamma$.\label{property:conflictFalse}
		\item $\clauses(G)\models_\mathcal{T}D\tau$.\label{property:conflictFollowsG}
		\item $N\models_\mathcal{T}D$.\label{property:conflictFollowsN}
	\end{enumerate}
\end{restatable}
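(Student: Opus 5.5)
We prove all ten properties simultaneously by induction on the length of the derivation from $(\emptyset;\eps;N';\emptyset;0;\top)$, with a case distinction on the last applied rule. In the start state every property is immediate: $G$ and $\Gamma$ are empty, $\Theta=\emptyset$, $k=0$, $N=N'$, and $D\cdot\tau=\top$. Properties~\ref{property:conflictFollowsG} and~\ref{property:conflictFollowsN} hold trivially for $\top$. For the step, we assume the properties for $(\Theta;\Gamma;N;G;k;D\cdot\tau)$ and verify them for the successor state. Any component a rule leaves unchanged keeps its properties, so for each rule only the modified components need to be checked.

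\emph{Instantiation and ground rules.} Instantiate adds $C\cdot\sigma$ with $C\in N$ and extends $\Theta$ by $\atom(C\sigma)$, so Properties~\ref{property:Ginstances} and~\ref{property:thetaContains} are preserved. ClauseDel removes only a clause that no closure in $G$ uses, so Property~\ref{property:Ginstances} survives, and Property~\ref{property:satPreservation} follows by transitivity of equi-satisfiability. For Property~\ref{property:conflictFollowsN} we note that $D=\top$ in all these rules, so nothing needs to be shown. InstanceDel is harmless because no trail literal is annotated with the deleted closure. AtomDel keeps Property~\ref{property:thetaContains} by its side condition. Decide, Propagate, \theory-Propagate and Restart take literals from $\Theta$, or from $\clauses(G)\subseteq\Theta$ in the case of Propagate. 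Their side conditions give Properties~\ref{property:gammaUnique}, \ref{property:gammaPropagations} and~\ref{property:gammaT}, and Property~\ref{property:gammaLevels} follows because Decide uses level $k+1$ and Restart cuts at a prefix whose level is $j$. For Restart, truncating the trail preserves Properties~\ref{property:gammaUnique}--\ref{property:gammaT}, since each condition only refers to the prefix to the left of a literal. Explain turns $L^\mathcal{T}$ into $L^{C\cdot\sigma}$ with $\models_\mathcal{T}C$, so Property~\ref{property:gammaPropagations} holds via its second disjunct. Conflict picks $D\cdot\tau\in G$, which gives Properties~\ref{property:conflictFalse}, \ref{property:conflictFollowsG} and~\ref{property:conflictFollowsN} using Property~\ref{property:Ginstances}. \theory-Conflict picks a theory-valid $D$, and every valid clause is entailed by any set. \theory-Learn adds a valid clause, so equi-satisfiability in \theory{} is preserved.

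\emph{Conflict analysis.} For Resolve, the new clause $(C'\lor D)\mu$ is a resolvent of $C$ and $D\lor L'$ together with factoring. By Property~\ref{property:gammaPropagations}, either $C\in N$ (via $C\cdot\sigma\in G$ and Property~\ref{property:Ginstances}) or $\models_\mathcal{T}C$, so soundness of resolution yields Property~\ref{property:conflictFollowsN}. On the ground side, the closure $C\cdot\sigma$ is in $G$ or its ground instance is valid, so $\clauses(G)\models_\mathcal{T}C\sigma$. Since $(C'\lor D)\mu\sigma\tau=C'\sigma\lor D\tau$, this ground clause is a ground resolvent of $C\sigma$ and $(D\lor L')\tau$, which gives Property~\ref{property:conflictFollowsG}. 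The step that needs the most care is checking that $\mu$ is compatible with $\sigma\tau$. Because $\sigma\tau$ unifies $\comp(L'),L_1,\dots,L_n$ and the closures are variable disjoint, $\sigma\tau=\mu\sigma\tau$ on the relevant variables, using that $\mu$ is most general and idempotent. Consequently $\sigma\tau$ is grounding and the ground clause is as claimed. Falsity of $C'\sigma\lor D\tau$ in $\Gamma$ follows from $C'$ being false in $\Gamma_1$, which is a prefix of $\Gamma$. Factorize is the same argument with a single clause.

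\emph{Backtracking rules.} BacktrackClassic, BacktrackFOL and BacktrackCB add $D$ to $N$ and $D\cdot\tau$ to $G$. Property~\ref{property:satPreservation} is preserved because $N\models_\mathcal{T}D$ by the induction hypothesis. $\atom(D\tau)\subseteq\Theta$ follows from Property~\ref{property:conflictFalse}. The new propagation $L^{D\cdot\tau}$ satisfies Properties~\ref{property:gammaUnique} and~\ref{property:gammaPropagations} by the side conditions. Decision levels are correct because the prefix $\Gamma_1$ of $\Gamma_1K^j\Gamma_2$ has level $j-1$.

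For BacktrackCB, we use the properties of $\cb$ stated after its definition. $\Gamma_1$ keeps its decisions $1,\dots,j$. $\Gamma_3$ contains only propagations whose reasons remain false in $\Gamma_1\Gamma_3$, so Properties~\ref{property:gammaPropagations} and~\ref{property:gammaLevels} hold. $\Gamma_3$ contains no theory propagations, so Property~\ref{property:gammaT} is inherited from $\Gamma_1$. I expect the main obstacle to be making this justification of $\Gamma_3$ rigorous. The key point is that every predecessor of a literal in $\Gamma_3$ lies in $\Gamma_1$ or again in $\Gamma_3$, because $\Gamma_3$ is closed under $\reach^{-1}$ and $L'$ is excluded. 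It follows that each reason clause remains false in the corresponding prefix of $\Gamma_1\Gamma_3$ and that Property~\ref{property:gammaUnique} is preserved.
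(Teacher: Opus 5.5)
Your proposal is correct and follows the paper's proof: a simultaneous induction on derivation length with a case split on the last rule, using the same key facts (Property~1 for Conflict and Resolve, Property~10 for equi-satisfiability under the backtrack rules, and the structure of $\cb$ for BacktrackCB); your handling of mgu compatibility in Resolve and of predecessor-closure of $\Gamma_3$ is somewhat more explicit than the paper's. The cases you leave implicit are immediate: \theory-Atom, the inclusion $\atom(D\tau)\subseteq\Theta$ for \theory-Conflict (via falsity of $D\tau$ and Property~2), and the level bookkeeping for BacktrackFOL (which mirrors Restart).
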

\newcommand{\lemmaInvariantsProof}{
\begin{proof}
	The proof is by induction on the length of the derivation.
	In the start state $(\emptyset;\eps;N';\emptyset;0;\top)$, all properties obviously hold.
	The induction step is shown by a case distinction on the last applied rule.
	For readability, we only mention rules that modify any of the objects occurring in the respective property and skip cases in which the property obviously follows from the induction hypothesis.
	When we say that a rule asserts something, we mean that it follows from the premise and the patterns on the left side of the rule.
	To avoid confusion between variables used in this lemma and variables used in the rule definitions, we mark variables that refer to the rule definitions with a hat, e.g., $\hat{C}$.

	\Cref{property:Ginstances,property:thetaContains}:
	Instantiate asserts all of these properties for the newly added instance.
	ClauseDel asserts that \Cref{property:Ginstances} stays true after deleting the clause.
	AtomDel asserts that \Cref{property:thetaContains} stays true after deleting the atom.
	Decide asserts that \Cref{property:thetaContains} stays true after adding the literal to the trail.
	For Propagate, \Cref{property:thetaContains} for the newly added literal follows from the induction hypothesis because $\hat{C}\cdot\hat{\sigma}\in G$.
	\theory-Propagate asserts that \Cref{property:thetaContains} stays true after adding the literal to the trail.
	BacktrackClassic, BacktrackFOL and BacktrackCB ensure \Cref{property:Ginstances} by adding $\hat{D}$ to $\hat{N}$ and $\hat{D}\cdot\hat{\tau}$ to $\hat{G}$ simultaneously.
	\Cref{property:thetaContains} follows from \Cref{property:conflictFalse} of the induction hypothesis.
	\theory-Atom asserts that \Cref{property:thetaContains} stays true after adding the atom to $\Theta$.
	
	\Cref{property:gammaUnique,property:gammaPropagations,property:gammaT,property:gammaLevels}:
	InstanceDel asserts that \Cref{property:gammaPropagations} stays true after deleting the instance.
	Decide asserts that \Cref{property:gammaUnique} stays true after adding the literal to the trail.
	\Cref{property:gammaLevels} follows from the induction hypothesis together with the level $k+1$ that we assign to the added literal.
	Propagate asserts that \Cref{property:gammaUnique,property:gammaPropagations} stay true after adding the literal to the trail.
	For Restart, all of these properties follow from the induction hypothesis because $\hat\Gamma_1$ is a prefix of $\hat\Gamma$ and the decision level is set to be correct.
	\theory-Propagate asserts that \Cref{property:gammaUnique,property:gammaT} stay true after adding the literal to the trail.
	Explain asserts that \Cref{property:gammaPropagations} stays true after changing the annotation of the literal.
	BacktrackClassic and BacktrackCB assert that \Cref{property:gammaUnique,property:gammaPropagations} stay true after adding the propagation from the learned clause.
	BacktrackClassic and BacktrackFOL assert that \Cref{property:gammaLevels} stays true.
	The trail after BacktrackCB is $\hat\Gamma_1\hat\Gamma_3\hat L^{\hat D\cdot\hat\tau}$, where $\hat\Gamma_1\hat\Gamma_3=\cb(\hat\Gamma,\hat D\hat\tau,\hat j)$.
	As argued along the definition of the function $\cb$, $\hat\Gamma_3$ only contains propagations (and explained theory propagations), none of which depends on $\hat{L'}$, and all their implying literals are retained.
	Hence \Cref{property:gammaT} follows from the induction hypothesis, and \Cref{property:gammaPropagations} follows from the induction hypothesis for the retained propagations and from the premise of BacktrackCB for the newly added propagation.
	\Cref{property:gammaLevels} also stays true by the definition of the function $\cb$.

	\Cref{property:satPreservation}:
	ClauseDel asserts that this property stays true after deleting the clause.
	When applying \theory-Learn, the equi-satisfiability in \theory{} of $\hat{N}$ and $\hat{N}\cup\{\hat{C}\}$ follows from the fact that $\models_\mathcal{T}\hat{C}$.
	For BacktrackClassic, BacktrackFOL and BacktrackCB, this property follows from \Cref{property:conflictFollowsN} of the induction hypothesis.

	\Cref{property:conflictFalse,property:conflictFollowsG,property:conflictFollowsN}:
	For all rules that leave a state with $D\cdot\tau=\top$, these properties are obviously true.
	For Conflict, \Cref{property:conflictFalse} follows from \Cref{property:thetaContains} of the induction hypothesis and the premise of this rule.
	\Cref{property:conflictFollowsG} is obvious and \Cref{property:conflictFollowsN} follows from \Cref{property:Ginstances} of the induction hypothesis.
	For \theory-Conflict, \Cref{property:conflictFalse} follows from the premise of this rule and \Cref{property:thetaContains} of the induction hypothesis; if $D\tau$ is propositionally false in $\Gamma$, then all of its literals are contained in $\Gamma$ negatively.
	\Cref{property:conflictFollowsG,property:conflictFollowsN} are obvious because $\models_\mathcal{T}D$.
	For Resolve, all of these properties follow from \Cref{property:Ginstances,property:thetaContains,property:gammaPropagations,property:conflictFalse,property:conflictFollowsG,property:conflictFollowsN} of the induction hypothesis and the definition of the function $\resolve$.
	Factorize trivially maintains \Cref{property:conflictFalse,property:conflictFollowsG} because $D\tau$ does not change by factorizing the literals $\hat{L}$ and $\hat{L'}$ with $\hat{L}\hat{\tau}=\hat{L'}\hat{\tau}$.
	\Cref{property:conflictFollowsN} follows from the induction hypothesis because $(\hat{D}\lor\hat{L}\lor\hat{L'})\models(\hat{D}\lor\hat{L})\hat\mu$.
\end{proof}
}
\ifthenelse{\boolean{proofsInPaper}}{\lemmaInvariantsProof}{}

The soundness of \ICLF{} follows immediately from \Cref{property:satPreservation,property:conflictFollowsN}.

\begin{restatable}[Soundness]{theorem}{theoremSoundness}\label{theorem:soundness}
	If $(\emptyset;\eps;N';\emptyset;0;\top)\shortruleapp{\ICLF{}}{$\ast$}(\Theta;\Gamma;N;G;k;\bot)$, then $N'\models_\mathcal{T}\bot$.
\end{restatable}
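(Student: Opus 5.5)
The argument is a direct application of the invariants in \Cref{lemma:invariants}, applied to the final state $(\Theta;\Gamma;N;G;k;\bot)$. Since this state is reached from the start state $(\emptyset;\eps;N';\emptyset;0;\top)$ by \ICLF{} rules, every property of \Cref{lemma:invariants} holds in it. The conflict closure is $D\cdot\tau=\bot\cdot\{\}$, so $D=\bot$.

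First, I will use \Cref{property:conflictFollowsN}. It gives $N\models_\mathcal{T}D$, that is, $N\models_\mathcal{T}\bot$. Hence no $\Sigma$-algebra $\mathcal{A}\in\mathcal{T}$ satisfies $N$, so $N$ is \theory-unsatisfiable.

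Second, I will transfer this to the input clause set using \Cref{property:satPreservation}. That property says $N'$ and $N$ are equi-satisfiable in \theory{}. So $N'$ is also \theory-unsatisfiable, which is exactly $N'\models_\mathcal{T}\bot$.

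I do not expect a real obstacle. The only point to check is that $\bot$ is treated uniformly as the closure $\bot\cdot\{\}$, so that \Cref{property:conflictFollowsN} applies with $D=\bot$. This holds because $\bot\neq\top$ and the invariant covers every conflict closure. A minor remark is that the empty clause can only reach the conflict component through Resolve or Factorize, or through Conflict or \theory-Conflict with an empty clause; the invariant already covers all of these cases, so no separate case analysis is needed.
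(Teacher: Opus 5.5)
Your proof is correct and matches the paper's own argument, which likewise obtains soundness directly from \Cref{property:conflictFollowsN} (giving $N\models_\mathcal{T}\bot$ in the final state) and \Cref{property:satPreservation} (equi-satisfiability of $N'$ and $N$ in $\mathcal{T}$) of \Cref{lemma:invariants}. Your closing remark is unnecessary and slightly off, since Factorize always keeps the literal $L$ and so never produces $\bot$, but this does not affect the argument.
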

\newcommand{\theoremSoundnessProof}{
\begin{proof}
	This follows from \Cref{property:satPreservation,property:conflictFollowsN} of \Cref{lemma:invariants}.
\end{proof}
}
\ifthenelse{\boolean{proofsInPaper}}{\theoremSoundnessProof}{}

The two previous statements hold regardless of the strategy used to apply the rules of \ICLF{}.
The following properties require the rules to be applied according to a reasonable strategy, as defined in \Cref{definition:reasonable}.
We first show two invariants maintained by a reasonable strategy.
The first invariant, \Cref{lemma:preferConflict}, states that there is no propositional conflict for a proper trail prefix, and if the last literal on the trail is a decision, then there is no propositional conflict.
This invariant is used to prove non-redundant learning with respect to $G$ in \Cref{theorem:nonRedundant}; it implies that, after conflict analysis, the learned clause is the only false clause for a suitable trail prefix [\Cref{lemma:helpNonRedundant}].
It also helps us show that whenever Conflict is applicable, the last literal on the trail is a propagation or a theory propagation whose complement occurs in the conflict clause [\Cref{lemma:conflictOnPropagation}].
This is essential for satisfying the third requirement of a reasonable strategy, which requires resolving with the last literal on the trail before backtracking.
The lemma is therefore used to show that there are no stuck states in \Cref{theorem:stuckStates}.

\begin{restatable}{lemma}{lemmaPreferConflict}\label{lemma:preferConflict}
	If $(\emptyset;\eps;N';\emptyset;0;\top)\shortruleapp{\ICLF{}}{$\ast$}(\Theta;\Gamma';N;G;k;D\cdot\tau)$ using a reasonable strategy, then if $\Gamma'=\Gamma K$, there is no $C\cdot\sigma\in G$ such that $C\sigma$ is propositionally false in $\Gamma$, and if $K$ is a decision, then there is no $C\cdot\sigma\in G$ such that $C\sigma$ is propositionally false in $\Gamma K$.
\end{restatable}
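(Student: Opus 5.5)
The plan is an induction on the length of the derivation, in the same style as the proof of \Cref{lemma:invariants}, with a case distinction on the last applied rule. Throughout, I use that propositional falsity of a ground clause is monotone in the \emph{set} of literals on the trail. So if a clause is not false in a trail $\Delta$, it is not false in any trail whose literal set is contained in that of $\Delta$, in particular in any prefix of $\Delta$.

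Write the current trail as $\Gamma K$, and call the statement's two conditions the \emph{prefix property} (no $C\cdot\sigma\in G$ is false in $\Gamma$) and the \emph{decision property} (if $K$ is a decision, none is false in $\Gamma K$). The start state is trivial.

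\textbf{Rules that change neither the trail nor $G$, or only shrink $G$.} These are ClauseDel, InstanceDel, AtomDel, \theory-Atom, \theory-Learn, Conflict, \theory-Conflict, Resolve, Factorize and Explain. Explain only turns a theory propagation into a propagation, so it never changes whether the last literal is a decision. The claim follows directly from the induction hypothesis.

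\textbf{Instantiate.} If the new instance $C\sigma$ is not false in $\Gamma K$, it is also not false in $\Gamma$, so both properties hold. Otherwise $K$ is a (theory) propagation, so only the prefix property is required, and the rule guarantees $C\sigma$ is not false in $\Gamma$.

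\textbf{Decide, Propagate and \theory-Propagate.} These extend $\Gamma K$ by a literal $L$. The new prefix is the old full trail, and by the first requirement of a reasonable strategy Conflict was not applicable, so no clause of $G$ is false in it. For Decide, the second requirement gives that none is false in $\Gamma K L$ either.

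\textbf{Restart.} The new trail $\Gamma_1$ is a prefix of $\Gamma K$. If $\Gamma_1=\Gamma K$ nothing changes. Otherwise $\Gamma_1$ is a prefix of $\Gamma$, and both properties follow by monotonicity.

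\textbf{Backtracking rules.} Here the new closure $D\cdot\tau$ must also be checked.

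For BacktrackClassic, the new trail is $\Gamma_1L^{D\cdot\tau}$, where $\Gamma_1K^j$ is a prefix of the old trail. Either $\Gamma_1K^j$ is the whole old trail, in which case the decision property of the induction hypothesis applies, or it is a prefix of $\Gamma$. In both cases no old clause is false in $\Gamma_1$. Moreover $D\tau$ is not false in $\Gamma_1$, since $L$ is undefined there.

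For BacktrackFOL, $\Gamma_2$ is nonempty, so $\Gamma_1$ is a prefix of $\Gamma$. This handles the old clauses for both properties. The side condition that no grounding makes $D$ false in $\Gamma_1$, applied in particular to $\tau$, handles the new one.

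\textbf{BacktrackCB, the main obstacle.} The retained trail $\Gamma_1\Gamma_3$ is only a \emph{subsequence} of the old trail, not a prefix. As before, $D\tau$ is not false in it because $L$ is undefined there. For the old clauses I split on whether the rightmost old literal $K$ lies in $\Gamma_3$.

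If $K\notin\Gamma_3$, the literal set of $\Gamma_1\Gamma_3$ is contained in that of $\Gamma$ (note $K\ne L'$, since $L'\notin\Gamma_1\Gamma_3$). The prefix property then transfers by monotonicity.

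If $K\in\Gamma_3$, then $K\in\reach^{-1}(\comp(M))$ for some $M\in D'$. Since edges of the conflict graph point from a literal to strictly earlier literals, the rightmost trail literal can only reach itself, so $K=\comp(M)$ with $M\in D'\subseteq D\tau$. I then trace back how the conflict state was entered.
\begin{itemize}
\item If it was entered by Conflict, this contradicts the third requirement of a reasonable strategy, which forbids $\comp(K)$ in the ground conflict clause at backtracking.
\item If it was entered by \theory-Conflict, the first requirement ensured that Conflict was not applicable at that moment, so no clause of $G$ was false in the full old trail. During conflict analysis neither the literal set of the trail nor $G$ changes; Explain only re-annotates. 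So no old clause is false in the full trail, and a fortiori none is false in $\Gamma_1\Gamma_3$.
\end{itemize}

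Finally, the new last literal $L^{D\cdot\tau}$ is a propagation, so the decision property is vacuous.
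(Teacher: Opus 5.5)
Your proof is correct and follows the paper's argument: an induction on the derivation, using the first two requirements of a reasonable strategy for Decide, Propagate and \theory-Propagate, and for BacktrackCB either the third requirement (conflict entered by Conflict, so the rightmost literal is not retained) or the preference of Conflict over \theory-Conflict (no instance false in the whole trail); you are also more explicit than the paper about Restart and about why the rightmost literal can lie in $\Gamma_3$ only as the complement of a literal of $D'$. Two small slips: in the paper's conflict graph, edges go from a predecessor to the literal it helped propagate, i.e.\ from earlier to later literals, and that is why the last literal reaches only itself (your sentence states the direction backwards); and the parenthetical ``$K\neq L'$'' fails when $\Gamma_2=\eps$, although $K\notin\Gamma_1\Gamma_3$ still holds there.
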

\newcommand{\lemmaPreferConflictProof}{
\begin{proof}
	The proof is by induction on the length of the derivation.
	The statement is obviously true for the start state.
	The induction step is shown by a case distinction on the last applied rule.
	For readability, we skip rules that do not modify the trail and do not insert into $G$ because for these rules, the statement trivially follows from the induction hypothesis.

	Instantiate ensures in its premise that the new instance satisfies the condition.
	For Decide, this follows from the fact that a reasonable strategy is used:
	If there is a $C\cdot\sigma$ in $G$ such that $C\sigma$ is propositionally false in $\Gamma$, then Conflict would have been applicable instead of Decide, and if there is a $C\cdot\sigma$ in $G$ such that $C\sigma$ is propositionally false in $\Gamma K$, then Conflict would be applicable next, contradicting the fact that in a reasonable strategy, no application of the rule Decide enables an immediate application of the rule Conflict.
	For Propagate and \theory-Propagate, we only have to argue about the case where an instance is propositionally false in $\Gamma$, and this follows from the same argument we used for Decide.
	For BacktrackClassic, for all clauses that were already contained in $G$ before applying this rule, the condition follows from the induction hypothesis because a suffix of the trail is deleted and then a decision is replaced by a propagation.
	For the learned clause, this follows from the premise of this rule.
	For BacktrackFOL, the argument is very similar, except that no propagation is added to the trail.
	For BacktrackCB, we actually need the third requirement of a reasonable strategy.
	During conflict analysis, the literals on the trail, $N$ and $G$ are not modified by any rule.
	If the conflict was reached by \theory-Conflict, then no instance in $G$ was propositionally false because otherwise, Conflict would have been applicable.
	If the conflict was reached by Conflict, then by the induction hypothesis, no instance in $G$ was propositionally false for a proper trail prefix.
	In a reasonable run, when BacktrackCB is applied, then the complement of the rightmost literal on the trail does not occur in the conflict clause.
	Thus, the rightmost literal on the trail is not contained in the subsequence of literals that is kept by the function $\cb$.
	From this, the statement follows for all instances that were previously contained in $G$, and for the learned clause, it follows by the premise of this rule.
\end{proof}
}
\ifthenelse{\boolean{proofsInPaper}}{\lemmaPreferConflictProof}{}

\begin{restatable}{lemma}{lemmaConflictOnPropagation}\label{lemma:conflictOnPropagation}
	If $(\emptyset;\eps;N';\emptyset;0;\top)\shortruleapp{\ICLF{}}{$\ast$}(\Theta;\Gamma;N;G;k;\top)$ using a reasonable strategy and Conflict is applicable to this state with $D\cdot\tau\in G$, then $\Gamma=\Gamma_1K$, $K$ is not a decision literal and $D\tau$ is not propositionally false in $\Gamma_1$.
\end{restatable}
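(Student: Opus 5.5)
The plan is to derive everything directly from \Cref{lemma:preferConflict}, applied to the given state, which is reached by a reasonable strategy. First, the trail cannot be empty. Since Conflict is applicable with $D\cdot\tau\in G$, the clause $D\tau$ is propositionally false in $\Gamma$. If $\Gamma=\eps$, then every literal of $D\tau$ would have to be complemented on an empty trail, which is impossible unless $D\tau=\bot$. So the first step is to rule out $\bot$ as a clause in $G$.

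To exclude $D\tau=\bot$, I will argue that no closure in $G$ has the empty clause as its instance. Instances enter $G$ in two ways.
\begin{itemize}
\item Through Instantiate. The premise forbids propositionally false instances unless the trail ends in a propagation or theory propagation $K$ and the instance is not false for $\Gamma_1$. The empty clause is false in every trail, so it can never be added this way.
\item Through a backtracking rule, where the learned clause is $D$.
\begin{itemize}
\item BacktrackClassic and BacktrackCB require $D\tau=D'\lor L$, so the clause is nonempty.
\item BacktrackFOL requires that no ground instance of $D$ be false in $\Gamma_1$. The empty clause would be false in any $\Gamma_1$, so this excludes it as well.
\end{itemize}
\end{itemize}
Hence $\bot\notin\clauses(G)$, so $D\tau$ contains at least one literal and $\Gamma\neq\eps$. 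We may therefore write $\Gamma=\Gamma_1K$.

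With $\Gamma=\Gamma_1K$ in hand, the first half of \Cref{lemma:preferConflict} states that no instance in $G$ is propositionally false in $\Gamma_1$; in particular $D\tau$ is not. The second half states that if $K$ were a decision, no instance in $G$ would be false in $\Gamma_1K=\Gamma$. This contradicts the applicability of Conflict, so $K$ is not a decision. Both remaining claims of the statement therefore follow immediately.

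The only step needing care is the argument that $\bot\notin\clauses(G)$, because it requires checking each rule that inserts into $G$. If I prefer not to add a separate invariant, I can instead argue more locally: by \Cref{lemma:preferConflict} no instance of $G$ is false in the proper prefix $\eps$ of any nonempty trail. For the empty trail, I would use the fact that any state with empty trail arises either from the start state or from backtracking or Restart. In each case the same rule-premise check rules out $\bot$, so the case analysis over the rules above is unavoidable and is where I expect the main obstacle to lie.
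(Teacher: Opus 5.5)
Your proof is correct and has the same skeleton as the paper's: show $\bot\notin\clauses(G)$, conclude $\Gamma\neq\eps$, and then read off both claims from the two halves of \Cref{lemma:preferConflict}.

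The one difference is how you exclude the empty clause.
\begin{itemize}
\item The paper argues that $\bot$ never enters $N$. It is excluded from $N'$ by assumption, it is not \theory-valid so \theory-Learn cannot add it, and the backtrack rules cannot learn it. The paper then transfers this to $G$ via \Cref{property:Ginstances} of \Cref{lemma:invariants}.
\item You instead check the rules that insert into $G$ directly. You observe that the falsity condition of Instantiate already forbids $\bot$, because $\bot$ is propositionally false in every trail, including every prefix $\Gamma_1$.
\end{itemize}
Your route is slightly more self-contained: it does not need the hypothesis $\bot\notin N'$ or the non-validity of $\bot$ in \theory{}. The paper's route yields the somewhat stronger, reusable invariant $\bot\notin N$.

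Your closing ``local'' alternative is unnecessary, and as stated it is not accurate. Empty-trail states do not arise only from the start state, backtracking, or Restart: Instantiate, InstanceDel, \theory-Atom and others also preserve an empty trail. None of this matters, since your main argument is already complete without it.
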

\newcommand{\lemmaConflictOnPropagationProof}{
\begin{proof}
	By assumption, $N'$ does not contain the empty clause initially, and the empty clause is not valid in the theory, hence \theory-Learn cannot add it.
	The empty clause can also not be learned by any of the backtrack rules, so we can assume that $N$ does not contain the empty clause.
	Thus, by \Cref{property:Ginstances} of \Cref{lemma:invariants}, $G$ does not contain the empty clause.
	From this, it follows that $\Gamma\neq\eps$, and the statement follows from \Cref{lemma:preferConflict}.
\end{proof}
}
\ifthenelse{\boolean{proofsInPaper}}{\lemmaConflictOnPropagationProof}{}

Now we are ready to show that \ICLF{} learns only non-redundant clauses when a reasonable strategy is used.
First, we define an ordering with respect to which the learned clauses are non-redundant.

\begin{restatable}[Trail Induced Ordering]{definition}{definitionTrailOrdering}\label{definition:trailOrdering}
	Given a trail $\Gamma=L_1\dots L_n$, we define the trail induced ordering $\prec_\Gamma$ as a well-founded, total and strict ordering in which $L_1\prec_\Gamma\comp(L_1)\prec_\Gamma L_2\prec_\Gamma\comp(L_2)\prec_\Gamma\dots\prec_\Gamma L_n\prec_\Gamma\comp(L_n)$ and all literals that are propositionally undefined in $\Gamma$ are larger than $\comp(L_n)$.
\end{restatable}

\begin{restatable}{lemma}{lemmaHelpNonRedundant}\label{lemma:helpNonRedundant}
	If $(\emptyset;\eps;N';\emptyset;0;\top)\shortruleapp{\ICLF{}}{$\ast$}(\Theta;\Gamma;N;G;k;D\cdot\tau)$ using a reasonable strategy and BacktrackClassic, BacktrackFOL or BacktrackCB is applicable to this state following a reasonable strategy, then $\Gamma=\Gamma_1\Gamma_2$, $D\tau$ is propositionally false in $\Gamma_1$ and no clause in $\clauses(G)$ is propositionally false in $\Gamma_1$.
\end{restatable}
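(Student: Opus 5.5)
The plan is to take $\Gamma_1$ to be the trail with its rightmost literal removed, i.e.\ write $\Gamma=\Gamma_1K$, and to distinguish how the current conflict state was entered. First I observe that, by \Cref{property:conflictFalse} of \Cref{lemma:invariants}, $D\tau$ is propositionally false in $\Gamma$. I also note that during conflict analysis only Resolve, Factorize and Explain can be applied. None of these changes the literals of the trail, nor $N$ or $G$. Hence $\Gamma$ and $G$ are exactly as they were when the conflict was entered by Conflict or \theory-Conflict. The fact that no clause in $\clauses(G)$ is propositionally false in $\Gamma_1$ is then immediate from \Cref{lemma:preferConflict} applied to the state right before the conflict. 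This needs $\Gamma\neq\eps$, which I will justify in each case.

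\textbf{Case Conflict.} By \Cref{lemma:conflictOnPropagation}, $\Gamma=\Gamma_1K$ with $K$ not a decision, so $\Gamma\neq\eps$. By the third requirement of \Cref{definition:reasonable}, when a backtrack rule is applied, $\comp(K)$ no longer occurs in $D\tau$. Every literal of $D\tau$ is false in $\Gamma$, and the only literal false in $\Gamma$ but not in $\Gamma_1$ is $\comp(K)$; by consistency of the trail (\Cref{property:gammaUnique}), $K$ occurs only once. Therefore $D\tau$ is propositionally false in $\Gamma_1$.

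\textbf{Case \theory-Conflict.} Here the first requirement of a reasonable strategy gives that Conflict was not applicable, so no clause of $\clauses(G)$ is false even in $\Gamma$, and a fortiori not in $\Gamma_1$. We still need to split $\Gamma$ so that $D\tau$ is false in $\Gamma_1$; I simply take $\Gamma_1=\Gamma$ and $\Gamma_2=\eps$, which the statement permits. The claims then hold trivially by \Cref{property:conflictFalse}.

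The main obstacle is making sure the case split is exhaustive and that the relevant facts persist through conflict analysis. In particular, I must check that Explain only changes annotations, not literals, so that \Cref{lemma:preferConflict} still applies with the same $\Gamma$. I must also check that the third requirement of \Cref{definition:reasonable} refers precisely to the conflicts entered by Conflict. If the intended reading requires $\Gamma_2$ to be non-empty, I would refine the \theory-Conflict case: take $\Gamma_1$ to be the shortest prefix in which $D\tau$ is false. Since no clause in $G$ is false in $\Gamma$ at all, the second claim holds for any prefix.
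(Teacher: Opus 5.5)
Your proof is correct and follows essentially the same route as the paper: you observe that Explain, Resolve and Factorize leave the trail literals and $G$ unchanged, then in the Conflict case use the third requirement of a reasonable strategy to drop the rightmost literal and invoke \Cref{lemma:preferConflict}, and in the \theory-Conflict case use the preference of Conflict over \theory-Conflict with $\Gamma_1=\Gamma$. Your closing hedge is unnecessary, since the statement allows $\Gamma_2=\eps$ (which the paper implicitly uses in the \theory-Conflict case), and the proposed refinement would not guarantee a non-empty $\Gamma_2$ anyway, because $D\tau$ may contain the complement of the last trail literal.
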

\newcommand{\lemmaHelpNonRedundantProof}{
\begin{proof}
	If any backtrack rule is applicable to this state, then $D\notin\{\top,\bot\}$.
	This means that previously, either Conflict or \theory-Conflict must have been applied, and that after the most recent application of Conflict or \theory-Conflict, only the rules Explain, Resolve and Factorize have been applied.
	Notice that none of these rules modifies the literals on the trail (Explain only modifies the annotations), and hence the literals on the trail are still the same as right before Conflict or \theory-Conflict was applied.
	Moreover, none of these rules modifies $\Theta$, $N$, $G$ or $k$.
	If the conflict state was reached by Conflict, then the complement of the rightmost literal on the trail does not occur in $D\tau$ because a reasonable strategy was used.
	Hence, by \Cref{property:conflictFalse} of \Cref{lemma:invariants}, it follows that $\Gamma=\Gamma_1K$ and $D\tau$ is false in $\Gamma_1$.
	Now, the claim follows from \Cref{lemma:preferConflict}.
	If the conflict state was reached by \theory-Conflict, then there is no $C\cdot\sigma$ in $G$ that is propositionally false in $\Gamma$ because otherwise, Conflict would have been applicable instead, and the claim follows.
\end{proof}
}
\ifthenelse{\boolean{proofsInPaper}}{\lemmaHelpNonRedundantProof}{}

\begin{restatable}[Non-Redundant Learning]{theorem}{theoremNonRedundant}\label{theorem:nonRedundant}
	Whenever \ICLF{} learns a ground clause (using BacktrackClassic, BacktrackFOL or BacktrackCB) in a reasonable run, this clause is non-redundant with respect to the current set of ground instances and the trail induced ordering.
\end{restatable}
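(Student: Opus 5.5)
The plan is to build an explicit countermodel from a trail prefix. Consider the state $(\Theta;\Gamma;N;G;k;D\cdot\tau)$ right before one of BacktrackClassic, BacktrackFOL or BacktrackCB is applied in a reasonable run. The learned ground clause is $D\tau$, and we work with the trail induced ordering $\prec_\Gamma$ of \Cref{definition:trailOrdering} for this trail $\Gamma$. We must show $\clauses(G)^{\preceq D\tau}\not\models D\tau$. To do so, we exhibit an interpretation that satisfies every clause of $\clauses(G)$ that is $\preceq_\Gamma D\tau$ and falsifies $D\tau$.

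First, apply \Cref{lemma:helpNonRedundant}. It gives $\Gamma=\Gamma_1\Gamma_2$ such that $D\tau$ is propositionally false in $\Gamma_1$ and no clause of $\clauses(G)$ is propositionally false in $\Gamma_1$. By \Cref{property:conflictFalse} of \Cref{lemma:invariants}, $D\tau$ is ground. Hence each literal of $D\tau$ is the complement of some trail literal $L_j$ occurring in $\Gamma_1$. Write $\Gamma=L_1\dots L_n$ and let $\comp(L_i)$ be the $\prec_\Gamma$-maximal literal of $D\tau$, where $L_i\in\Gamma_1$.

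The key step is the following observation. Let $C\in\clauses(G)$ with $C\preceq_\Gamma D\tau$ in the multiset extension. Then the maximal literal of $C$ is $\preceq_\Gamma\comp(L_i)$. By the shape of $\prec_\Gamma$, every literal of $C$ is therefore of the form $L_j$ or $\comp(L_j)$ with $j\le i$. In particular, every literal of $C$ is propositionally defined in $L_1\dots L_i$, which is a prefix of $\Gamma_1$. A clause whose literals are all defined by $\Gamma_1$ is either propositionally true or propositionally false in $\Gamma_1$. \Cref{lemma:helpNonRedundant} excludes the latter, so $C$ is propositionally true in $\Gamma_1$.

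It remains to turn $\Gamma_1$ into a model. $\Gamma_1$ is consistent by \Cref{property:gammaUnique} of \Cref{lemma:invariants}. Extend it arbitrarily to a total assignment of the ground atoms, i.e.\ a Herbrand interpretation. This interpretation satisfies all clauses in $\clauses(G)^{\preceq D\tau}$ and falsifies $D\tau$, so $D\tau$ is not redundant. As a side effect, this also shows $D\tau\notin\clauses(G)$, since $D\tau$ is false in $\Gamma_1$.

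The main point needing care is the precise notion of entailment in \Cref{definition:clauseRedundancy}: it is plain $\models$, not $\models_\mathcal{T}$, so a propositional/Herbrand countermodel suffices. A second point is that the ordering must be the one induced by the trail at the moment of learning. The multiset-extension step, that $C\preceq D\tau$ bounds the maximal literal of $C$, is routine.
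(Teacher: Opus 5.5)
Your proof is correct and takes the same approach as the paper. Like the paper, you invoke \Cref{lemma:helpNonRedundant} to get a prefix $\Gamma_1$ in which $D\tau$ is false and no clause of $\clauses(G)$ is false, observe that every clause $\preceq_\Gamma D\tau$ has only literals defined in $\Gamma_1$ and is therefore true there, and conclude that $\Gamma_1$ refutes $\clauses(G)^{\preceq_\Gamma D\tau}\models D\tau$. Your maximal-literal argument and your extension of $\Gamma_1$ to a total Herbrand interpretation only make explicit steps that the paper leaves implicit.
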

\newcommand{\theoremNonRedundantProof}{
\begin{proof}
	Assume that \ICLF{} is in the state $(\Theta;\Gamma;N;G;k;D\cdot\tau)$ that BacktrackClassic, BacktrackFOL or BacktrackCB is applicable to.
	By \Cref{lemma:helpNonRedundant}, $\Gamma=\Gamma_1\Gamma_2$, $D\tau$ is propositionally false in $\Gamma_1$ and no clause in $\clauses(G)$ is propositionally false in $\Gamma_1$.
	This means that in particular, all literals of $D\tau$ are propositionally defined in $\Gamma_1$.
	Consider the set $\clauses(G)^{\preceq_\Gamma D\tau}$.
	All literals of the clauses in this set must be propositionally defined in $\Gamma_1$ as well because they are all $\preceq_\Gamma D\tau$.
	As no clause in $\clauses(G)$ is propositionally false in $\Gamma_1$, it holds that $\Gamma_1\models\clauses(G)^{\preceq_\Gamma D\tau}$.
	It follows that $\clauses(G)^{\preceq_\Gamma D\tau}\not\models D\tau$ because otherwise, by transitivity of entailment, it would follow that $\Gamma_1\models D\tau$, contradicting the fact that $D\tau$ is propositionally false in $\Gamma_1$.
	Hence, $D\tau$ is non-redundant with respect to $\clauses(G)$ and the trail induced ordering $\prec_\Gamma$.
\end{proof}
}
\ifthenelse{\boolean{proofsInPaper}}{\theoremNonRedundantProof}{}

The non-ground learned clause $D$, however, can be redundant with respect to $N$ and the trail-induced ordering $\prec_\Gamma$.
This can happen, for example, if there is an instance of a clause in $N$ that is propositionally false for a trail prefix, but that is not contained in $G$; one such case is shown in~\Cref{example:nonRedundantG}.
\Cref{theorem:nonRedundantN} shows that if a stronger strategy, called first-order aware strategy [\Cref{definition:FOLaware}], is followed, then the non-ground learned clause $D$ is also non-redundant with respect to $N$.
Moreover, \Cref{theorem:stuckStatesN} shows that following a first-order aware strategy does not cause any stuck states.

\begingroup
\allowdisplaybreaks
\begin{example}\label{example:nonRedundantG}
	Consider the clause set
	\begin{align*}
		N_0=\{(N1)~P(x_0),~~~~~(N2)~P(x_1)\lor R(x_1),~~~~~(N3)~Q(x_2)\lor\neg R(x_2)\}.
	\end{align*}
	\begin{align*}
		&(\emptyset;\eps;N_0;\emptyset;0;\top)\\
		\ruleapp{\ICLF{}}{Instantiate $\times2$}&(\Theta_0;\eps;N_0;G_0;0;\top)\\
		\intertext{Here, $\Theta_0=\{P(a),R(a),Q(a)\}$ and $G_0=\{(G1)~P(x_1)\lor R(x_1)\cdot\{x_1\mapsto a\},(G2)~Q(x_2)\lor\neg R(x_2)\cdot\{x_2\mapsto a\}\}$.}
		\ruleapp{\ICLF{}}{Decide $\times2$}&(\Theta_0;\neg P(a)^1\neg Q(a)^2;N_0;G_0;2;\top)\\
		\ruleapp{\ICLF{}}{Propagate}&(\Theta_0;\neg P(a)^1\neg Q(a)^2 R(a)^{(G1)};N_0;G_0;2;\top)\\
		\ruleapp{\ICLF{}}{Conflict}&(\Theta_0;\neg P(a)^1\neg Q(a)^2 R(a)^{(G1)};N_0;G_0;2;Q(x_2)\lor\neg R(x_2)\cdot\{x_2\mapsto a\})\\
		\ruleapp{\ICLF{}}{Resolve}&(\Theta_0;\neg P(a)^1\neg Q(a)^2 R(a)^{(G1)};N_0;G_0;2;P(x_2)\lor Q(x_2)\cdot\{x_2\mapsto a\})
	\end{align*}
	The clause $P(x_2)\lor Q(x_2)$ that can now be learned is non-redundant with respect to $G_0$, but redundant with respect to $N_0$ due to the clause $P(x_0)$.
	This also illustrates the difference between a reasonable and a first-order aware strategy.
	A first-order aware strategy prevents the decision $\neg P(a)$ since it makes an instance of $P(x_0)\in N_0$ propositionally false, whereas a reasonable strategy allows this decision because $P(a)\notin\clauses(G_0)$.
\end{example}
\endgroup

\begin{restatable}{corollary}{corollaryGunique}\label{corollary:gUnique}
	If $(\emptyset;\eps;N';\emptyset;0;\top)\shortruleapp{\ICLF{}}{$\ast$}(\Theta;\Gamma;N;G;k;D\cdot\tau)$ using a reasonable strategy, then there are no two $C\cdot\sigma,C'\cdot\sigma'\in G$ such that $C\cdot\sigma\neq C'\cdot\sigma'$, but $C\sigma=C'\sigma'$.
\end{restatable}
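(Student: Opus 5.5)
The plan is an induction on the length of the derivation, with a case distinction on the last applied rule, in the same way as \Cref{lemma:invariants} and \Cref{lemma:preferConflict}. The invariant is: whenever $C\cdot\sigma$ and $C'\cdot\sigma'$ are distinct closures in $G$, we have $C\sigma\neq C'\sigma'$. It holds trivially in the start state, since $G=\emptyset$.

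Only rules that insert into $G$ need attention. These are Instantiate, BacktrackClassic, BacktrackFOL and BacktrackCB; every other rule either leaves $G$ unchanged or, like InstanceDel, only removes closures, which clearly preserves the invariant.

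\begin{itemize}
\item \emph{Instantiate.} The premise explicitly requires $C\sigma\notin\clauses(G)$. So the new closure has a ground clause distinct from all existing ones, and the invariant is preserved. If $C\cdot\sigma$ were already in $G$, then $C\sigma\in\clauses(G)$, so this case cannot arise either.
\item \emph{Backtrack rules.} Each of them adds $D\cdot\tau$ to $G$. We must show that no $C\cdot\sigma\in G$ satisfies $C\sigma=D\tau$.
  \begin{itemize}
  \item We use \Cref{lemma:helpNonRedundant}, which applies because the run follows a reasonable strategy and a backtrack rule is applicable. It gives $\Gamma=\Gamma_1\Gamma_2$ such that $D\tau$ is propositionally false in $\Gamma_1$, while no clause in $\clauses(G)$ is propositionally false in $\Gamma_1$.
  \item Hence $D\tau\notin\clauses(G)$, so $D\tau$ differs from $C\sigma$ for every $C\cdot\sigma\in G$.
  \item In particular, $D\cdot\tau$ was not already in $G$, so the union genuinely adds one new closure whose ground clause is fresh. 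The invariant therefore carries over to $G\cup\{D\cdot\tau\}$.
  \end{itemize}
\end{itemize}

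The main obstacle is checking that \Cref{lemma:helpNonRedundant} is really available at the backtracking step. Its hypothesis is that the state was reached by a reasonable strategy and that a backtrack rule is applicable following a reasonable strategy. This matches the situation because the corollary assumes the whole derivation, including the backtracking step, follows a reasonable strategy. If one worried about the third condition of \Cref{definition:reasonable} when the conflict came from \theory-Conflict, the lemma's proof already covers that case: no clause of $G$ is false in the full $\Gamma$ there, and it suffices to take $\Gamma_1=\Gamma$.

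A side point is that the corollary is meant to be read as stated, with "using a reasonable strategy" covering all steps, and that the trail prefix $\Gamma_1$ produced by the lemma need not coincide with the one used by the backtrack rule. Only the falsity facts are needed, so this causes no problem.
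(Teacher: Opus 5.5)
Your proof is correct and follows the paper's: induction over the derivation, with Instantiate handled by its premise $C\sigma\notin\clauses(G)$ and the backtrack rules handled by showing that the learned ground clause $D\tau$ cannot already lie in $\clauses(G)$. The only difference is that the paper cites \Cref{theorem:nonRedundant}, using that a clause in $\clauses(G)$ would trivially be redundant with respect to $\clauses(G)$, whereas you argue directly from \Cref{lemma:helpNonRedundant}, on which that theorem rests.
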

\newcommand{\corollaryGuniqueProof}{
\begin{proof}
	The proof is by induction on the length of the derivation.
	The statement is obviously true for the start state.
	For the induction step, we only have to consider rules that insert instances into $G$.
	Instantiate asserts this property in its premise, and for the backtrack rules, it follows from \Cref{theorem:nonRedundant}.
\end{proof}
}
\ifthenelse{\boolean{proofsInPaper}}{\corollaryGuniqueProof}{}

Without restrictions, termination cannot be expected for \ICLF{}: Instantiate may generate infinitely many ground instances, \theory-Learn may add infinitely many theory lemmas, and \theory-Atom may add arbitrary ground atoms.
Even ground reasoning on a fixed set of ground instances may diverge due to Restart and InstanceDel, the latter being analogous to Forget in CDCL(T).
We therefore prove termination when these rules are applied only finitely often.
Such restrictions are standard in related settings.
For example, SCL restricts trail literals to be smaller than a fixed bound $\beta$, thereby obtaining only finitely many relevant instances~\cite{DBLP:journals/corr/abs-2302-05954}.
In CDCL, restarts can be performed with increasing periodicity to ensure termination~\cite{DBLP:journals/jacm/NieuwenhuisOT06}.
The following theorem states the corresponding termination condition for \ICLF{}.

\begin{restatable}[Termination]{theorem}{theoremTermination}\label{theorem:termination}
	If \ICLF{} is executed from a start state $(\emptyset;\eps;N';\emptyset;0;\top)$ using a reasonable strategy, and the rules Instantiate, Instance\-Del, Restart, \theory-Learn, and \theory-Atom are applied only finitely often, then it terminates.
\end{restatable}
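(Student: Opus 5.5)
Assume towards a contradiction an infinite reasonable run. Because Instantiate, InstanceDel, Restart, \theory-Learn and \theory-Atom are applied only finitely often, there is a point after which none of them is applied again. From that point on I argue as follows. $\Theta$ can only shrink (AtomDel), and AtomDel can fire only finitely often, since $\Theta$ is finite: it is built from finitely many Instantiate and \theory-Atom steps, each adding finitely many atoms. ClauseDel likewise removes clauses from $N$, and $N$ can grow only through the backtrack rules. So in a tail of the run $\Theta$ is a fixed finite set $\Theta^\ast$, and $G$ changes only by the backtrack rules, each of which adds one closure $D\cdot\tau$.

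The core step is the usual CDCL argument on this tail. By \Cref{lemma:invariants} (\cref{property:thetaContains,property:gammaUnique}), every trail is a sequence of pairwise distinct, consistent literals over $\Theta^\ast$. So there are only finitely many trails, and their length is bounded by $n=|\Theta^\ast|$. Moreover, each learned ground clause $D\tau$ has its atoms in $\Theta^\ast$ (\cref{property:conflictFalse}). By \Cref{theorem:nonRedundant}, and more directly by \Cref{lemma:helpNonRedundant}, no clause already in $\clauses(G)$ is propositionally false in the prefix $\Gamma_1$ where $D\tau$ is false. Hence $D\tau\notin\clauses(G)$, and by \Cref{corollary:gUnique} it is genuinely new. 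Since no InstanceDel is applied in the tail, $\clauses(G)$ strictly grows with every backtrack and never shrinks. But there are at most $3^n$ ground clauses over $\Theta^\ast$, so only finitely many backtrack steps occur.

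It therefore remains to show that the run cannot be infinite between consecutive backtracks. I handle the two phases separately. In the model building phase (conflict closure $\top$), with no Restart, only Decide, Propagate, \theory-Propagate and Explain are available. The first three extend the trail, whose length is bounded by $n$. Explain only changes a $\mathcal{T}$-annotation into a closure, and annotations never revert to $\mathcal{T}$, so this also happens at most $n$ times per trail. Eventually Conflict or \theory-Conflict must fire, or the run halts. In the conflict analysis phase the trail literals are fixed, and Explain is bounded as before. For Resolve and Factorize I use a measure: the multiset of ground literals of $D\tau$, compared with respect to the trail ordering $\prec_\Gamma$ of \Cref{definition:trailOrdering}, with the non-ground size of $D$ as a tiebreak. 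Factorize strictly decreases the number of literals of $D$. Resolve with $L^{C\cdot\sigma}$ removes $\comp(L)$ and replaces it by literals $\comp$ of trail literals left of $L$, which are $\prec_\Gamma$-smaller. So both steps decrease a well-founded measure.

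The main obstacle is exactly this last measure. Resolve on a literal $L'$ whose ground instance can be factorized leaves $\comp(L)$ in the ground clause (as the paper notes), so the ground multiset need not strictly decrease. I would fix this by using the multiset of non-ground literals of $D$, each weighted by the $\prec_\Gamma$-position of its ground instance. Resolve removes one such literal (one occurrence) and adds only strictly smaller ones, and Factorize removes one, so the multiset extension of a well-founded order decreases. Combining the bounded number of backtracks with finiteness within each phase gives termination.
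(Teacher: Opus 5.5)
Your proof is correct and is essentially the paper's argument. It uses the same ingredients: newness of each learned ground clause via \Cref{lemma:helpNonRedundant} (which bounds backtracks by the number of clauses over $\Theta$), $|\Gamma|\le|\Theta|$, the count of $\mathcal{T}$-annotations for Explain, and exactly the paper's measure $\multiGround(D\cdot\tau)$ under $\prec_\Gamma$ for Resolve and Factorize. These are arranged as a nested phase argument instead of the paper's single lexicographic five-tuple $\bigl(4^{|\Theta|}-|\clauses(G)|,|N|,|\Theta|-|\Gamma|,\numT(\Gamma),\multiGround(D\cdot\tau)\bigr)$, with the small bonus that $\prec_\Gamma$ is only ever used on a fixed trail.

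Only cosmetic fixes are needed. ClauseDel and AtomDel can also fire in the $\top$-phase, which is harmless since neither $N$ nor $\Theta$ grows there. Also, $\clauses(G)$ may contain tautological instances, so $4^n$ rather than $3^n$ is the right count; the learned clauses themselves are non-tautological, however, so your bound on backtracks stands.
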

\newcommand{\theoremTerminationProof}{
\begin{proof}
	Assume $(\emptyset;\eps;N';\emptyset;0;\top)\shortruleapp{\ICLF{}}{$\ast$}(\Theta;\Gamma;N;G;k;D\cdot\tau)$.
	It is enough to show that there is no infinite sequence of reasonable rule applications from this state if the rules Instantiate, InstanceDel, Restart, \theory-Learn and \theory-Atom are not applied.
	For this purpose, we define a function $\numT(\hat\Gamma)$ that counts the number of \theory{} annotations on a trail $\hat\Gamma$, and a function $\multiGround(\hat D\cdot\hat\tau)$ that maps closures to the multiset $\multiGround(\hat D\cdot\hat\tau)=\{|L\hat\tau\mid L\in\hat D|\}$.
	We now associate states $(\hat\Theta;\hat\Gamma;\hat N;\hat G;\hat k;\hat D\cdot\hat\tau)$ with the five tuple
	\begin{align*}
		\left(4^{\big|\hat\Theta\big|}-\big|\clauses(\hat G)\big|,~~\big|\hat N\big|,~~\big|\hat\Theta\big|-\big|\hat\Gamma\big|,~~\numT\big(\hat\Gamma\big),~~\multiGround(\hat D\cdot\hat\tau)\right).
	\end{align*}
	By \Cref{property:thetaContains,property:gammaUnique} of \Cref{lemma:invariants}, the first and third component are natural numbers.
	For the second and fourth component, it is obvious that they are natural numbers.
	The fifth component is compared by the multiset extension of the trail induced ordering $\prec_{\hat\Gamma}$, which is well-founded.
	We assume that $\multiGround(\top)$ is maximal in this ordering.
	It suffices to show that each of the remaining rules decreases this five tuple with respect to the lexicographic ordering.
	The fact that the trail $\hat\Gamma$, and hence the trail induced ordering $\prec_{\hat\Gamma}$, are not constant is not a problem, as changes of the trail $\hat\Gamma$ always coincide with a decrease in an earlier component.

	For readability, we only write for each rule which component it decreases, and omit the sentence that the earlier components remain unchanged.
	ClauseDel decreases the second component.
	AtomDel decreases the first component.
	Decide and Propagate decrease the third component.
	Conflict decreases the fifth component.
	\theory-Propagate decreases the third component.
	\theory-Conflict decreases the fifth component.
	Explain decreases the fourth component.
	Resolve and Factorize decrease the fifth component.
	By \Cref{theorem:nonRedundant}, BacktrackClassic, BacktrackFOL and BacktrackCB decrease the first component.
\end{proof}
}
\ifthenelse{\boolean{proofsInPaper}}{\theoremTerminationProof}{}

\begin{restatable}[No Stuck States]{theorem}{theoremStuckStates}\label{theorem:stuckStates}
	Let $(\Theta;\Gamma;N;G;k;D\cdot\tau)$ be a state reached by \ICLF{} from a start state $(\emptyset;\eps;N';\emptyset;0;\top)$ using a reasonable strategy.
	Then either $D=\bot$, or $D=\top$ and $\Gamma$ is a \theory-satisfiable model of $\clauses(G)$, or a rule from the set $\{$Decide, Propagate, Conflict, \theory-Conflict, Explain, Resolve, \texttt{B}$\}$ is applicable following a reasonable strategy, where \texttt{B} is any of the rules BacktrackClassic, BacktrackFOL or BacktrackCB.
\end{restatable}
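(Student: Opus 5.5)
The plan is a case distinction on the conflict closure $D\cdot\tau$: first $D=\top$ (model building phase), then $D\notin\{\top,\bot\}$ (conflict analysis phase). The case $D=\bot$ is trivial.

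\emph{Case $D=\top$.} Suppose $\Gamma$ is not a \theory-satisfiable model of $\clauses(G)$. First, assume some clause $C\sigma$ with $C\cdot\sigma\in G$ is propositionally false in $\Gamma$. Then Conflict is applicable, and Conflict is preferred by the first requirement of \Cref{definition:reasonable}, so applying it respects the strategy. Second, assume $\Gamma$ is \theory-unsatisfiable. Then, because the trail is ground and \theory-satisfiability of conjunctions of ground literals is decidable, the clause $\neg\Gamma\cdot\{\}$ (the disjunction of the complements of the trail literals) is valid in \theory{} and propositionally false in $\Gamma$. Hence \theory-Conflict is applicable; it is only dispreferred to Conflict, which we have just excluded. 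Third, assume no instance is false but some clause of $\clauses(G)$ is not propositionally true. This clause is then undefined, so it contains an undefined literal whose atom lies in $\Theta$ by \Cref{property:thetaContains} of \Cref{lemma:invariants}. If all but one of its literals are false, Propagate is applicable. Otherwise we need an admissible Decide, where the second requirement says the decision must not immediately enable Conflict. I plan to pick an undefined literal $L$ with $\atom(L)\in\Theta$ and argue as follows: if both $L$ and $\comp(L)$ would produce a conflict, then some clause has all literals false except $\comp(L)$. That clause is unit, so Propagate is applicable. Thus either Propagate applies, or some polarity of the atom is a safe decision. This is the subtle step, but it is a short argument.

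\emph{Case $D\notin\{\top,\bot\}$.} By \Cref{property:conflictFalse}, $D\tau$ is propositionally false in $\Gamma$. If $D\tau$ is the empty ground clause, then $D$ is the empty clause, since $\tau$ is grounding and does not remove literals, contradicting $D\neq\bot$. So $D\tau$ has a literal, and we look at the rightmost trail literal $K$ whose complement occurs in $D\tau$.

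\emph{Subcase: $K$ is a theory propagation.} Explain is applicable with the reason $\neg\Gamma_1\lor K\cdot\{\}$, which is valid by \Cref{property:gammaT}.

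\emph{Subcase: $K$ is a propagation $L^{C\cdot\sigma}$.} Resolve is applicable, after renaming variables apart.

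\emph{Subcase: $K$ is a decision $L^{j}$.} Here I want a backtrack rule. Write $\Gamma=\Gamma_1K\Gamma_2$. Then $D\tau=D'\lor\comp(L)$, where $D'$ is false in $\Gamma_1$ and $\comp(L)$ is undefined in $\Gamma_1$ by \Cref{property:gammaUnique}. This gives BacktrackClassic with $\Gamma_1K^j\Gamma_2$.

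The main obstacle is the third requirement of \Cref{definition:reasonable}. If the conflict came from Conflict, the complement of the rightmost trail literal must have disappeared from $D\tau$ before backtracking. By \Cref{lemma:conflictOnPropagation} that rightmost literal is a propagation or theory propagation, not a decision. So if its complement still occurs in $D\tau$, the rightmost false literal is exactly that literal, and Explain or Resolve applies; both are allowed. Hence backtracking is only ever reached in the decision subcase, where the requirement already holds because the rightmost literal's complement is absent.

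One remaining worry is that exhaustive factorization inside $\resolve$ might leave the complement of the rightmost literal in $D\tau$. The definition of $\resolve$ factorizes all $C$-literals equal to $L$, and $D$-literals are ground-distinct from $L'$ only up to duplicates. Any remaining copies can be removed by Factorize or by resolving again, and both remain applicable. For the theorem it is enough that some listed rule applies, and Resolve does.
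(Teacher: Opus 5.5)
Your proposal is sound for \texttt{B} $=$ BacktrackClassic, and its route is essentially the paper's. In the model-building phase you use Conflict, then Propagate or Decide, then \theory-Conflict with $\neg\Gamma\cdot\{\}$. Your argument that either some polarity of the atom is a safe decision or else Propagate applies makes the second requirement of a reasonable strategy more explicit than the paper does. In the conflict phase you take the rightmost trail literal whose complement is in $D\tau$: Explain or Resolve if it is not a decision, backtrack if it is. \Cref{lemma:conflictOnPropagation}, together with the fact that Explain, Resolve and Factorize do not change the trail literals, gives the third requirement. Your per-state formulation is slightly cleaner than the paper's procedural one, which appeals to \Cref{theorem:termination} to argue that resolving eventually removes the rightmost literal.

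What is missing is the quantifier on \texttt{B}. The paper proves the theorem for each backtrack rule separately (``for each backtrack rule, there exists an approach to be able to apply it''), and \Cref{theorem:stuckStatesN} relies on the case \texttt{B} $=$ BacktrackFOL. You only exhibit BacktrackClassic. Both missing cases follow from your decision subcase $\Gamma=\Gamma_1K\Gamma_2$, where $K$ is a decision with annotation $j$.

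For BacktrackFOL, take the prefix $\eps$. Since $D\neq\bot$, no $D\sigma$ is propositionally false in the empty trail, and the removed suffix contains the decision $K$.

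For BacktrackCB, use destination level $j-1$, so that $K$ plays the role of $L'^{j+1}$ in the definition of $\cb$. All complements of literals of $D\tau$ lie at or to the left of $K$, so the only one in $\reach(K)$ is $K$ itself. By \Cref{property:gammaPropagations}, no ancestor of a conflict literal lies to the right of $K$. Hence all three applicability conditions of $\cb$ hold, $\Gamma_3$ is empty, and $\cb$ returns $\Gamma_1$.

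Your third-requirement argument carries over unchanged to both rules.
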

\newcommand{\theoremStuckStatesProof}{
\begin{proof}
	First assume that $D=\top$, but $\Gamma$ is not a \theory-satisfiable model of $\clauses(G)$.
	If there is an instance $C\cdot\sigma\in G$ such that $C\sigma$ is propositionally false in $\Gamma$, then Conflict is applicable.
	If there is some $A\in\Theta$ such that $A$ is propositionally undefined in $\Gamma$, then either Propagate or Decide are applicable with $A$ or $\neg A$.
	Otherwise, by \Cref{property:thetaContains} of \Cref{lemma:invariants}, all instances in $G$ are propositionally true, which means that by our assumption, $\Gamma$ is not \theory-satisfiable.
	It follows that \theory-Conflict is applicable, for instance with $\neg\Gamma\cdot\{\}$.

	Now assume that $D\notin\{\top,\bot\}$.
	By \Cref{property:gammaT} of \Cref{lemma:invariants}, for any literal $L$ on the trail such that $\Gamma=\Gamma_1L^\mathcal{T}\Gamma_2$, Explain is applicable, for instance with $\neg\Gamma_1\lor L$.
	Hence, we can assume that we can apply Resolve on any literal on the trail that is not a decision.
	By \Cref{property:conflictFalse} of \Cref{lemma:invariants}, the complements of all literals in $D\tau$ are on the trail.
	We now show that for each backtrack rule, there exists an approach to be able to apply it.
	If the respective backtrack rule is not applicable due to the restrictions of the reasonable strategy, then Resolve can be applied with the rightmost literal on the trail, whose complement in this case occurs in the conflict clause by \Cref{lemma:conflictOnPropagation}.
	It is possible to resolve with this rightmost literal on the trail until its complement does not occur in the conflict clause anymore by \Cref{property:gammaPropagations} of \Cref{lemma:invariants} and \Cref{theorem:termination}.
	If BacktrackClassic is not applicable, then the rightmost complement of a literal in $D\tau$ on the trail is not a decision, so Resolve can be applied with it.
	Following this process, either $\bot$ will be derived or BacktrackClassic becomes applicable eventually.
	The same strategy works for BacktrackCB because if the rightmost complement of a literal in $D\tau$ is a decision $L'^{j+1}$, then all conditions for the applicability of $\cb$ are met.
	If there is no decision on the trail $\Gamma$, then $\bot$ can be derived by applying Resolve several times.
	Otherwise, BacktrackFOL is always applicable with $\Gamma_1=\eps$.
\end{proof}
}
\ifthenelse{\boolean{proofsInPaper}}{\theoremStuckStatesProof}{}

In full generality, completeness cannot be expected for arbitrary theories and theory combinations, for example for linear arithmetic combined with uninterpreted function symbols~\cite{DBLP:journals/aaecc/BachmairGW94,Waldmann2001IJCAR,DBLP:conf/csl/KorovinV07}.
Nevertheless, \Cref{theorem:termination,theorem:stuckStates} imply completeness whenever a finite \theory-unsatisfiable set of ground instances is available.

\begin{restatable}[Completeness]{corollary}{corollaryCompleteness}\label{corollary:completeness}
	Let $N$ be a clause set and $S\subseteq\gnd(N)$ be finite and \theory-unsatisfiable.
	Then \ICLF{} can derive $\bot$.
\end{restatable}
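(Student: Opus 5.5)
Starting from the start state $(\emptyset;\eps;N;\emptyset;0;\top)$, I first apply Instantiate once for every clause of $S$. For each $C'\in S$ I choose some $C\in N$ and a grounding $\sigma$ with $C\sigma=C'$, and skip $C'$ if it is already in $\clauses(G)$. The trail is still empty at this point, so no instance is propositionally false in it, and the side condition of Instantiate holds. This gives a state $(\Theta_0;\eps;N;G_0;0;\top)$ with $\clauses(G_0)=S$. By \Cref{property:thetaContains} of \Cref{lemma:invariants}, $\Theta_0$ contains all atoms of $S$.

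From this state on I follow a reasonable strategy and never again apply Instantiate, InstanceDel, Restart, \theory-Learn or \theory-Atom. By \Cref{theorem:termination}, every such run terminates. Because the finitely many initial Instantiate steps come before anything else, they do not interfere with the theorem's hypothesis.

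Throughout the run, $G$ contains $G_0$: InstanceDel is never used, and the backtrack rules only add to $G$. Hence $\clauses(G)\supseteq S$. Now consider any final state $(\Theta;\Gamma;N_1;G;k;D\cdot\tau)$. By \Cref{theorem:stuckStates}, either $D=\bot$, or $D=\top$ and $\Gamma$ is a \theory-satisfiable model of $\clauses(G)$, or some rule of the listed set is still applicable in a reasonable way. The third case is impossible because the state is final, and all listed rules are among those we still allow. In the second case, $\Gamma$ propositionally satisfies every clause of $S$ and $\Gamma$ is \theory-satisfiable. Taking an algebra $\mathcal{A}\in\mathcal{T}$ with $\mathcal{A}\models\Gamma$, we get $\mathcal{A}\models S$, which contradicts the \theory-unsatisfiability of $S$. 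So $D=\bot$, that is, \ICLF{} derives $\bot$.

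The step I expect to need the most care is checking that \Cref{theorem:stuckStates} really applies at a final state. A reasonable strategy is a restriction, so the rule it promises must be applicable \emph{under} that restriction, and the theorem asserts exactly this. A second point is that the initial instantiations must not violate the invariant of \Cref{lemma:preferConflict}. This is guaranteed because they happen on the empty trail, where nothing is false for a proper prefix. There is one more degenerate case: if $S$ contains the empty clause, the paper's assumption that $N$ contains no empty clause rules it out, so $S$ is nonempty and all its clauses are nonempty.
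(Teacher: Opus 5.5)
Your proof is correct and follows the same route as the paper: you instantiate $S$ on the empty trail, then run the remaining rules exhaustively under a reasonable strategy, using \Cref{theorem:termination} to reach a final state and \Cref{theorem:stuckStates} together with $S\subseteq\clauses(G)$ to rule out every outcome except $D=\bot$. The differences are cosmetic: you allow every rule except the five that must be applied only finitely often (the paper uses just Decide, Propagate, Conflict, \theory-Conflict, Explain, Resolve and BacktrackClassic), and you spell out the final semantic contradiction via an algebra $\mathcal{A}\in\mathcal{T}$.
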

\newcommand{\corollaryCompletenessProof}{
\begin{proof}
	We first apply Instantiate with all clauses in $S$, which is possible by the definition of $\gnd(N)$ and the fact that the trail is empty in the start state.
	Now, the rules Decide, Propagate, Conflict, \theory-Conflict, Explain, Resolve and BacktrackClassic are applied exhaustively following a reasonable strategy.
	None of these rules deletes clauses from $G$.
	By \Cref{theorem:termination}, applying these rules exhaustively terminates.
	By \Cref{theorem:stuckStates}, $\bot$ must have been derived because $S\subseteq\clauses(G)$ is \theory-unsatisfiable, so there cannot be a \theory-satisfiable model of $\clauses(G)$.
\end{proof}
}
\ifthenelse{\boolean{proofsInPaper}}{\corollaryCompletenessProof}{}

\begin{restatable}{definition}{definitionFOLaware}\label{definition:FOLaware}
	A strategy for \ICLF{} is called \emph{first-order aware} if it is reasonable and the following are true:
	\begin{enumerate}
		\item If Instantiate is applicable with a propositionally false instance, then this is always preferred over applying \theory-Conflict.
		\item At no point during the run is there a clause in the clause set that has an instance that is propositionally false for a proper trail prefix.
		\item Whenever the rightmost literal on the trail is a decision, no clause in the clause set has an instance that is propositionally false on the trail.
	\end{enumerate}
\end{restatable}

\begin{restatable}{theorem}{theoremNonRedundantN}\label{theorem:nonRedundantN}
	If $(\emptyset;\eps;N';\emptyset;0;\top)\shortruleapp{\ICLF{}}{$\ast$}(\Theta;\Gamma;N;G;k;D\cdot\tau)$ using a first-order aware strategy and one of the rules BacktrackClassic, BacktrackFOL and BacktrackCB is applicable to this state following a first-order aware strategy, then the clause $D$ added to $N$ by the respective backtrack rule is non-redundant with respect to $N$ and the trail induced ordering $\prec_\Gamma$.
\end{restatable}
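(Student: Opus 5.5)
The plan is to mirror the proof of \Cref{theorem:nonRedundant}, replacing $\clauses(G)$ by $\gnd(N)$ and \Cref{lemma:preferConflict} by the requirements of a first-order aware strategy. By \Cref{definition:clauseRedundancy}, $D$ is redundant with respect to $N$ only if \emph{every} ground instance of $D$ is redundant with respect to $\gnd(N)$. It therefore suffices to exhibit one ground instance that is non-redundant, and the natural witness is $D\tau$. So I will show $\gnd(N)^{\preceq_\Gamma D\tau}\not\models D\tau$.

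\textbf{First step: an analogue of \Cref{lemma:helpNonRedundant} for $N$.} I want to show that $\Gamma=\Gamma_1\Gamma_2$ with $D\tau$ propositionally false in $\Gamma_1$, and that no clause of $\gnd(N)$ is propositionally false in $\Gamma_1$.

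\begin{itemize}
\item As in the proof of \Cref{lemma:helpNonRedundant}, the conflict was reached either by Conflict or by \theory-Conflict. Afterwards only Explain, Resolve and Factorize were applied. These change neither the trail literals nor $N$.
\item \emph{Case Conflict.} By the third requirement of a reasonable strategy, the complement of the rightmost literal $K$ is not in $D\tau$. Hence $\Gamma=\Gamma_1K$ and $D\tau$ is false in $\Gamma_1$ by \Cref{property:conflictFalse} of \Cref{lemma:invariants}. Since $\Gamma_1$ is a proper prefix, the second requirement of \Cref{definition:FOLaware} gives that no instance of a clause in $N$ is propositionally false in $\Gamma_1$.
\item \emph{Case \theory-Conflict.} This is where the first requirement of \Cref{definition:FOLaware} enters. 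Suppose some $C\in N$ had an instance $C\sigma$ false in the whole trail $\Gamma$. Then either $C\sigma\in\clauses(G)$, in which case Conflict would have been preferred (reasonable strategy), or Instantiate would have been applicable with a propositionally false instance and preferred over \theory-Conflict.
\item \textbf{Main obstacle.} The subtle point is the side condition of Instantiate: it only allows false instances that are not false for the prefix before the last propagated/theory literal. Those excluded instances, however, are already ruled out by the second requirement. Likewise, a last literal that is a decision is covered by the third requirement. So with $\Gamma_1=\Gamma$, no instance of $N$ is false in $\Gamma_1$.
\end{itemize}

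I expect this case analysis, in particular the interplay of the Instantiate side conditions with the three requirements, to be the delicate part.

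\textbf{Second step: the ordering argument, copied from \Cref{theorem:nonRedundant}.} All literals of $D\tau$ are defined in $\Gamma_1$. Every clause in $\gnd(N)^{\preceq_\Gamma D\tau}$ has all literals $\preceq_\Gamma$ the maximal literal of $D\tau$, so these literals are also defined in $\Gamma_1$, because undefined literals are larger than all trail literals in $\prec_\Gamma$. None of these clauses is false in $\Gamma_1$, so each is propositionally true there, and $\Gamma_1\models\gnd(N)^{\preceq_\Gamma D\tau}$. If this set entailed $D\tau$, we would get $\Gamma_1\models D\tau$, contradicting that $D\tau$ is false in the consistent trail $\Gamma_1$.

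Hence $D\tau$ is a non-redundant ground instance of $D$. Therefore $D$ is non-redundant with respect to $N$ and $\prec_\Gamma$, regardless of whether $N$ here means the clause set before or after the learned clause is added.
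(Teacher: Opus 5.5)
Your proof is correct and follows the paper's argument. It uses the same case split on whether the conflict came from Conflict or \theory-Conflict, the same appeal to the first-order aware requirements to exclude false instances of clauses of $N$ on $\Gamma_1$ (respectively on all of $\Gamma$), and the same ordering argument with witness $D\tau$ from \Cref{theorem:nonRedundant}.

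Drop your closing remark, though. With respect to $N\cup\{D\}$ the clause $D$ is trivially redundant, since $D\tau\in\gnd(N\cup\{D\})^{\preceq_\Gamma D\tau}$, and your step that no clause in this set is false in $\Gamma_1$ fails for $D\tau$ itself. The statement is therefore about the clause set $N$ of the state before the backtrack rule is applied.
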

\newcommand{\theoremNonRedundantNProof}{
\begin{proof}
	We show the non-redundancy of $D$ by showing that $D\tau\in\gnd(D)$ is non-redundant with respect to $\gnd(N)$.
	Consider the set $\gnd(N)^{\preceq_\Gamma D\tau}$.
	Analogously to the proof of \Cref{lemma:helpNonRedundant}, we distinguish whether the conflict state was reached by Conflict or \theory-Conflict.

	First consider the case that the conflict state was reached by the rule Conflict.
	Then $\Gamma=\Gamma_1K$ and $D\tau$ is propositionally false in $\Gamma_1$.
	Due to the fact that a first-order aware strategy is used, no clause in $\gnd(N)$ can be propositionally false in $\Gamma_1$.
	Nevertheless, all literals of the clauses in $\gnd(N)^{\preceq_\Gamma D\tau}$ are propositionally defined in $\Gamma_1$, which means that $\Gamma_1\models\gnd(N)^{\preceq_\Gamma D\tau}$, from which, analogously to the proof of \Cref{theorem:nonRedundant}, it follows that $\gnd(N)^{\preceq_\Gamma D\tau}\not\models D\tau$, hence, $D$ is non-redundant with respect to $N$ and $\prec_\Gamma$.

	Now consider the case that the conflict state was reached by the rule \theory-Conflict.
	We argue that no clause in $\gnd(N)$ can be propositionally false in $\Gamma$.
	From this, the claim follows analogously to the case that the conflict state was reached by Conflict.
	Clearly, $\Gamma=\Gamma_1K$.
	If $K$ is a propagation or theory propagation, then no clause in $\gnd(N)$ can be propositionally false in $\Gamma$ because otherwise Conflict, possibly after Instantiate, would have been applied in a first-order aware strategy instead of \theory-Conflict.
	If $K$ is a decision, then this follows directly from the fact that a first-order aware strategy is used.
\end{proof}
}
\ifthenelse{\boolean{proofsInPaper}}{\theoremNonRedundantNProof}{}

\begin{restatable}{theorem}{theoremStuckStatesN}\label{theorem:stuckStatesN}
	If $(\emptyset;\eps;N';\emptyset;0;\top)\shortruleapp{\ICLF{}}{$\ast$}(\Theta;\Gamma;N;G;k;D\cdot\tau)$ using a first-order aware strategy, then either $D=\bot$, or $D=\top$, $\Gamma$ is a \theory-satisfiable model of $\clauses(G)$ and no instance of a clause in $N$ is propositionally false in $\Gamma$, or a rule from the set $\{$Instantiate, Decide, Propagate, Conflict, \theory-Conflict, Explain, Resolve, BacktrackFOL$\}$ is applicable following a first-order aware strategy.
	Additionally, Instantiate only has to be applied with instances whose atoms are already contained in $\Theta$.
\end{restatable}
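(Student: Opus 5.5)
The plan is to mirror the proof of \Cref{theorem:stuckStates}, checking at each step that the rule we exhibit also respects the extra requirements of \Cref{definition:FOLaware}. We split on the conflict closure.

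\textbf{Case $D=\top$.} We proceed in four steps.
\begin{enumerate}
	\item If some $C\in N$ has a grounding $\sigma$ with $C\sigma$ propositionally false in $\Gamma$, we use the second requirement of a first-order aware strategy. It says no such instance is false for a proper prefix, so $\Gamma=\Gamma_1K$ and $C\sigma$ is not false in $\Gamma_1$. Since every atom of $C\sigma$ occurs on $\Gamma$, \Cref{property:thetaContains} gives $\atom(C\sigma)\subseteq\Theta$, which yields the additional claim.
	\begin{itemize}
		\item If $C\sigma\in\clauses(G)$, then Conflict is applicable, and it is preferred in any reasonable strategy.
		\item Otherwise, suppose $K$ is a theory propagation or a propagation annotated with a closure. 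Then Instantiate is applicable by its side condition on $\Gamma_1K$.
		\item If $K$ is a decision, the third requirement forbids the situation altogether.
	\end{itemize}
	\item If some atom of $\Theta$ is undefined, Propagate or Decide applies as before. We must argue that one admissible choice respects requirements 2 and 3, i.e.\ that Decide does not make an $N$-instance false on the new trail. I would choose the polarity of the decided literal so that no instance of a clause in $N$ becomes false. If both polarities would make some instance false, each such instance has all other literals false in $\Gamma$, so it is a unit instance. Instantiate can add it (its atoms lie in $\Theta$ and it is not false in $\Gamma$), after which Propagate applies instead of Decide.
	\item If all atoms are defined and some clause of $\clauses(G)$ is false, Conflict applies.
	\item If no clause of $\clauses(G)$ is false and no $N$-instance is false (by step 1), then either $\Gamma$ is a \theory-satisfiable model of $\clauses(G)$, which is the second alternative of the theorem, or \theory-Conflict with $\neg\Gamma\cdot\{\}$ applies. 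The first requirement is respected here because no false instance can be instantiated.
\end{enumerate}

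\textbf{Case $D\notin\{\top,\bot\}$.} We reuse the argument of \Cref{theorem:stuckStates}.
\begin{enumerate}
	\item Explain is available for theory propagations, so Resolve can be applied to every non-decision literal.
	\item Using \Cref{lemma:conflictOnPropagation} and \Cref{theorem:termination}, Resolve and Factorize satisfy the third reasonable requirement.
	\item Either $\bot$ is derived, or there is a decision on the trail and BacktrackFOL applies with $\Gamma_1=\eps$.
	\item BacktrackFOL to $\eps$ (or any prefix below the decision) must respect requirement 2 of \Cref{definition:FOLaware} after $D$ is added to $N$. On the empty trail no clause can be false, since $N$ does not contain $\bot$: learned clauses are non-empty here, as otherwise we are in the $D=\bot$ case.
	\item Requirement 3 is vacuous on $\eps$.
\end{enumerate}

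\textbf{Main obstacle.} The hardest step is the model-building case, specifically showing that Decide (or its replacement by Instantiate plus Propagate) can always be made compatible with requirements 2 and 3. Whether an instance could become false for a proper prefix after a propagation also has to be checked. Propagate only extends the trail, so any newly false $N$-instance is false only on the full trail and not on a proper prefix. It is handled by step 1 in the next state, via Instantiate followed by Conflict.
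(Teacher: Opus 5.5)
Your proposal is correct and follows essentially the same route as the paper's proof. A false $N$-instance is handled by Conflict or Instantiate, using requirements 2 and 3 to see that the instance is false only on the full trail ending in a propagation. An undefined atom is handled by Propagate or Decide, with a harmful decision replaced by Instantiate and Propagate; the remaining case goes to \theory-Conflict with $\neg\Gamma\cdot\{\}$; and the conflict-analysis case reduces to \Cref{theorem:stuckStates} with BacktrackFOL to $\eps$. Your polarity choice before falling back to Instantiate, and your explicit check of the first-order aware requirements after backtracking, are small refinements of the paper's argument rather than a different approach.
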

\newcommand{\theoremStuckStatesNProof}{
\begin{proof}
	It is obvious that the start state is valid in a first-order aware strategy.
	First assume that $D=\top$, but $\Gamma$ is not a \theory-satisfiable model of $\clauses(G)$ or there is an instance of a clause in $N$ that is propositionally false in $\Gamma$.
	Further assume that there is a $C\in N$ and a grounding $\sigma$ such that $C\sigma$ is propositionally false in $\Gamma$.
	If $C\sigma\in\clauses(G)$, then Conflict is applicable.
	Otherwise, due to the fact that a first-order aware strategy is used, the rightmost literal in $\Gamma$ is a propagation or theory propagation, so Instantiate is applicable.
	Due to the fact that $C\sigma$ is propositionally false in $\Gamma$, all of its atoms are indeed contained in $\Theta$ by \Cref{property:thetaContains} of \Cref{lemma:invariants}.
	Now consider the case that no instance of a clause in $N$ is propositionally false in $\Gamma$.
	If there is some $A\in\Theta$ that is propositionally undefined in $\Gamma$, then either Propagate or Decide are applicable with $A$ or $\neg A$.
	In the case of Propagate, it is not possible that the first-order aware strategy is violated because by assumption, there is no propositionally false instance in $\Gamma$, which becomes the longest proper trail prefix after applying Propagate.
	If the application of Decide would cause a propositionally false instance, then either this instance can be used to propagate the complement instead, or Instantiate can be applied instead.
	Again, clearly all atoms of this instance are already contained in $\Theta$.
	If all $A\in\Theta$ are propositionally defined in $\Gamma$, then all instances in $G$ must be propositionally true, which means that by our assumption, $\Gamma$ is not \theory-satisfiable.
	\theory-Conflict can be applied in this case, for example, with $\neg\Gamma\cdot\{\}$.
	This does not violate the first-order aware strategy because by assumption, no instance is propositionally false in $\Gamma$.

	Now assume that $D\notin\{\top,\bot\}$.
	Analogously to the proof of \Cref{theorem:stuckStates}, either $\bot$ can be derived or a state can be reached in which BacktrackFOL is applicable according to a reasonable strategy.
	This is because none of the rules Explain, Resolve and Factorize modifies the literals on the trail or the clause set.
	Now BacktrackFOL can be applied to go back to any trail prefix $\Gamma_1$ for which $D$ has no false instance and such that the removed suffix contains a decision; in particular, $\Gamma_1=\eps$ is possible.
\end{proof}
}
\ifthenelse{\boolean{proofsInPaper}}{\theoremStuckStatesNProof}{}

It should be noted that although following a first-order aware strategy does not cause any stuck states [\Cref{theorem:stuckStatesN}], it limits the applicability of some rules.
For instance, Propagate and \theory-Propagate cannot be applied if there is an uninstantiated false instance; instead, this false instance has to be instantiated.
Additionally, Decide cannot be applied if doing so would make some instance false.
In this sense, the requirements of a first-order aware strategy in \ICLF{} are very similar to the requirements of a regular run in SCL(FOL)~\cite{DBLP:journals/corr/abs-2302-05954}.
Moreover, the first-order aware strategy also restricts the applicability of \theory-Learn.
If a clause that has a propositionally false instance should be added by \theory-Learn, Restart can first be used to go back to a point on the trail at which no instance is propositionally false.
Beyond that, while detecting conflicts and propagations with respect to ground clause sets such as $G$ is straightforward, both tasks are NP-complete in the worst case with respect to first-order clause sets such as $N$; nevertheless, the recent lifting of the two-watched literal scheme to first-order logic shows that such clause sets can still be handled efficiently in practice~\cite{DBLP:conf/ijcar/BriefsBGLSW26}.

\section{Simulation of Other Calculi}\label{section:simulation}
In this section, we show that \ICLF{} can simulate various other calculi.
To avoid confusion, we mark variables from the simulated calculi with an overline, e.g., $\overline N$, and refer to variables from \ICLF{} without any overlines, e.g., by $N$.

First, we show that \ICLF{} can simulate SCL(FOL) \cite{DBLP:journals/corr/abs-2302-05954}.
This means that we associate each possible SCL(FOL) state with a set of \ICLF{} states.
Then, we show that for each state in a given SCL(FOL) run, we can find an associated \ICLF{} state and \ICLF{} rule applications between these states.
The simulation is linear in the sense that for each SCL(FOL) rule application, only a constant number of \ICLF{} rule applications has to be performed.
SCL(FOL) is an algorithm for first-order logic without equality, so we assume \theory{} to be the \emph{free} theory, i.e., the theory that contains all $\Sigma$-algebras.
For SCL(FOL), we write the status closure as $\overline D\cdot\overline\tau$, assuming that $\top=\top\cdot\{\}$ and $\bot=\bot\cdot\{\}$, just as we do for \ICLF{}.
Let $\prec_B$ be a well-founded, total and strict ordering on ground atoms such that for any ground atom $A$, there are only finitely many ground atoms $A'$ with $A'\prec_BA$.
$\prec_B$ is lifted to literals by comparing the respective atoms and to clauses by its multiset extension.

\begin{restatable}{definition}{definitionAssociatedSCL}\label{definition:associatedSCL}
	We say that an \ICLF{} state $(\Theta;\Gamma;N;G;k;D\cdot\tau)$ is \emph{associated} with a given SCL(FOL) state $(\overline{\Gamma};\overline{N};\overline{U};\overline{\beta};\overline{k};\overline{D}\cdot\overline\tau)$ if all of the following  hold:
	\begin{enumerate}
		\item $D\cdot\tau=\overline D\cdot\overline\tau$.
		\item For all $A\in\Theta$, $A\prec_B\overline\beta$ and there is an atom $A'\in\atom(N)$ such that $A$ is a ground instance of $A'$.
		\item $N=\overline N\cup\overline U$.
		\item $k\geq\overline k$. If $\overline D\cdot\overline\tau=\top$, then $k=\overline k$.
		\item If $\Gamma=K_1\dots K_n$ and $\overline\Gamma=\overline K_1\dots\overline K_{\overline n}$, then $n\geq\overline n$ and for all $i\in\{1,\dots,\overline n\}$, $K_i$ and $\overline K_i$ match.
			By match, we mean that either $K_i=\overline K_i=L^j$ or that $K_i=L^{C\cdot\sigma}$, $\overline K_i=L^{\overline C\cdot\overline\sigma}$ and $\overline C\cdot\overline\sigma$ is the result of exhaustively factorizing $L$ in $C\cdot\sigma$.
			If $\overline D\cdot\overline\tau=\top$, then $n=\overline n$.
	\end{enumerate}
\end{restatable}

\begin{restatable}[\ICLF{} Simulates SCL(FOL)]{theorem}{theoremSimulationSCL}\label{theorem:simulationSCL}
	Let $N'$ be a first-order clause set and $\beta'$ be a ground literal.
	Then for any regular SCL(FOL) run $$(\eps;N';\emptyset;\beta';0;\top)=S_0\shortruleapp{SCL(FOL)}{}S_1\shortruleapp{SCL(FOL)}{}\dots\shortruleapp{SCL(FOL)}{}S_m,$$
	there exists a reasonable \ICLF{} run $$(\emptyset;\eps;N';\emptyset;0;\top)=T_0\shortruleapp{\ICLF{}}{$\leq3$}T_1\shortruleapp{\ICLF{}}{$\leq3$}\dots\shortruleapp{\ICLF{}}{$\leq3$}T_m$$ such that for all $i\in\{0,\dots,m\}$, $T_i$ is associated with $S_i$.
	Here, $\shortruleapp{\ICLF{}}{$\leq3$}$ means that at most three \ICLF{} rules are applied to reach the following state.
	Only the \ICLF{} rules Instantiate, InstanceDel, Decide, Propagate, Conflict, \theory-Atom, Resolve, Factorize and BacktrackFOL are needed.
\end{restatable}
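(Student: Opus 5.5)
We argue by induction on $i$, constructing $T_{i+1}$ from $T_i$ by a case distinction on the SCL(FOL) rule that takes $S_i$ to $S_{i+1}$. For $i=0$, $T_0$ is associated with $S_0$: $\Theta=\emptyset$, $N=N'=\overline N\cup\emptyset$, both trails are empty, both levels are $0$, and both conflict closures are $\top$. In the step we assume $T_i$ is associated with $S_i$ and was reached by a reasonable run. We then give at most three \ICLF{} rule applications that produce an associated $T_{i+1}$ and keep the run reasonable. Since the free theory is used, \theory-Propagate and \theory-Conflict never occur, and \theory-Atom is used only to put atoms into $\Theta$. Every such atom will be a ground instance of an atom of $N$ and $\prec_B$-smaller than $\overline\beta$, which maintains the second property of \Cref{definition:associatedSCL}.

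For the model-building rules of SCL(FOL), we have $\overline D\cdot\overline\tau=\top$, so the trails coincide literalwise and $k=\overline k$.
\begin{enumerate}
\item \textbf{SCL Decide.} A decision on a literal $L$ (with $\atom(L)\prec_B\overline\beta$, $L$ an instance of a literal in $\overline N\cup\overline U$) is simulated by \theory-Atom, if $\atom(L)\notin\Theta$, followed by Decide.
\item \textbf{SCL Propagate.} A propagation from $\overline C\cdot\overline\sigma$ with $\overline C\in\overline N\cup\overline U$ is simulated by Instantiate with a closure $C\cdot\sigma$ whose exhaustive factorization on $L$ is $\overline C\cdot\overline\sigma$. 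Here we take $C$ as the original clause in $N$ and $\sigma$ as the grounding. Then Propagate is applied. Instantiate applies because the instance is not false: it propagates, and by regularity the other literals are false only in the current trail, so it is also not false for a prefix. The resulting trail literals then ``match'' in the sense of the definition.
\item \textbf{SCL Conflict.} Conflict is simulated by Instantiate followed by Conflict. Instantiate is allowed because in a regular SCL(FOL) run a conflict arises only right after a propagation. So the instance is not false for $\Gamma_1$ when $\Gamma=\Gamma_1K$ with $K$ a propagation, which is exactly the second disjunct of Instantiate's side condition. If the instance is already in $G$, Conflict alone suffices.
\end{enumerate}
Reasonableness requirements 1 and 2 follow from the SCL(FOL) regularity conditions: conflict is preferred, and no decision produces a false clause.

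For the conflict analysis rules, SCL Skip is simulated by zero \ICLF{} steps. The \ICLF{} trail keeps the skipped literal, which is why the definition only requires $n\ge\overline n$ and $k\ge\overline k$ during analysis. SCL Factorize is simulated by \ICLF{} Factorize. SCL Resolve on $L^{\overline C\cdot\overline\sigma}$ is simulated by \ICLF{} Resolve on $L^{C\cdot\sigma}$. Our $\resolve$ first exhaustively factorizes $L$ in $C\cdot\sigma$, which yields exactly $\overline C\cdot\overline\sigma$ by the matching property. Hence the resulting closures coincide, assuming the same mgu choice up to renaming, so the first property of \Cref{definition:associatedSCL} is maintained. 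SCL Backtrack is simulated by BacktrackFOL to the prefix $\Gamma_1$ of length equal to the SCL target trail. The SCL side condition, that the learned clause has no false instance there and a decision is removed, gives BacktrackFOL's premise. After this step the trails agree again and $k=\overline k$.

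The third reasonableness requirement holds because SCL(FOL) resolves the rightmost literal away before backtracking. SCL Grow and the restart-like rules are simulated by InstanceDel and Restart, or by re-instantiation. The clause set component matches because learned clauses go to $N$ in \ICLF{} and to $\overline U$ in SCL.

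I expect the main obstacle to be the Instantiate preconditions, in particular in the Conflict case, and the requirement $C\sigma\notin\clauses(G)$. Both need the SCL(FOL) regularity invariants together with \Cref{lemma:preferConflict} and \Cref{corollary:gUnique}. The fallback is: if the instance is already in $G$, skip Instantiate; if a stale instance blocks, remove it with InstanceDel. This keeps each simulation step within three \ICLF{} steps.
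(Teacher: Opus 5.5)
Your simulation follows the paper's proof. It has the same induction and case split, simulates Decide by \theory-Atom plus Decide, uses (InstanceDel,) Instantiate plus Propagate or Conflict, takes zero steps for Skip, and maps Factorize, Resolve and BacktrackFOL one-to-one. Reasonableness comes from regularity and from the rightmost literal being resolved away.

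The one place you deviate is Grow, and there your treatment is vague and at odds with the statement. The paper observes that enlarging $\overline\beta$ only weakens the second condition of \Cref{definition:associatedSCL}. So $T_i$ is already associated with $S_{i+1}$, and no \ICLF{} step is needed. You instead invoke ``InstanceDel and Restart, or re-instantiation'' without saying which prefix or which instances. InstanceDel plays no role here. Restart is not among the rules the theorem says suffice. And since Grow leaves the trail unchanged in the paper's reading, a Restart to a proper prefix would break $n=\overline n$ and $k=\overline k$. If the version of Grow you have in mind resets the SCL trail to $\eps$ and the level to $0$, then a single Restart with $\Gamma_1=\eps$ is the right step. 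In that case you must say so explicitly and add Restart to the list of rules used.

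Your InstanceDel/Instantiate fallback, which you rightly flag as the delicate point, is missing two checks.
\begin{enumerate}
\item InstanceDel requires that no trail literal is annotated with the deleted closure. Such a literal would be a true literal of the ground clause. In the Propagate case every literal of $(C\lor L)\sigma$ is false or undefined, and in the Conflict case $D\sigma$ is false in $\Gamma$. The paper argues this via \Cref{property:gammaPropagations} of \Cref{lemma:invariants}, with \Cref{corollary:gUnique} ensuring only one such closure must be deleted.
\item In the Propagate case, Instantiate adds atoms to $\Theta$, so you must verify the second association condition for them. This follows from SCL's side condition $(C\lor L)\sigma\prec_B\overline\beta$.
\end{enumerate}
Finally, your claim that the propagating instance is not false for any prefix is unnecessary. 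Instantiate only needs it not to be false in $\Gamma$, which holds because $L\sigma$ is undefined.
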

\newcommand{\theoremSimulationSCLProof}{
\begin{proof}
	The proof is by induction on $m$.
	The start state $(\emptyset;\eps;N';\emptyset;0;\top)$ of \ICLF{} is obviously associated with $(\eps;N';\emptyset;\beta';0;\top)$, the start state of SCL(FOL).
	Now let $S_{m'}=(\overline{\Gamma};\overline{N};\overline{U};\overline{\beta};\overline{k};\overline{D}\cdot\overline\tau)$ and $T_{m'}=(\Theta;\Gamma;N;G;k;D\cdot\tau)$.
	We show the induction step by a case distinction on the SCL(FOL) rule that is applied to reach $S_{m'+1}$ from $S_{m'}$.
	For readability, we do not mention properties that neither the applied SCL(FOL) rule nor any of the applied \ICLF{} rules change, as these trivially follow from the induction hypothesis.
	We first argue that the first two requirements of a reasonable strategy in \ICLF{} do not restrict the simulation.
	If the rule Conflict is applicable in \ICLF{}, then it is also applicable for any associated SCL(FOL) state because the first-order clause that the conflict clause is an instance of is also contained in the SCL(FOL) state by \Cref{property:Ginstances} of \Cref{lemma:invariants} and the fact that $N=\overline N\cup\overline U$.
	A regular SCL(FOL) run follows the same requirements about preferring Conflict over Propagate and Decide and Decide not enabling Conflict as a reasonable \ICLF{} run.
	Hence, whenever Propagate and Decide can be applied in a regular SCL(FOL) run, they can also be applied in a reasonable \ICLF{} run.

	Propagate:
	Let $C\lor L$ be the clause used for propagation and $\sigma$ be the grounding such that $L\sigma$ is the propagated literal.
	Since $C\lor L\in(\overline N\cup\overline U)$, $C\lor L\in N$.
	If $(C\lor L)\cdot \sigma\in G$, then Propagate can be applied in \ICLF{} right away, otherwise Instantiate has to be applied first.
	If Instantiate is not applicable because $(C\lor L)\sigma\in\clauses(G)$, then InstanceDel can be applied with this other instance because by \Cref{property:gammaPropagations} of \Cref{lemma:invariants}, this other instance cannot be an annotation of a literal on the trail because otherwise, Propagate would not be applicable with $(C\lor L)\sigma$.
	By \Cref{corollary:gUnique}, there is at most one such instance in $G$ that has to be deleted.
	It is not possible that the new instance is propositionally false in $\Gamma$ because the literals in $\Gamma$ and $\overline\Gamma$ are the same and Conflict would have been applied in a regular SCL(FOL) run instead.
	The atoms added to $\Theta$ by Instantiate are smaller than $\overline\beta$ because $(C\lor L)\sigma\prec_B\{\overline\beta\}$ by the premise of SCL(FOL) Propagate, and they are obviously ground instances of atoms in $\atom(N)=\atom(\overline N\cup\overline U)$.
	The propagated literal is $L\sigma$ in both cases and the annotation in SCL(FOL) is the result of exhaustively factorizing the annotation in \ICLF{}.

	Decide:
	Let $L$ be a literal occurring in $\overline N\cup\overline U$ and $L\sigma$ be the decided literal.
	If $\atom(L\sigma)\in\Theta$, then Decide can be applied in \ICLF{} right away, otherwise \theory-Atom can be applied with $\atom(L\sigma)$ first.
	The atom added to $\Theta$ is smaller than $\overline\beta$ and a ground instance of an atom in $\atom(N)=\atom(\overline N\cup\overline U)$ by the premise of SCL(FOL) Decide.
	Both SCL(FOL) Decide and \ICLF{} Decide add the same ground literal $L\sigma$ with the same annotation $k+1=\overline{k}+1$ to the trail.

	Conflict:
	Let $C\in\overline N\cup\overline U$ be the conflicting clause and $\sigma$ be the grounding for $C$.
	If $C\cdot\sigma$ is contained in $G$, then Conflict can be applied in \ICLF{} immediately, otherwise Instantiate has to be applied first.
	If Instantiate is not applicable because $C\sigma\in\clauses(G)$, then InstanceDel must be applied with the other instance first, which is possible because $C\sigma$ is false in $\Gamma$, so the other instance cannot be an annotation of a literal on the trail.
	By \Cref{corollary:gUnique}, there is at most one such instance in $G$ that has to be deleted.
	When Conflict is applicable in a regular SCL(FOL) run, then the rightmost literal on the trail is a propagation that occurs negatively in the conflict clause \cite[Lemma 11]{DBLP:journals/corr/abs-2302-05954}.
	Hence, by \Cref{property:gammaUnique} of \Cref{lemma:invariants}, the remaining requirements of Instantiate are met.
	After applying Conflict, the conflict closures in SCL(FOL) and \ICLF{} are the same.

	Skip:
	$T_{m'}$ is also associated with $S_{m'+1}$, so no \ICLF{} rule has to be applied.

	Factorize:
	The SCL(FOL) rule Factorize can be simulated by the \ICLF{} rule Factorize.
	
	Resolve:
	The SCL(FOL) rule Resolve can be simulated by the \ICLF{} rule Resolve.
	Propagation annotations in SCL(FOL) are already exhaustively factorized, whereas \ICLF{} factorizes them in Resolve.

	Backtrack:
	The SCL(FOL) rule Backtrack can be simulated by the \ICLF{} rule BacktrackFOL.
	As the conflict clauses are the same, the SCL(FOL) trail is a prefix of the \ICLF{} trail in the state that Backtrack is applied to and SCL(FOL) skips over a decision, \ICLF{} can go back to the same point on the trail that SCL(FOL) goes back to.
	What is left to argue is that BacktrackFOL can be applied in a reasonable strategy.
	In the simulation, we use the \ICLF{} rule Conflict only when SCL(FOL) uses its Conflict rule, and whenever SCL(FOL) resolves, \ICLF{} also resolves in the simulation.
	Since in SCL(FOL), the rightmost literal on the trail is always resolved exhaustively during conflict resolution \cite[Lemma 11]{DBLP:journals/corr/abs-2302-05954}, BacktrackFOL is applicable in a reasonable strategy in \ICLF{} whenever Backtrack is applicable in SCL(FOL).

	Grow:
	As the new bound is $\prec_B$-larger than the old bound $\overline\beta$, $T_{m'}$ is also associated with $S_{m'+1}$, so no \ICLF{} rule has to be applied.
\end{proof}
}
\ifthenelse{\boolean{proofsInPaper}}{\theoremSimulationSCLProof}{}

A further lemma in the appendix shows that if SCL(FOL) reaches a stuck state, then the associated \ICLF{} state reached in the corresponding simulation is also stuck according to a suitable definition of stuckness [\Cref{lemma:SCLStuckSimulation}].

SCL(FOL) has been shown to simulate non-redundant ground superposition~\cite{BrombergerJW23,BrombergerDW25}.
Since \ICLF{} simulates SCL(FOL), this result also transfers to \ICLF{}.
For \ICLF{}, however, we can show a stronger result: it can simulate any resolution inference that does not yield a tautology, provided that neither parent clause becomes tautological under the most general unifier of the resolved literals.
If a parent clause does become tautological, the resolvent is already subsumed by the other parent clause.
This does not contradict the non-redundancy guarantee for reasonable strategies, because learned clauses are guaranteed to be non-redundant only with respect to $G$.
Since \ICLF{} does not force all clauses to be instantiated, a clause that is redundant with respect to $N$ may still be non-redundant with respect to $G$.

\begin{restatable}[\ICLF{} Simulates Resolution]{remark}{remarkSimulationResolution} \label{remark:simulationResolution}
	Let $(\Theta;\Gamma;N;G;k;\top)$ be an \ICLF{} state reached in a reasonable run, and let $C_1\lor L_1,C_2\lor L_2\in N$ be such that $L_1$ and $\comp(L_2)$ are unifiable.
	Let $\mu=\mgu(L_1,\comp(L_2))$.
	If the resolvent of the clauses is not a tautology and neither parent clause becomes tautological under $\mu$, then \ICLF{} can learn it or a clause subsuming it by following a reasonable strategy as follows.
	We first apply Restart with $\Gamma_1=\eps$ to empty the trail, and then InstanceDel on all instances in $G$ to empty the set of ground instances.
	Next, we apply Instantiate with $(C_1\lor L_1)\cdot(\mu\sigma)$ and $(C_2\lor L_2)\cdot(\mu\sigma)$, where we pick $\sigma$ such that no two atoms $A_1,A_2\in\atom(C_1\cup C_2\cup\{L_1,L_2\})$ with $A_1\mu\neq A_2\mu$ become equal.
	This can, for example, be achieved by instantiating all variables with different fresh constants.
	Then, we decide the complements of all literals except for $L_1\mu\sigma$ and $L_2\mu\sigma$.
	This is possible because the resolvent of the clauses is not a tautology, because neither parent clause is tautological under $\mu$, and because of the way we chose $\sigma$.
	It also does not contradict the reasonable strategy because neither of the two ground instances can be false at this point.
	We then propagate $L_1\mu\sigma$ and apply Conflict with $(C_2\lor L_2)\cdot(\mu\sigma)$.
	Resolving exhaustively with $L_1\mu\sigma$ yields the desired resolvent, or, due to exhaustive factorization, a clause subsuming it.
	The derived clause is either $\bot$ or can be learned using BacktrackClassic.
\end{restatable}

Now, we show that \ICLF{} can simulate clause learning in SCL(T) \cite{DBLP:conf/vmcai/BrombergerFW21}.
Unlike in the simulation of SCL(FOL), we do not simulate every rule application.
Instead, we simulate the sequence of rule applications that leads to the application of Conflict and show that \ICLF{} can learn the same clause that SCL(T) learns.
We assume that a constrained clause $\Lambda\parallel C$ is just a different notation for the clause $\neg\Lambda\lor C$ and note the status closure for SCL(T) as $\overline D\cdot\overline\tau$, assuming that $\top=\top\cdot\{\}$.
Let $N'$ be a pure, abstracted clause set according to the requirements of SCL(T) and $B'$ be a finite sequence of constants of background sorts.

\begin{restatable}{definition}{definitionAssociatedSCLT} \label{definition:associatedSCLT}
	We say that an \ICLF{} state $(\Theta;\Gamma;N;G;k;D\cdot\tau)$ is \emph{weakly associated} with a given SCL(T) state $(\overline M;\overline N;\overline U;\overline B;\overline k;\overline D\cdot\overline\tau)$ if $N=\overline N\cup\overline U$, $D\cdot\tau=\overline D\cdot\overline\tau$, and the background literals in $\Gamma$ form a prefix of $\Gamma$.
	It is called \emph{associated} if additionally, the set of literals in $\overline M$ is a subset of the set of literals in $\Gamma$, the foreground literals in $\overline M$ form a subsequence of $\Gamma$, and for all propagations $L^{\overline C\cdot\overline\sigma}$ in $\overline M$, the corresponding literal $L$ in $\Gamma$ is annotated with a closure $C\cdot\sigma$ such that $\overline C\cdot\overline\sigma$ is the result of exhaustively factorizing $L$ in $C\cdot\sigma$, and for all decisions $L^k$ in $\overline M$, the corresponding literal $L$ in $\Gamma$ is also a decision.
\end{restatable}

\begin{restatable}[Simulation of Conflict]{theorem}{theoremSimulationSCLTConflict} \label{theorem:simulationSCLTConflict}
	For any regular SCL(T) run
	$$(\eps;N';\emptyset;B';0;\top)\shortruleapp{SCL(T)}{$\ast$}S_0\shortruleapp{SCL(T)}{$\ast$(no Confl)}S_1\shortruleapp{SCL(T)}{Conflict}S_2$$
	where $S_0$ is not a conflict state, and any state $T_0$ reached from $(\emptyset;\eps;N';\emptyset;0;\top)$ in a reasonable \ICLF{} run and weakly associated with $S_0$, there exists a sequence of reasonable \ICLF{} rule applications of the rules Instantiate, InstanceDel, \theory-Atom, Restart, Decide and Propagate from $T_0$ to a state $T_1$ that is associated with $S_1$, and Conflict can be applied to $T_1$ following a reasonable strategy, reaching a state $T_2$ that is associated with $S_2$.
\end{restatable}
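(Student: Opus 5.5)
The plan is to rebuild the trail of $S_1$ inside \ICLF{} from a clean start, instead of tracking every SCL(T) step between $S_0$ and $S_1$.

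\emph{Clean start.} Starting from $T_0$, I apply Restart with $\Gamma_1=\eps$ and then InstanceDel on every closure in $G$. This is allowed because the trail is empty, so no closure annotates a trail literal. The clause set is untouched, so $N=\overline N\cup\overline U$ still holds; this is all we keep from weak association. $N$ does not change between $S_0$ and $S_1$ either, because no Backtrack (and hence no learning) can occur without a Conflict. Let $\overline M$ be the trail of $S_1$.

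\emph{Background prefix.} I first put on the trail all ground background literals needed later: those of $\overline M$, and the complements of the constraint literals $\neg\Lambda\sigma$ of every closure used in $\overline M$ and in the conflict step. Each one is added by \theory-Atom (if its atom is not in $\Theta$) followed by Decide. This set is \theory-consistent, because SCL(T) only uses instances whose constraints are satisfiable together with the background part of $\overline M$. Since $G=\emptyset$ at this point, no Decide can enable Conflict, and the reasonable-strategy conditions hold trivially. Afterwards the background literals form a prefix of $\Gamma$, as weak association requires. Every constraint literal of a used instance is now propositionally false, so the ground clause $\neg\Lambda\sigma\lor C\sigma$ behaves exactly like $C\sigma$ does in SCL(T).

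\emph{Foreground replay.} I then walk through the foreground literals of $\overline M$ from left to right.
\begin{itemize}
\item For a decision $L^{\overline k}$, I apply \theory-Atom if needed and then Decide.
\item For a propagation $L^{\overline C\cdot\overline\sigma}$, I apply Instantiate with the underlying clause of $N$ and its grounding, and then Propagate. The exhaustive factorization of the \ICLF{} annotation then yields $\overline C\cdot\overline\sigma$, as in the SCL(FOL) simulation (\Cref{theorem:simulationSCL}).
\end{itemize}
Instantiate is legal: $G$ contains only the instances we added, and by \Cref{corollary:gUnique} at most one of them can clash. The new instance is not propositionally false, since otherwise SCL(T) would have applied Conflict before $S_1$, which is excluded by ``no Confl''. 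For the same reason no instance of $G$ is false at any intermediate point, so the first requirement of \Cref{definition:reasonable} holds. Regularity of the SCL(T) run (Decide never enables Conflict) gives the second requirement. The resulting state $T_1$ is associated with $S_1$: the literals of $\overline M$ appear in $\Gamma$, the foreground ones in the same order, decisions as decisions and propagations with matching annotations.

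\emph{Conflict step.} Let $D\cdot\tau$ be the SCL(T) conflict closure. I instantiate it, deleting a clashing instance via InstanceDel if necessary, and then apply Conflict. By the SCL(T) analogue of \cite[Lemma 11]{DBLP:journals/corr/abs-2302-05954}, the rightmost literal of $\overline M$ is a propagation whose complement occurs in $D\tau$. Hence $D\tau$ is not false for the trail prefix without that literal, so Instantiate's side condition and \Cref{lemma:conflictOnPropagation} are respected. The conflict closures of the two calculi then coincide, so $T_2$ is associated with $S_2$.

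\emph{Main obstacle.} The step I expect to be hardest is the mismatch between SCL(T)'s semantic constraint check and \ICLF{}'s purely propositional falsity. I would handle it by fixing the background decisions up front, as above. The risk is twofold: choosing them might make some already-instantiated clause false, or the Decides might violate the second reasonable-strategy condition. To avoid both, all background decisions are made while $G$ is still empty, and only afterwards are instances added. Each added instance is non-false because its foreground part is undefined or true exactly as in SCL(T).
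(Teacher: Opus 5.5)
Your proposal is correct and follows the paper's proof. Both restart to the empty trail, Decide the needed background literals first (using \theory-Atom where necessary), replay the foreground literals of $\overline M$ in order via Instantiate, Propagate and Decide, and finally instantiate the conflict instance and apply Conflict. Your two additions are sound and slightly tighten the argument. First, deleting all of $G$ right after the Restart makes the reasonable-strategy conditions immediate, since every instance then in $G$ is the reason of a propagation and so is never false. Second, also deciding the literals of the conflict clause's constraint $\Lambda\tau$ up front makes explicit why $\neg\Lambda\tau\lor D\tau$ is propositionally false when Conflict is applied. The paper passes over this point, because it decides only the background literals of $\overline M$.
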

\newcommand{\theoremSimulationSCLTConflictProof}{
\begin{proof}
	Since $S_0$ is not a conflict state and Conflict was not applied to get from $S_0$ to $S_1$, only Propagate, Decide and Grow can have been applied to get from $S_0$ to $S_1$.
	In particular, the sets of initial and learned clauses in $S_0$ and $S_1$ are the same.
	Let $\overline{M}$ be the trail of $S_1$.
	We begin by applying Restart to $T_0$ to empty the trail, which is clearly possible because $T_0$ is weakly associated with $S_0$, so it is not a conflict state.
	Now we apply Decide with all background literals in $\overline{M}$ in any order.
	If this is not possible because the corresponding atom is not contained in the current set of ground atoms, \theory-Atom is applied first.
	All added literals are propositionally undefined before their addition because we emptied the trail using Restart before and all literals in $\overline{M}$ are unique.
	Conflict also does not become applicable because this would imply the presence of a clause consisting only of background literals, which is equivalent to the empty clause in the context of SCL(T).
	Now, following the order of $\overline{M}$, we add the foreground literals in $\overline{M}$ using Propagate and Decide, according to their annotations in $\overline{M}$, using InstanceDel, Instantiate and \theory-Atom as needed, analogously to the proof of \Cref{theorem:simulationSCL}.
	For the propagations, we use the respective clause that was used in SCL(T).
	This is always possible because all background literals from $\overline{M}$ are on the trail.
	This does not violate the conditions of a reasonable run because in a regular SCL(T) run, Conflict has precedence over all other rules, Decide does not enable an application of the rule Conflict and \ICLF{} has the same clause set as SCL(T) in this situation.
	The fact that we added all background literals first cannot cause any conflicts because Conflict is applicable in SCL(T) if the foreground literals are propositionally false and the background literals are \theory-satisfiable.
	Now we potentially apply InstanceDel and Instantiate to enable the application of Conflict, analogously to the proof of \Cref{theorem:simulationSCL}.
	The state $T_1$ reached like this is associated with $S_1$.
	Since $T_0$ is weakly associated with $S_0$, the sets of clauses are equal.
	Clearly, the sets of literals on the trail are also equal, the foreground literals are in the same order, the propagations and decisions are annotated as claimed and the background literals form a prefix of the trail.
	After applying Conflict with the respective clause to $T_1$, we reach a state $T_2$ that is associated with $S_2$.
\end{proof}
}
\ifthenelse{\boolean{proofsInPaper}}{\theoremSimulationSCLTConflictProof}{}

\begin{restatable}[Simulation of Learning]{theorem}{theoremSimulationSCLTLearning} \label{theorem:simulationSCLTLearning}
	Consider a regular SCL(T) run
	$$(\eps;N';\emptyset;B';0;\top)\shortruleapp{SCL(T)}{$\ast$}S_0\shortruleapp{SCL(T)}{Conflict}S_1\shortruleapp{SCL(T)}{$\{\text{Resolve},\text{Factorize},\text{Skip}\}^\ast$}S_2$$
	and a state $T_1$ reached from $(\emptyset;\eps;N';\emptyset;0;\top)$ in a reasonable \ICLF{} run and associated with $S_1$.
	Then there is a state $T_2$ reachable from $T_1$ by applying the rules Resolve and Factorize in a reasonable way such that $T_2$ is associated with $S_2$.
	Moreover, if Backtrack can be applied to $S_2$ in a regular run, reaching a state $S_3$, then BacktrackClassic is applicable to $T_2$ following a reasonable strategy, reaching a state $T_3$ that is weakly associated with $S_3$.
\end{restatable}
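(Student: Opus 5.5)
We argue by induction on the number of conflict analysis steps from $S_1$ to $S_2$, simulating each SCL(T) rule application individually, in the same way as the SCL(FOL) simulation of \Cref{theorem:simulationSCL}. Since $T_1$ is associated with $S_1$, the conflict closures agree, $N=\overline N\cup\overline U$, and every SCL(T) propagation on $\overline M$ has a corresponding \ICLF{} propagation whose annotation exhaustively factorizes to the SCL(T) annotation. None of Resolve, Factorize and Skip changes the trail or the clause sets in either calculus. Association is therefore preserved as soon as the conflict closures coincide again after each step.

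\emph{Skip} in SCL(T) removes the rightmost literal of $\overline M$. We apply no \ICLF{} rule. The resulting SCL(T) trail has a subset of the literals and a subsequence of the foreground literals, so $T_1$ remains associated with the new state.

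\emph{Factorize} is simulated by the \ICLF{} rule Factorize with the same literals and the same mgu. The \ICLF{} side condition $L\tau=L'\tau$ holds because the conflict closures are identical.

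\emph{Resolve} in SCL(T) resolves the conflict closure with a foreground propagation $L^{\overline C\cdot\overline\sigma}$, where $\overline C$ is already exhaustively factorized. The corresponding \ICLF{} literal is annotated with $C\cdot\sigma$, and exhaustively factorizing $L$ in $C\cdot\sigma$ gives $\overline C\cdot\overline\sigma$. The \ICLF{} function $\resolve$ performs exactly this exhaustive factorization before resolving. Hence it produces the same closure up to variable renaming, which we assume is chosen consistently.

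One point needs care: SCL(T) only resolves with foreground literals. Background literals remain in the conflict clause as the constraint $\neg\Lambda$, and \ICLF{} simply never resolves them, which is allowed because Resolve is optional.

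For the second part, assume SCL(T) Backtrack applies to $S_2$. In a regular SCL(T) run, the conflict clause then contains exactly one foreground literal of the maximal decision level, namely the UIP. The remaining foreground literals, and the constraint literals, are false at an earlier level. We choose the \ICLF{} decision $K^j$ to be the decision of the level of that UIP literal in $\Gamma$ and apply BacktrackClassic.

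Checking the BacktrackClassic side conditions is the main obstacle, because levels in $\Gamma$ need not match levels in $\overline M$.
\begin{itemize}
\item In the \ICLF{} trail built in \Cref{theorem:simulationSCLTConflict}, all background literals form a prefix of decisions. The foreground decisions follow in the same relative order as in $\overline M$.
\item The background literals of the conflict clause are thus false in the prefix before $K$, so they cause no problem.
\item The foreground non-UIP literals are false at foreground levels strictly below that of the UIP. Since foreground decisions keep their order, they lie left of $K$ in $\Gamma$.
\item The UIP literal is undefined in the prefix because of \Cref{property:gammaUnique} of \Cref{lemma:invariants}.
\end{itemize}

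Reasonableness requires, in addition, that the complement of the rightmost trail literal no longer occurs in the conflict clause. This follows as in \Cref{theorem:simulationSCL}: regular SCL(T) conflict analysis resolves the rightmost propagation exhaustively, and by our construction the rightmost literal of $\Gamma$ coincides with that of $\overline M$.

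Finally, we check weak association of the resulting state $T_3$ with $S_3$. Both calculi add the same clause $D$, so $N=\overline N\cup\overline U$ holds. Both conflict closures become $\top$. Because $j-1$ lies at or beyond the background prefix, the background literals still form a prefix of the new trail.
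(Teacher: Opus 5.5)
\textbf{There is a gap in the reasonableness argument for BacktrackClassic.} Your simulation of Skip, Factorize and Resolve matches the paper's. Your check that BacktrackClassic is applicable also works. The paper backtracks to the counterpart of the SCL(T) decision $\overline K^{i+1}$, while you backtrack to the $\Gamma$-decision of the UIP's level; this is harmless, because only weak association is required. Those bullet points need only the association hypothesis: the background prefix and the order of foreground literals and decisions. They do not need the construction of \Cref{theorem:simulationSCLTConflict}.

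The failing step is the claim that ``by our construction the rightmost literal of $\Gamma$ coincides with that of $\overline M$.'' It fails for two reasons.
\begin{itemize}
\item $T_1$ is not constructed. The statement quantifies over an arbitrary state associated with $S_1$. Association only gives that the literals of $\overline M$ are among those of $\Gamma$ and that its foreground literals form a subsequence of $\Gamma$. So $\Gamma$ may end with literals that are not on $\overline M$.
\item Even for the trail built in \Cref{theorem:simulationSCLTConflict}, the claim is false. There all background literals are moved to a prefix of $\Gamma$. Whenever $\overline M$ ends with a background literal, the two rightmost literals therefore differ. SCL(T) regularity only says that the rightmost \emph{foreground} literal is resolved away; background literals are never resolved.
\end{itemize}
The appeal to \Cref{theorem:simulationSCL} does not transfer, because there the trails are literally identical at conflict time.

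\textbf{The missing argument.} You need an argument by contradiction that uses only the association properties.
\begin{itemize}
\item Suppose the complement of the rightmost literal $L$ of $\Gamma$ occurs in $D\tau$.
\item Then $L$ is a foreground literal. Otherwise, since background literals form a prefix of $\Gamma$, every literal of $D\tau$ would be background. That conflict clause is the empty clause in SCL(T), and Backtrack does not apply to it.
\item Resolve and Factorize neither change the trail nor introduce literals further to the right. So $\comp(L)$ already occurred in the conflict clause of $T_1$, and hence of $S_1$.
\item Therefore $L$ lies on the trail of $S_1$. Since the foreground literals of that trail form a subsequence of $\Gamma$ and $L$ is rightmost in $\Gamma$, $L$ is the rightmost foreground literal of $S_1$'s trail.
\item A regular SCL(T) run must resolve this literal away before Backtrack. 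This contradicts the fact that $S_2$ and $T_2$ have the same conflict clause.
\end{itemize}
Without this argument, including the exclusion of a purely background conflict clause, you have not shown that BacktrackClassic can be applied following a reasonable strategy.
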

\newcommand{\theoremSimulationSCLTLearningProof}{
\begin{proof}
	The proof of the first claim is by induction on the number of rules applied to reach $S_2$ from $S_1$.
	For the base case, $T_2:=T_1$ is obviously associated with $S_2$.
	For the induction step, let $S'$ be the state that $S_2$ was reached from and $T'$ be reachable from $T_1$ as claimed and associated with $S'$.
	If the rule applied to $S'$ to reach $S_2$ is Skip, $T_2:=T'$ is already associated with $S_2$.
	If the applied rule is Resolve, then Resolve can be applied to $T'$ to reach $T_2$ because $T'$ is associated with $S'$, so the propagation used by Resolve is also on the trail of $T'$.
	The conflict clauses in $S_2$ and $T_2$ are the same because SCL(T) exhaustively factorizes the annotated clause while propagating, whereas \ICLF{} exhaustively factorizes it while resolving, and the respective trails and clause sets do not change, so $T_2$ is associated with $S_2$.
	If the applied rule is Factorize, then Factorize can be applied to $T'$, reaching a state $T_2$ that is associated with $S_2$.

	Now assume that $T_2=(\Theta;\Gamma;N;G;k;D\cdot\tau)$ is associated with $S_2$ and Backtrack is applicable to $S_2$ in a regular SCL(T) run.
	Let $S_2=(\overline{M},\overline{K}^{i+1},\overline{M'};\overline N;\overline U;$ $\overline B;\overline k;(\overline\Lambda\parallel\overline D\lor\overline L)\cdot\overline\sigma)$.
	Further, let $\Gamma=\Gamma_1\Gamma_2\overline{K}^j\Gamma_3$ such that $\Gamma_1$ contains exactly all background literals of $\Gamma$.
	It is possible to split $\Gamma$ like this because $T_2$ is associated with $S_2$.
	Additionally, it holds that $D\cdot\tau=(\overline\Lambda\parallel\overline D\lor\overline L)\cdot\overline\sigma$.
	Since the foreground literals on the trail of $S_2$ are a subsequence of the literals in $\Gamma$, $\comp(\overline L\overline\sigma)$ must be contained in $\overline{K}^j\Gamma_3$ and the complements of the literals in $\overline D\overline\sigma$ must be contained in $\Gamma_2$.
	Obviously, the literals in $\overline\Lambda\overline\sigma$ must be contained in $\Gamma_1$.
	Hence, BacktrackClassic can be applied to $T_2$ reaching a state $T_3$ that is weakly associated with $S_3$ because \ICLF{} learns the same clause as SCL(T).
	Applying BacktrackClassic in this situation does not violate the reasonable strategy.
	For a contradiction, assume that the rightmost literal $L$ in $\Gamma$ occurs in $D\tau$.
	The conflict clause would be the empty clause in the context of SCL(T) if $L$ was a background literal because the background literals form a prefix of $\Gamma$, and since Backtrack is not applicable on the empty clause in SCL(T), $L$ must be a foreground literal.
	None of the rules applied to reach $T_2$ from $T_1$ changes the trail or introduces literals further to the right, so $L$ must have occurred in the conflict clause in $T_1$, hence also in $S_1$.
	Moreover, since the foreground literals on the trail of $S_1$ form a subsequence of the foreground literals on the trail of $T_1$ and the conflict clauses are the same, $L$ must occur in the conflict clause of $S_1$ and it must be the rightmost foreground literal on the trail of $S_1$.
	However, in a regular SCL(T) run, Resolve resolves away at least the rightmost foreground literal from the trail, contradiction.
\end{proof}
}
\ifthenelse{\boolean{proofsInPaper}}{\theoremSimulationSCLTLearningProof}{}

Together, \Cref{theorem:simulationSCLTConflict,theorem:simulationSCLTLearning} show how \ICLF{} can simulate a given regular SCL(T) derivation of the empty clause.
Clearly, the start state of \ICLF{} is weakly associated with the start state of SCL(T).
Whenever Conflict is applied in SCL(T), \ICLF{} can reach an associated conflict state by \Cref{theorem:simulationSCLTConflict}.
Then, all applications of Resolve, Factorize and Skip during conflict analysis in SCL(T) can be simulated in \ICLF{} by \Cref{theorem:simulationSCLTLearning}.
If SCL(T) derives the empty clause during conflict analysis, \ICLF{} also derives it because the states are shown to be associated.
If SCL(T) learns a clause, then by \Cref{theorem:simulationSCLTLearning}, \ICLF{} can learn the same clause, reaching a weakly associated state from which the described procedure can be iterated.

Next, we show that \ICLF{} can simulate the Isabelle/HOL verified calculus \CDCLW{} \cite{DBLP:journals/jar/BlanchetteFLW18}.
CDCL is an algorithm for pure propositional logic, so we assume \theory{} to be the free theory.
For easier notation, given a ground clause $D$, we assume that $D=D\cdot\{\}$.
To accommodate the different notations in \ICLF{} and \CDCLW{}, we assume that $L^j=L^\dagger$ for ground literals $L$, and that the trail in \CDCLW{} grows to the right instead of to the left.

\begin{restatable}{definition}{definitionAssociatedCDCL} \label{definition:associatedCDCL}
	We say that an \ICLF{} state $(\Theta;\Gamma;N;G;k;D\cdot\tau)$ is \emph{associated} with a given \CDCLW{} state $(\overline M;\overline N;\overline U;\overline D)$ if $\clauses(G)=\overline N\cup\overline U$, $D\cdot\tau=\overline D$, when $D\cdot\tau=\top$, then $\Gamma=\overline M$, and when $D\cdot\tau\neq\top$, then $\overline M$ is a prefix of $\Gamma$.
\end{restatable}

From its start state, \ICLF{} first has to instantiate all given clauses to reach a state that is associated with the \CDCLW{} start state.
Then, each rule application in \CDCLW{} can be simulated by at most one rule application in \ICLF{}.

\begin{restatable}[\ICLF{} Simulates \CDCLW{}]{theorem}{theoremSimulationCDCL} \label{theorem:simulationCDCL}
	Let $N'$ be a propositional clause set.
	Then for any reasonable \CDCLW{} run
	$$(\eps;N';\emptyset;\top)=S_0\shortruleapp{\CDCLW{}}{}S_1\shortruleapp{\CDCLW{}}{}\dots \shortruleapp{\textsc{\CDCLW{}}}{}S_m,$$
	there exists a reasonable \ICLF{} run
	$$(\emptyset;\eps;N';\emptyset;0;\top)\shortruleapp{\ICLF{}}{Instantiate $\ast$}T_0\shortruleapp{\ICLF{}}{$\leq1$}T_1\shortruleapp{\ICLF{}}{$\leq1$}\dots\shortruleapp{\ICLF{}}{$\leq1$}T_m$$
	such that for all $i\in\{0,\dots,m\}$, $T_i$ is associated with $S_i$.
	After the initial applications of Instantiate, only the \ICLF{} rules Propagate, Decide, Conflict, Restart, InstanceDel, Resolve and BacktrackClassic are needed.
\end{restatable}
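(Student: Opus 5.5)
Apply Instantiate with the empty substitution to every clause of $N'$. On the empty trail no clause is propositionally false ($N'$ does not contain $\bot$). Duplicates cannot arise because clauses are sets. The result is $T_0$ with $\clauses(G)=N'$, $\Gamma=\eps$ and $D\cdot\tau=\top$, so $T_0$ is associated with $S_0$. From there I argue by induction on $m$, with a case distinction on the \CDCLW{} rule producing $S_{i+1}$ from $S_i$. The invariant is the association of \Cref{definition:associatedCDCL}: $\clauses(G)=\overline N\cup\overline U$, equal conflict clauses, and trails that are equal outside conflict analysis.

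\textbf{Model-building rules.} \CDCLW{} Propagate maps to \ICLF{} Propagate using the closure $C\cdot\{\}\in G$. Decide maps to \ICLF{} Decide. Here $\Theta$ contains all atoms of $\clauses(G)$ by \Cref{lemma:invariants}, and the decided atom occurs in $\overline N$. Conflict maps to \ICLF{} Conflict. Restart maps to \ICLF{} Restart with $\Gamma_1=\eps$.

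Forget (removing a learned clause $C\in\overline U$) maps to InstanceDel on $C\cdot\{\}$. This is allowed because \CDCLW{} only forgets clauses that are not annotations on the trail, which is the side condition of InstanceDel. I keep $N$ unchanged, since association speaks only about $\clauses(G)$.

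Reasonability must be checked for each of these steps. For the first two requirements of \Cref{definition:reasonable} I would use that a reasonable \CDCLW{} run prefers Conflict over Propagate and Decide and never decides into an immediate conflict. Because $\clauses(G)$ coincides with $\overline N\cup\overline U$, these conditions transfer literally to \ICLF{}.

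\textbf{Conflict analysis.} \CDCLW{} Skip removes the last trail literal when its complement is not in the conflict clause. I simulate it with zero \ICLF{} steps. This is why association only requires $\overline M$ to be a prefix of $\Gamma$ during conflict analysis. \CDCLW{} Resolve resolves with the last literal of $\overline M$, which is a propagation $L^{C\vee L}$. Since $\overline M$ is a prefix of $\Gamma$, the same annotated literal lies on the \ICLF{} trail, and \ICLF{} Resolve with the empty substitution gives the same propositional resolvent. Clauses are sets, so no factorization is needed.

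\textbf{Backtracking (main obstacle).} \CDCLW{} Backjump is the hardest case. Its conflict clause $D'\vee L$ has $L$ as the only literal of the highest level, and it jumps back to the level of $D'$. I map it to BacktrackClassic, choosing $\Gamma_1$ as the prefix of $\Gamma$ that \CDCLW{} backjumps to, followed by the decision $K^j$. The learned clause is added to both $N$ and $G$, so association is restored: the new trail $\Gamma_1L^{D}$ equals the \CDCLW{} trail, and $\clauses(G)$ gains exactly the learned clause.

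The delicate point is the third reasonability requirement. The complement of the rightmost literal of $\Gamma$ must no longer occur in $D$. I would argue as follows. The conflict was reached by Conflict in an associated state, so by \Cref{lemma:conflictOnPropagation} the rightmost literal of $\Gamma$ is a propagation occurring negatively in $D$. Before Backjump, \CDCLW{} can only Skip or Resolve on the current last literal of its trail prefix $\overline M$. Hence the \CDCLW{} prefix can only have moved left of that rightmost \ICLF{} literal if that literal was either skipped, meaning its complement was no longer in the clause, or resolved away. The backjump condition (exactly one literal at the maximal level) then gives the premises of BacktrackClassic: $D'$ is false in $\Gamma_1$ and $L$ is undefined there.

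If \CDCLW{} instead derives the empty clause, the same resolution steps derive $\bot$ in \ICLF{}. I would also double-check that $\Gamma_1$ is followed by a decision whose level $j$ yields decision level $j-1$, matching the \CDCLW{} level.
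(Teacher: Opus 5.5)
Your rule-by-rule mapping is the same as the paper's. The gap is in the one step you flagged as delicate: showing that BacktrackClassic respects the third requirement of \Cref{definition:reasonable}. Your argument only establishes a conditional. \emph{If} the \CDCLW{} trail prefix has moved to the left of the rightmost \ICLF{} literal $K$, then $\comp(K)$ has been removed from the conflict clause, and later resolutions with reasons to the left of $K$ cannot reintroduce it. You never show that the prefix \emph{does} move past $K$ before Jump.

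Nothing you cite excludes Jump directly after Conflict. \Cref{lemma:conflictOnPropagation} only tells you that $K$ is a propagation with $\comp(K)$ in the conflict clause. Suppose that clause already had exactly one literal of maximal level. That literal would be $\comp(K)$, Jump would be applicable with $\overline M=\Gamma$, and the corresponding BacktrackClassic would violate requirement 3. The \ICLF{} notion of a reasonable strategy cannot rule this out on its own, because it does not demand exhaustive propagation.

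The missing ingredient is the one the paper's proof uses: exhaustive propagation in a reasonable \CDCLW{} run. Concretely, use the invariant that no clause of $\overline N\cup\overline U$ is unit or false in a trail prefix immediately preceding a decision. Suppose the conflict clause were $D'\lor L$ with $L=\comp(K)$ the only literal at the current level $\ell\geq 1$ and all literals of $D'$ at levels $<\ell$. Then $D'$ is false and $L$ undefined in the prefix before the decision of level $\ell$. So the clause was unit there, and Propagate would have taken precedence over that Decide, a contradiction.

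Hence Jump cannot fire immediately after Conflict. Skip on $K$ is excluded because $\comp(K)$ is in the clause. So the first conflict-analysis step must be Resolve on $K$, and only then does your conditional argument show that $\comp(K)$ is absent when Jump is applied. This is the same kind of fact you already use implicitly when you claim that a reasonable \CDCLW{} run never decides into an immediate conflict.
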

\newcommand{\theoremSimulationCDCLProof}{
\begin{proof}
	To reach $T_0$ from the start state, Instantiate is applied with all clauses in $N'$ and the substitution $\{\}$.
	The reached state is clearly associated with $S_0$.
	Now the proof is by induction on $m$.
	Assume $T_{m'}$ is associated with $S_{m'}$ and a \CDCLW{} rule is applied to $S_{m'}$ to reach $S_{m'+1}$.
	By a case distinction on the applied rule, we show that we can apply at most one \ICLF{} rule to reach a state $T_{m'+1}$ that is associated with $S_{m'+1}$.
	Let $T_{m'}=(\Theta;\Gamma;N;G;k;D\cdot\tau)$ and $S_{m'}=(\overline M;\overline N;\overline U;\overline D)$.
	Propagate can be simulated by Propagate; the sets of ground clauses $\clauses(G)$ and $\overline N\cup\overline U$ are equal and the conditions of the two rules are the same.
	Decide can be simulated by Decide; the ground literal must be contained in $\Theta$ because it is contained in $\overline N$ and hence also in $\clauses(G)$, the remaining conditions of the two rules are the same.
	Conflict can be simulated by Conflict; the used clause is also contained in $\clauses(G)$, moreover it is unique by \Cref{corollary:gUnique}.
	Restart can be simulated by Restart with $\Gamma_1=\eps$.
	Forget can be simulated by InstanceDel.
	Again, by \Cref{corollary:gUnique}, the deleted clause occurs in $G$ at most once.
	If Skip was the applied rule, $T_{m'+1}:=T_{m'}$ is already associated with $S_{m'+1}$.
	Resolve can be simulated by Resolve.
	If the applied rule was Jump, then let $\overline M=MK^\dagger M'$ and $\overline D=D'\lor L$ such that $L$ has the level of the state $S_{m'}$.
	It follows that $D'$ is propositionally false in $M$ and $L$ is propositionally undefined in $M$, so BacktrackClassic is applicable to $T_{m'}$, restoring the equality of the trails and hence reaching a state $T_{m'+1}$ that is associated with $S_{m'+1}$.
	BacktrackClassic is indeed applicable following a reasonable strategy because in CDCL, due to exhaustive propagation, Resolve must be applied at least once before Jump becomes applicable.
\end{proof}
}
\ifthenelse{\boolean{proofsInPaper}}{\theoremSimulationCDCLProof}{}

In standard presentations of CDCL(T)~\cite{DBLP:journals/jacm/NieuwenhuisOT06,DBLP:series/faia/BarrettSST21}, conflict analysis is left flexible, allowing different learning mechanisms.
We therefore do not state a formal simulation theorem for CDCL(T).
Since \ICLF{} learns only non-redundant clauses, such a theorem would have to impose a compatible condition on learning in CDCL(T).
We can nevertheless describe how \ICLF{}, given a theory solver for \theory{}, can be used to decide the satisfiability of a ground clause set modulo \theory{}.

\begin{restatable}[\ICLF{} Simulates CDCL(T)]{remark}{remarkSimulationCDCLT}\label{remark:simulationCDCLT}
	Let $N'$ be a set of ground clauses modulo a theory \theory{}.
	We start in the state $(\emptyset;\eps;N';\emptyset;0;\top)$ and first apply Instantiate with $\{\}$ to all clauses in $N'$.
	Now, by \Cref{theorem:termination,theorem:stuckStates}, applying the rules Decide, Propagate, Conflict, Restart, \theory-Propagate, \theory-Learn, \theory-Atom, \theory-Conflict, Explain, Resolve and BacktrackClassic according to a reasonable strategy and restricting Restart, \theory-Learn and \theory-Atom to a finite number of applications, \ICLF{} either derives the empty clause, or it terminates in a state whose trail is a \theory{}-satisfiable model of $N'$.
\end{restatable}

For instantiation-based SMT procedures combining quantifier instantiation with CDCL(T), we are not aware of any formal description.
Nevertheless, \ICLF{} can model the essential interaction between an instantiation module and a CDCL(T) solver.
The process consists of two phases: the instantiation phase and the CDCL(T) phase.
First, the instantiation module generates some instances, including all of the initially ground clauses, using Instantiate.
Then, CDCL(T) is run according to \Cref{remark:simulationCDCLT}.
If it derives unsatisfiability, the process terminates.
Otherwise, the instantiation module adds more instances.
In \ICLF{}, due to non-ground learning, the instantiation module can also instantiate learned clauses, which is not possible in classical instantiation-based SMT solving.

There are other calculi that integrate explicit model building and a form of clause learning.
Theory instantiation~\cite{DBLP:conf/lpar/GanzingerK06} extends the InstGen~\cite{GanzingerKorovinEtAl03} calculus with theories. Still, only instances of existing clauses
are learned with respect to ground theory conflicts. The model-building in model evolution with lemma learning~\cite{DBLP:conf/lpar/BaumgartnerFT06} goes beyond ground
literals but does not involve theories.  Lemma learning gets more complicated and no non-redundancy guarantees are presented.
In conflict resolution~\cite{DBLP:journals/jar/SlaneyP18} models are built by first-order decision literals and propagations computed by the unit propagation rule.
The conflict resolution rule and the overall calculus are described in an abstract way, such that no quality guarantees are possible with respect to newly generated clauses.

\section{Conclusion} \label{section:conclusion}
We presented the calculus \ICLF{} for non-ground SMT, which generalizes several model-based approaches for propositional logic, first-order logic, and first-order logic modulo theories.
Most notably, \ICLF{} extends quantifier instantiation with non-ground learning and a tighter integration of instantiation and model building.
It also extends SCL by allowing a lazier treatment of non-ground clauses: conflicts only have to be considered for the current ground instances.
In this way, \ICLF{} provides a flexible setting for combining techniques from instantiation-based SMT solving and non-ground clause learning.
The long-term goal is to use this flexibility to develop new methods for solving problems that are beyond the reach of current solvers.

The main direction for future work is to implement \ICLF{} and integrate its ideas into existing solvers.
Since \ICLF{} generalizes multiple other calculi, it naturally leaves freedom in several choices.
One such choice is which ground instances to create and when to create them.
This difficulty already arises in quantifier instantiation and in SCL, and we hope that instantiation techniques that have proved useful in SMT solving~\cite{DBLP:journals/jacm/DetlefsNS05,DBLP:conf/cade/MouraB07,DBLP:conf/cav/GeM09,DBLP:conf/fmcad/ReynoldsTM14,DBLP:conf/tacas/ReynoldsBF18} can also be adapted to \ICLF{}.
Another such choice is when, and to what extent, to check the applicability of theory rules, for instance when to perform complete checks.
This depends on the specific theory, and we expect that strategies that work well in CDCL(T) will also work well in \ICLF{}.
As in all model-based calculi, there is also freedom in the choice of which literals to include in the model, that is, which decisions and propagations to perform.
Unlike in SCL, however, decisions and propagations are restricted by the current ground instances in \ICLF{}.
Finally, there is freedom in the choice of how to explain theory propagations and conflicts.
In CDCL(T), it is sufficient to explain a theory conflict by a minimal subset of contradictory trail literals, for example the literals $h(a)=g(a)$, $h(a)=a$, and $g(a)\neq a$ from \Cref{example:nonground}.
In \ICLF{}, this ground explanation can be used, but it may be preferable to use the more general explanation $x_2\neq x_3\lor x_2\neq x_4\lor x_3=x_4$, a theory lemma expressing the transitivity axiom for equality.
Instantiated with the substitution $\{x_2\mapsto h(a),x_3\mapsto g(a),x_4\mapsto a\}$, this yields the same ground explanation, but at the non-ground level it allows us to learn a more general clause from the conflict.
If we use the ground explanation directly, by contrast, then all variables matched against the ground explanation are grounded, and this restricts our ability to learn non-ground clauses.
Thus, besides instantiation heuristics, a central challenge for implementing \ICLF{} is to develop theory solvers that can produce useful non-ground explanations~\cite{DBLP:conf/cade/BarbosaRKLNNOPV22,leidinger2025computing}.

This gives another perspective on the role of theory explanations in \ICLF{}.
In saturation-based theorem proving, adding theory axioms as ordinary clauses can greatly enlarge the search space~\cite{DBLP:conf/lpar/RegerS17,DBLP:conf/cade/Gleiss020,DBLP:conf/tacas/KorovinKRSV23}.
In \ICLF{}, theory axioms need not be added to $N$ as clauses available for arbitrary instantiation, propagation, and resolution.
Instead, theory lemmas only have to be produced when they become relevant during conflict analysis.
In this way, the ground search guides the use of theory lemmas in resolution, rather than making all theory axioms available throughout the search.
This allows theory reasoning to contribute non-ground theory lemmas when the ground search needs them, combining the control of model-guided reasoning with the generality of first-order learning.

\paragraph{Acknowledgements}
We thank our anonymous reviewers for their constructive feedback.

\label{sect:bib}
\bibliographystyle{plain}
\bibliography{smt_vars}

\appendix

\section{Further Examples}
\begingroup
\allowdisplaybreaks
\begin{example} \label{example:redundantCB}
	This example shows that the 1UIP clause can be redundant in chronological backtracking.
	Consider the propositional clause set $N=\{P,\neg P\lor\neg R\lor S,\neg V\lor U,\neg V\lor\neg U,T\lor V\lor U,T\lor V\lor\neg U,\neg P\lor\neg T\lor V,\neg R\lor\neg S\lor\neg T\lor V\}$.
	We use the calculus by Möhle and Biere \cite{DBLP:conf/sat/MohleB19}, but omit $\delta$ and annotate the levels on the trail instead.
	This means that our states are two-tuples $(N;M)$ where $N$ is the clause set and $M$ is the trail.
	We always use the 1UIP clause for backtracking and always perform chronological backtracking, i.e., we always jump to the previous decision level.
	\begin{align*}
		& &&(N;\eps)&&&\\
		&\leadsto_\text{Unit}&&(N;P^P)&&&\\
		&\leadsto_\text{Decide}&&(N;P^PR^1)&&&\\
		&\leadsto_\text{Unit}&&(N;P^PR^1S^{\neg P\lor\neg R\lor S})&&&\\
		&\leadsto_\text{Decide}&&(N;P^PR^1S^{\neg P\lor\neg R\lor S}\neg T^2)&&&\\
		&\leadsto_\text{Decide}&&(N;P^PR^1S^{\neg P\lor\neg R\lor S}\neg T^2V^3)&&&\\
		&\leadsto_\text{Unit}&&(N;P^PR^1S^{\neg P\lor\neg R\lor S}\neg T^2V^3U^{\neg V\lor U})
		\intertext{Now, the clause $\neg V\lor\neg U$ is false, both literals in this clause are of decision level $3$. The 1UIP clause is $\neg V$, obtained by resolving with the clause $\neg V\lor U$ from which $U$ was propagated.}
		&\leadsto_\text{Jump}&&(N':=N\cup\{\neg V\};P^PR^1S^{\neg P\lor\neg R\lor S}\neg T^2\neg V^{\neg V})&&&\\
		&\leadsto_\text{Unit}&&(N';P^PR^1S^{\neg P\lor\neg R\lor S}\neg T^2\neg V^{\neg V}U^{T\lor V\lor U})&&&\\
		\intertext{Now, the clause $T\lor V\lor\neg U$ is false. $U$ and $T$ are of decision level $2$, but $V$ is of decision level $0$, so the 1UIP clause is $T\lor V$, obtained by resolving with the clause $T\lor V\lor U$ from which $U$ was propagated.}
		&\leadsto_\text{Jump}&&(N'':=N'\cup\{T\lor V\};P^PR^1S^{\neg P\lor\neg R\lor S}\neg V^{\neg V}T^{V\lor T})
	\end{align*}
	Now, the clause $\neg R\lor\neg S\lor\neg T\lor V$ is false.
	$R$ and $S$ are of decision level $1$, $T$ and $V$ are of decision level $0$.
	Hence, the 1UIP clause is $\neg P\lor\neg R\lor\neg T\lor V$, obtained by resolving with the clause $\neg P\lor\neg R\lor S$ from which $S$ was propagated.
	This clause is redundant with respect to any ordering due to the clause $\neg P\lor\neg T\lor V\in N\subseteq N''$.
\end{example}
\endgroup

\section{Further Lemmas}
\begin{restatable}[Stuck SCL(FOL) States Are Stuck in \ICLF{}]{lemma}{lemmaSCLStuckSimulation}\label{lemma:SCLStuckSimulation}
	If $$(\eps;N';\emptyset;\beta';0;\top)\shortruleapp{SCL(FOL)}{$\ast$}(\overline\Gamma;\overline N;\overline U;\overline\beta;\overline k;\overline D\cdot\overline\tau)=:S$$ in a regular SCL(FOL) run, no rule except Grow is applicable to $S$, $$(\emptyset;\eps;N';\emptyset;0;\top)\shortruleapp{\ICLF{}}{$\ast$}(\Theta;\Gamma;N;G;k;D\cdot\tau)=:T$$ in a reasonable \ICLF{} run and $T$ is associated with $S$, then no \ICLF{} rule except Instantiate, ClauseDel, InstanceDel, AtomDel, Restart, \theory-Learn and \theory-Atom is applicable to $T$.

	Moreover, there is no atom $A\prec_B\overline\beta$ such that $A$ is a ground instance of an atom in $\atom(N)$ and \theory-Atom is applicable with $A$.
\end{restatable}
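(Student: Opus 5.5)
The plan is a case distinction on the status closure $\overline D\cdot\overline\tau$ of the stuck SCL(FOL) state $S$. By the first property of \Cref{definition:associatedSCL}, the status closure equals $D\cdot\tau$ in $T$, so each case fixes the phase of $T$ as well. Two facts about the free theory will be used throughout; both rely only on consistency of the trail.
\begin{itemize}
\item[(a)] If $\Gamma\models_\mathcal{T}L$ for a consistent trail $\Gamma$, then $L$ already occurs in $\Gamma$. Hence \theory-Propagate is never applicable, no literal annotated with \theory{} is ever put on the trail, and so Explain is never applicable.
\item[(b)] If $\models_\mathcal{T}D$, then $D$ is a tautology. Then $D\tau$ contains a complementary pair, so it cannot be propositionally false in a consistent trail, and \theory-Conflict is never applicable.
\end{itemize}

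\emph{Case $\overline D\cdot\overline\tau\notin\{\top,\bot\}$.} I would rule this out. The non-stuckness results for regular SCL(FOL) runs in \cite{DBLP:journals/corr/abs-2302-05954} show that in a conflict state one of Skip, Resolve, Factorize or Backtrack is always applicable. That contradicts the assumption that only Grow applies.

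\emph{Case $D=\bot$.} Every rule whose left-hand side requires the conflict component to be $\top$ is excluded. This covers Instantiate-type rules, Decide, Propagate, Conflict, Restart and all theory rules except Explain; Explain is excluded by (a). Resolve and Factorize need a literal in $D$, which $\bot$ lacks. BacktrackClassic and BacktrackCB need $D\tau=D'\lor L$, which is impossible. BacktrackFOL needs that no grounding makes $D$ false in $\Gamma_1$, but $\bot$ is false in every trail. The second claim is also immediate here, since \theory-Atom needs status $\top$.

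\emph{Case $D=\top$.} This is the main case. First I would extract from the SCL(FOL) paper what it means that only Grow is applicable in a regular run. Since Decide is not applicable, every ground atom $A\prec_B\overline\beta$ that is an instance of an atom of $\overline N\cup\overline U$ is defined in $\overline\Gamma$. Since Conflict is not applicable, no ground instance $\prec_B\overline\beta$ of a clause in $\overline N\cup\overline U$ is false in $\overline\Gamma$. By \Cref{definition:associatedSCL}, $\Gamma$ and $\overline\Gamma$ carry the same literals, because $n=\overline n$ when the status is $\top$. Also $N=\overline N\cup\overline U$.

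From this I would derive the rule-by-rule conclusions.
\begin{itemize}
\item Every $A\in\Theta$ satisfies $A\prec_B\overline\beta$ and is an instance of an atom in $\atom(N)$, so it is defined in $\Gamma$. Hence Decide is impossible.
\item By \Cref{property:thetaContains} of \Cref{lemma:invariants}, $\atom(\clauses(G))\subseteq\Theta$, so Propagate has no undefined literal to propagate.
\item Suppose some $C\sigma\in\clauses(G)$ were false. It has $C\in N$ by \Cref{property:Ginstances}, and all its atoms are in $\Theta$ and hence $\prec_B\overline\beta$. Then SCL(FOL) Conflict would apply, a contradiction. So Conflict is impossible.
\item \theory-Propagate, \theory-Conflict and Explain are excluded by (a) and (b).
\item Resolve, Factorize and the three backtrack rules need $D\neq\top$.
\end{itemize}

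For the second claim, let $A\prec_B\overline\beta$ be a ground instance of an atom in $\atom(N)=\atom(\overline N\cup\overline U)$. By the stuckness property above, $A$ is defined in $\overline\Gamma$, hence in $\Gamma$. By \Cref{property:thetaContains}, $A\in\Theta$, so \theory-Atom is not applicable with $A$.

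I expect the main obstacle to be stating precisely the characterization of stuck regular SCL(FOL) states that I import: definedness of all relevant atoms below $\overline\beta$, and the impossibility of a stuck conflict state. The exact bound conventions there, $\prec_B$ versus $\preceq_B$ and atoms versus literals, must line up with the condition $A\prec_B\overline\beta$ in \Cref{definition:associatedSCL}.
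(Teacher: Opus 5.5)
Your proposal is correct and follows essentially the same route as the paper. Both arguments split on the status closure and import the stuck-state characterization of regular SCL(FOL) runs (Theorem~10 of the SCL(FOL) paper). Both then exclude the rules one by one, using the association conditions, \Cref{lemma:invariants}, and the free-theory facts that rule out Explain and \theory-Conflict. The differences are cosmetic: you treat the intermediate conflict-state case and the $\bot$ case more explicitly, and you exclude \theory-Propagate by the free-theory entailment argument, whereas the paper uses the absence of undefined atoms in $\Theta$.
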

\newcommand{\lemmaSCLStuckSimulationProof}{
\begin{proof}
	The rule Explain is not applicable in the free theory because no undefined literals can be theory-propagated from a consistent trail in the free theory.
	If no rules except Grow are applicable to $(\overline\Gamma;\overline N;\overline U;\overline\beta;\overline k;\overline D\cdot\overline\tau)$, then either $\overline D\cdot\overline\tau=\bot$ and $\overline N$ is unsatisfiable, or $\overline D\cdot\overline\tau=\top$, $\gnd(N)^{\prec_B\overline\beta}$ is satisfiable and $\overline\Gamma\models\gnd(N)^{\prec_B\overline\beta}$ \cite[Theorem 10]{DBLP:journals/corr/abs-2302-05954}.

	If $\overline D\cdot\overline\tau=\bot$, then $D\cdot\tau=\bot$ as well because $T$ is associated with $S$.
	Clearly, no rule except Explain can be applied to $T$ in this case, and as argued earlier, Explain is not applicable in the free theory.

	Otherwise, $D\cdot\tau=\overline D\cdot\overline\tau=\top$ and $\overline\Gamma\models\gnd(N)^{\prec_B\overline\beta}$.
	We claim that there is no atom $A\in\Theta$ such that $A$ is propositionally undefined on the trail.
	If there was such an atom $A$, then $A\prec_B\overline\beta$ and there would be a clause $C\in N=\overline N\cup\overline U$ such that $A$ is a ground instance of a literal in $C$.
	It follows that Decide would be applicable to $S$, contradiction.
	For each rule that we claim that it is not applicable to $T$, we argue why.
	For Resolve, Factorize, BacktrackClassic, BacktrackFOL and BacktrackCB, it is obvious that they are not applicable to $T$ because $D\cdot\tau=\top$.
	For Decide, Propagate and \theory-Propagate, this follows from the fact that there is no propositionally undefined atom in $\Theta$.
	If Conflict was applicable to $T$, that is, if there was a $C\cdot\sigma\in G$ such that $C\sigma$ is propositionally false in $\Gamma$, then by \Cref{property:Ginstances} of \Cref{lemma:invariants}, $C\in\overline N\cup\overline U$, so Conflict would be applicable to $S$, contradiction.
	If \theory-Atom was applicable with an atom $A\prec_B\overline\beta$ that is a ground instance of an atom in $\atom(N)$, then Decide would be applicable to $S$ because $A$ is undefined in $\Gamma$ and $\overline\Gamma$ by \Cref{property:thetaContains} of \Cref{lemma:invariants}, contradiction.
	Since \theory{} is the free theory and $\Gamma$ is consistent, no theory-valid clause can be propositionally false in $\Gamma$, so \theory-Conflict is not applicable to $T$.
\end{proof}
}
\lemmaSCLStuckSimulationProof

\section{A Strategy to Apply BacktrackCB}
In the conclusion of their paper, Möhle and Biere \cite{DBLP:conf/sat/MohleB19} suspect that chronological backtracking can also be applied in SMT.
Following this, we incorporate a rule BacktrackCB that allows to backtrack in a way that is similar to chronological backtracking.
However, in the context of SMT, a few challenges arise that do not occur in pure SAT solving.
Most importantly, with theory propagations, the actual decision level, called asserting level by Möhle and Biere, of propagated literals is not always known.
This affects not only the theory propagated literals, but also the literals propagated from these literals.
As explaining theory propagations can be expensive, we aim for a conflict analysis procedure that needs to explain as little as possible.
This results in three key changes in comparison to the procedure described by Möhle and Biere.
First, we do not define our procedure based on decision levels, but based on reachable literals from the rightmost implying decision $L'^{j+1}$ instead.
This means that no literals to the left of this decision $L'^{j+1}$ need to be explained.
Second, instead of keeping all literals implied by decisions to the left of $L'^{j+1}$, we only keep, in addition to all literals to the left of $L'^{j+1}$, those literals to the right of $L'^{j+1}$ that are in $\reach^{-1}$ of the learned clause, but not implied by $L'$.
This is because these literals are necessary to propagate from the learned clause, and for any other literal to the right of $L'^{j+1}$, we would first have to find its actual decision level, possibly causing unnecessary explaining.
Third, whereas their procedure allows backtracking to any decision level between the one at which the learned clause propagates and $j$, we only allow backtracking to decision level $j$, resulting in a much simpler definition.
Otherwise, all theory propagations between $L'^{j+1}$ and the point to which we backtrack would have to be explained.

In the following, we describe how the rules Explain, Resolve and BacktrackCB can be used to perform conflict analysis on a ground conflict clause $D'$ with chronological backtracking.
Let $D'=D''\lor L''$ such that $\comp(L'')$ occurs rightmost on the trail among the complements of literals in $D'$.
If $\comp(L'')$ is a decision, then by \Cref{lemma:conflictOnPropagation}, the conflict was reached by \theory-Conflict and BacktrackCB can be applied immediately.
Otherwise, Resolve should be applied to $\comp(L'')$, which possibly requires explaining it first.
This ensures learning a non-redundant clause, and is required in a reasonable strategy [\Cref{definition:reasonable}].
We call the resulting clause $D$.
Then, using \Cref{algorithm:prepareCB} on $D$, we identify the rightmost implying decision $L'^{j+1}$ and mark all literals in $\reach(L')\cap\bigcup_{L\in D}\reach^{-1}(\comp(L))$.
Now, we can work backwards along the trail until we reach $L'^{j+1}$ to compute the learned clause.
When we encounter a literal whose complement occurs in the current conflict clause, then if it is not marked, we push it to a stack.
We also push all predecessors of literals on the stack to the stack, which can be managed by a second mark.
If the literal is the only marked trail literal whose complement occurs in the current conflict clause, then we skip it, as its complement is the literal that will be propagated from the learned clause.
If the literal is one of several marked trail literals whose complements occur in the current conflict clause, then we apply Resolve on it exhaustively, or we apply Factorize exhaustively and then Resolve.
Note that it is not necessary anymore to use Explain because all literals that are resolved now have been explained in \Cref{algorithm:prepareCB}.
In all other cases, i.e., if the complement of the literal does not occur in the current conflict clause and is also not the predecessor of any literal on the stack, we can safely skip this literal.
In the state that is reached by this procedure, BacktrackCB is applicable.

\begin{breakablealgorithm}
	\caption{Preparation for conflict analysis with BacktrackCB}\label{algorithm:prepareCB}
	\begin{algorithmic}[1]
		\Function{PrepareConflictAnalysisCB}{conflict clause $D$}
			\State $\textit{children}\gets$ data structure to store for each literal a list of literals
			\State $\textit{pq}\gets$ priority queue ordered by trail position (largest first)
			\For{$L\in D$}
				\State $\textit{pq}.\text{insert}(\comp(L))$
			\EndFor
			\While{$!\textit{pq}.\text{empty}()$ and $\textit{pq}.\text{front}()$ is not a decision}
				\State $L\gets\textit{pq}.\text{pop}()$
				\If{$L$ is a theory propagation}
					\State $\text{explain}(L)$
				\EndIf
				\For{$K\in\text{parents}(L)$}
					\State $\textit{children}[K].\text{insert}(L)$
					\If{$K$ not in $\textit{pq}$}
						\State $\textit{pq}.\text{insert}(K)$
					\EndIf
				\EndFor
			\EndWhile
			\If{$\textit{pq}.\text{empty}()$}
				\State \Return UNSAT
			\EndIf
			\State DFS from $\textit{pq}.\text{front}()$ using \textit{children}, mark all visited literals
		\EndFunction
	\end{algorithmic}
\end{breakablealgorithm}

\end{document}